\documentclass[lettersize,onecolumn]{IEEEtran}
\usepackage{amsmath,amsfonts}
\usepackage{amssymb}
\usepackage{algorithmic}
\usepackage{algorithm}
\usepackage{array}
\usepackage[caption=false,font=normalsize,labelfont=sf,textfont=sf]{subfig}
\usepackage{textcomp}
\usepackage{stfloats}
\usepackage{url}
\usepackage{verbatim}
\usepackage{graphicx}
\usepackage{booktabs}       
\usepackage{xcolor}         
\usepackage{amsthm}
\usepackage{mathtools}
\usepackage{dsfont}
\usepackage{booktabs} 
\usepackage{bbm}
\usepackage{hyperref}
\usepackage[numbers]{natbib}
\usepackage{longtable}
\usepackage{mathrsfs}

\theoremstyle{plain}
\newtheorem{theorem}{Theorem}[section]
\newtheorem{proposition}[theorem]{Proposition}
\newtheorem{lemma}[theorem]{Lemma}
\newtheorem{corollary}[theorem]{Corollary}
\theoremstyle{definition}
\newtheorem{definition}[theorem]{Definition}
\newtheorem{assumption}[theorem]{Assumption}
\theoremstyle{remark}
\newtheorem{remark}[theorem]{Remark}
\newtheorem{example}[theorem]{Example}

\begin{document}

\title{Geometric Capacity of Transformers: A Tropical Geometry Perspective}

\author{Ye~Su, Yong~Liu\textsuperscript{*}
\thanks{Ye Su is with the Shenzhen Institutes of Advanced Technology, Chinese Academy of Sciences, Shenzhen 518055, China, and also with the University of Chinese Academy of Sciences, Beijing 101407, China.}
\thanks{Yong Liu is with the Gaoling School of Artificial Intelligence, Renmin University of China, Beijing 100872, China.}
\thanks{*Corresponding authors: Yong~Liu (e-mail: liuyonggsai@ruc.edu.cn)}
}

\markboth{Under Review}%
{Shell \MakeLowercase{\textit{et al.}}: A Sample Article Using IEEEtran.cls for IEEE Journals}

\IEEEpubid{xxxx--xxxx~\copyright~xxxx xxx}

\maketitle

\begin{abstract}
To quantify the geometric capacity of transformers, we develop a tropical-geometric framework for analyzing the spatial partitions induced by conditioned self-attention. In the zero-temperature limit, we show that fixed-key top-$1$ routing is exactly represented by a power diagram in query space, while an auxiliary log-lifted value parameterization yields a vector-valued tropical rational representation. For Multi-Head Self-Attention (MHSA) with sequence length $N$ and $H$ attention heads, the joint routing geometry is encoded by Minkowski sums of headwise Newton polytopes, giving an $\mathcal{O}(N^H)$ universal bound that sharpens to $\mathcal{O}((HN)^{d_{\mathrm{model}}-1})$ once the number of heads reaches the intrinsic dimension $d_{\mathrm{model}}$. Extending this analysis across depth $L$, we derive the first tight asymptotic bounds on the number of linear regions in transformers ($\Theta\!\left(N^{\min\{H,d_{\mathrm{model}}-1\}L}\right)$). We further show that finite-temperature softmax preserves the top-$1$ routing structure and admits exponentially decaying local approximation and differential bounds away from routing boundaries.
\end{abstract}

\begin{IEEEkeywords}
Transformers, Tropical Geometry, Multi-Head Self-Attention, Power Voronoi Diagram, Geometric Capacity.
\end{IEEEkeywords}

\section{Introduction}
\IEEEPARstart{T}{he} transformer architecture \citep{vaswani2017attention} has become a central model class in natural language processing, computer vision, and scientific computing \citep{kheddar2025transformers,kashefi2026explainability}. Existing theoretical results establish strong approximation and computational properties, including universal approximation \citep{yun2020are} and Turing completeness under suitable assumptions \citep{perez2021attention,wei2022statistically}. These results answer whether transformers can represent or simulate broad classes of functions and computations. They do not, however, quantify the geometric complexity of the spatial partitions induced by self-attention.

For Continuous Piecewise Linear (CPWL) networks, the number of maximal affine regions is a classical measure of geometric capacity \citep{montufar2014number,serra2018bounding,hanin2019complexity,xiong2024number}. Extending this analysis to transformers is \textbf{nontrivial} because finite-temperature softmax is smooth rather than piecewise affine, so standard linear-region and hyperplane-arrangement methods do not apply directly. This leaves a fundamental gap in understanding \textbf{\textit{how self-attention partitions the input space topologically}}.

Tropical geometry provides a natural language for this max-affine structure \citep{zhang2018tropical,maragos2021tropical}. Through Maslov dequantization, the zero-temperature limit $\tau \to 0$ converts log-sum-exp expressions into tropical maxima. Recent studies have bridged tropical algebra with attention mechanisms, including analyses of algorithmic simulation and limiting attention representations \citep{hashemi2025tropical,alpay2026geometry}. However, tight architecture-dependent bounds on the polyhedral complexity of conditioned multi-head transformers remain unavailable. This leads to the central question of this work: \textbf{\textit{How do sequence length $N$, number of attention heads $H$, model dimension $d_{\mathrm{model}}$, and depth $L$ control the number of full-dimensional affine regions in the hard-routing skeleton of a conditioned transformer?}}

Answering this question is \textbf{non-trivial}. Quantifying the precise number of linear regions in transformers requires tracking exact geometric boundaries through successive layers. This theoretical endeavor faces four mathematical bottlenecks:
\begin{itemize}
    \item \textbf{Routing and Value Geometry:} With fixed values, zero-temperature attention becomes a piecewise-constant selector on unique-winner cells. \textbf{\textit{How can the routing geometry be separated from a value-dependent tropical representation?}}
    \item \textbf{The Multi-Head Aggregation Puzzle:} In Multi-Head Self-Attention (MHSA), parallel heads are concatenated and linearly projected. Standard hyperplane arrangement theorems, which rely on simple intersections, fail to capture this joint spatial partitioning. Formalizing \textbf{\textit{how parallel heads geometrically interleave}} remains an algebraic challenge.
    \item \textbf{The Ghost-Region Overcount:} Pairwise score-equality arrangements distinguish complete score orderings and may subdivide a top-$1$ cell without changing its winner. \textbf{\textit{How can actual routing cells be distinguished from these additional ordering cells?}}
     \item \textbf{The Finite-Temperature Reality Gap:} Hard routing defines a polyhedral skeleton, whereas finite-temperature softmax is smooth. Which routing properties remain invariant for $\tau>0$, and \textbf{\textit{how accurately does the smooth attention map approximate the hard-routing limit away from routing boundaries?}}
\end{itemize}

In this paper, we break through these bottlenecks by developing a mathematical framework based on tropical geometry to quantify the geometric capacity of transformers. By making the polyhedral structure of self-attention explicit, the framework provides a principled way to understand \textbf{\textit{how architectural choices shape the spatial partitioning complexity of transformer mappings}}. Our main contributions are as follows:

\begin{itemize}
    \item \textbf{Routing Geometry and Log-Lifted Outputs:} To separate routing from value-dependent output geometry, we prove that fixed-key top-$1$ routing cells form a power diagram determined solely by the query--key scores. We further introduce an auxiliary \textit{log-lifted} value parameterization that yields a nontrivial vector-valued tropical rational limit in the scaled log domain (\textcolor{red}{Section~\ref{sec:voronoi_routing}}).
    
    \item \textbf{Geometric Rationale for Multi-Head Attention:} To solve the aggregation puzzle, we map the MHSA to the \textit{Minkowski sum of tropical Newton polytopes}. We prove that while a single attention head is bottlenecked at $\mathcal{O}(N)$ vertices. MHSA however overcomes this limit, its complexity satisfies the universal bound $\mathcal O(N^H)$ and, when the number of heads exceeds the intrinsic dimension $d_{\mathrm{model}}$, the sharper bound $\mathcal O((HN)^{d_{\mathrm{model}}-1})$ (\textcolor{red}{Section~\ref{subsec:multi_head_geometry}}).

    \item \textbf{First Tight Topological Bounds on the Tropical Skeleton:} To avoid the ghost region overcount, we use the Voronoi equivalence to quotient out them. For the first time, we prove that the tight maximum number of regions scales as $\Theta\!\left(N^{\min\{H,d_{\mathrm{model}}-1\}L}\right)$. Importantly, by establishing matching constructive lower bounds via parabolic lifting, we guarantee that deep transformers avoid geometric collapse (\textcolor{red}{Section~\ref{subsec:linear_regions}}).
    
    \item \textbf{Geometric Stability at Finite Temperatures:} To bridge the theoretical reality gap, we prove that finite-temperature softmax preserves the top-$1$ winners and pairwise tie sets of hard routing. Away from routing boundaries, we further establish exponentially decaying bounds on the potential error, gradient error, Hessian norm, attention-output error, and output Jacobian (\textcolor{red}{Section~\ref{sec:stability}}).
\end{itemize}

\section{Related Work}

\subsection{Transformer Computational Power, Approximation, and Geometric Complexity}

Theoretical studies of transformers address several distinct notions of model capability. Formal-language and computational analyses characterize transformers as recognizers, generators, or computational models over discrete sequences, with results depending on assumptions concerning attention mechanisms, numerical precision, positional encodings, depth, and sequence length \citep{strobl2024formal,hahn2020theoretical,merrill2022saturated}. Within this line, Turing-completeness results establish computational universality under specific architectural and precision assumptions \citep{perez2021attention,wei2022statistically}, while subsequent studies examine the effects of chain-of-thought computation, padding, hard attention, and logical extensions to graph transformers \citep{merrill2023expressive,merrill2025exact,bergstrasser2024power,ahvonen2026expressive}.

A separate approximation-theoretic line treats transformers as parameterized mappings between continuous spaces. Building on classical neural approximation theory \citep{barron2002universal}, \cite{yun2020are} established universal approximation results for continuous sequence-to-sequence functions on compact domains, while later work investigated approximation properties for specific continuous tasks such as regression \citep{nath2024transformers}. These results concern the approximation of prescribed function classes and are conceptually distinct from formal-language recognition and computational universality.

Another line of research examines how individual architectural components affect the represented function class. \cite{edelman2022inductive} analyzed the sample complexity and sparse-variable creation properties of self-attention. \cite{wang2024understanding} studied the roles of dot-product attention, positional encoding, and feed-forward layers, while \cite{li2024theoretical} derived theoretical constraints for RoPE-based tensor attention. \cite{gu2026expressive} further connected transformer mappings with Maxout networks and Continuous Piecewise Linear (CPWL) functions.

Despite this progress, the continuous space-partitioning complexity of transformers remains insufficiently characterized. Formal-language and computational analyses do not address the geometry of finite-dimensional query spaces, while universal-approximation results establish representability without quantifying the combinatorial complexity of the induced partitions. Architectural studies clarify the roles of individual components but generally do not provide tight bounds on the number of full-dimensional affine regions jointly in sequence length, number of heads, model dimension, and depth. Although CPWL and Maxout formulations naturally support geometric analysis, corresponding polyhedral bounds for transformer architectures \textbf{remain underdeveloped}. Linear-region counting provides an established framework for quantifying such complexity in CPWL networks \citep{montufar2014number,serra2018bounding,hanin2019complexity,hu2020analysis,xiong2020number,xiong2024number,su2026sparsity}.

\subsection{Tropical Geometry in Deep Learning}
Tropical geometry, built on the max-plus or min-plus semiring, provides a natural framework for analyzing piecewise-linear neural networks. \cite{zhang2018tropical} provided the foundational insight that FFNs with ReLU activations are mathematically equivalent to tropical rational maps. This perspective has been extensively utilized to formalize the strict decision boundaries of deep models \citep{alfarra2022decision} and to compute the maximum number of linear regions in Multilayer Perceptrons and Convolutional Neural Networks \citep{charisopoulos2017morphological,charisopoulos2018tropical}. Beyond standard feed-forward architectures, the tropical framework has been successfully extended to analyze the expressivity and functional capabilities of Graph Neural Networks \citep{pham2024graph} and to design structured neural network compression algorithms \citep{fotopoulos2024tropnnc}. Recently, the intersection of tropical geometry and transformer architectures has garnered significant attention. \cite{hashemi2025tropical} introduced \textit{Tropical Attention} to preserve the polyhedral decision structures inherent in combinatorial dynamic programming tasks. Concurrently, studies such as the work by \cite{alpay2026geometry} analyzed standard transformers in the infinite-confidence regime (via Maslov dequantization, taking the inverse temperature $\beta \to \infty$), demonstrating that the forward pass effectively executes a Bellman-Ford shortest-path update on a latent token graph. Additionally, \cite{su2026sparsity} applied tropical geometry to quantify the combinatorial depth of Mixture-of-Experts routing mechanisms. However, while traditional applications of tropical geometry have successfully mapped out the linear regions of MLPs, the non-linear denominator of the softmax function historically complicated such analyses for standard transformers. Although recent breakthroughs \citep{hashemi2025tropical, alpay2026geometry} successfully apply tropical algebras to attention mechanisms, they predominantly focus on algorithmic simulation capabilities (e.g., executing shortest-path algorithms) or empirical out-of-distribution generalization. The strict geometric capacity of transformers in the tropical regime, specifically, \textbf{\textit{how the sequence length and multi-head mechanisms topologically partition the input space and influence the combinatorial complexity of the corresponding Newton polytopes}}, remains unquantified in the current literature.

\section{Preliminaries and Tropical Formulation}
\label{sec:preliminaries}

In this section, we establish a tropical-geometric framework for quantifying transformer geometric capacity through maximal affine-region counts. Using the tropical semiring and Maslov dequantization, we represent self-attention and related components as tropical rational maps; notation and algebraic correspondences are summarized in \textcolor{red}{Appendix~\ref{app:notations}}.

\subsection{Geometric Capacity and Linear Regions}
\label{sec:expressivity_definition}

Before introducing the algebraic framework, we define \textbf{\textit{geometric capacity}} as the maximal number of full-dimensional affine regions realizable by the analyzed model class. This notion is distinct from formal-language expressivity and approximation theory. Although linear-region counts are sometimes discussed under neural-network expressivity \citep{montufar2014number,serra2018bounding,hanin2019complexity,xiong2024number}, we use the term geometric capacity to avoid conflating these distinct notions.

\begin{definition}[Maximal Linear Region \citep{montufar2014number, serra2018bounding}]
Let $\Phi: \mathbb{R}^d \to \mathbb{R}^{d_{\text{out}}}$ be a CPWL mapping representing a neural network, where $d$ and $d_{\text{out}}$ denote the input and output dimensions, respectively. A \textit{maximal linear region} $\omega \subset \mathbb{R}^d$ is defined as a non-empty, connected open subset such that the restriction $\Phi|_{\omega}$ is an affine function (i.e., $\Phi(x) = Ax + b$ for some $A \in \mathbb{R}^{d_{\text{out}} \times d}$ and $b \in \mathbb{R}^{d_{\text{out}}}$), and for any strictly larger \textbf{connected} open set $\omega' \supsetneq \omega$, the restriction $\Phi|_{\omega'}$ is not affine.
\end{definition}

\textbf{Why Linear Regions Measure Geometric Capacity:}
The number of maximal affine regions is a fundamental structural measure of the geometric capacity of CPWL networks \citep{montufar2014number}. Within each region, the network implements an affine map, while global nonlinearity and complex decision boundaries arise from transitions across polyhedral boundaries. A larger number of regions therefore yields a finer partition of the input space and a richer piecewise-affine representation capable of capturing intricate local variations and high-frequency structure \citep{serra2018bounding,hanin2019complexity}.

\begin{definition}[Zaslavsky's Theorem]
\label{def:zaslavsky}
Let $R(n, d)$ denote the number of regions induced by an arrangement of $n$ hyperplanes in general position in a space of dimension $d$ \citep{zaslavsky1975facing}: $R(n, d) \coloneqq \sum_{j=0}^{d} \binom{n}{j}$. For $n \gg d$, this sum is dominated by the highest-order term: $R(n, d) \approx \frac{1}{d!} n^d$.
\end{definition}

\begin{example}[1-layer and 2-layer ReLU MLP of linear regions]
Consider a 2-layer ReLU MLP mapping $\Phi: \mathbb{R}^d \to \mathbb{R}^{d_{out}}$ with ambient dimension $d=2$ and $n=3$ neurons per layer. According to \textcolor{red}{Definition \ref{def:zaslavsky}}, the first hidden layer induces exactly $\mathcal{N}_1 = \sum_{j=0}^{d} \binom{n}{j} = \binom{3}{0} + \binom{3}{1} + \binom{3}{2} = 7$ maximal linear regions. In \textcolor{red}{Figure~\ref{fig:mlp_expressivity}}, deep composition facilitates recursive shattering: the second layer's hyperplanes intersect the folded image of $\mathbb{R}^d$, subdividing each pre-existing region. This multiplicative interaction yields a theoretical maximum of $\mathcal{N}_2 = \mathcal{N}_1 \cdot \sum_{j=0}^{d} \binom{n}{j} = 49$ regions. This growth provides a classical measure of the geometric capacity of a CPWL architecture, which this work extends to the non-piecewise-linear self-attention mechanism.
\end{example}

\begin{figure}[htbp]
    \centering
    \includegraphics[width=0.75\linewidth]{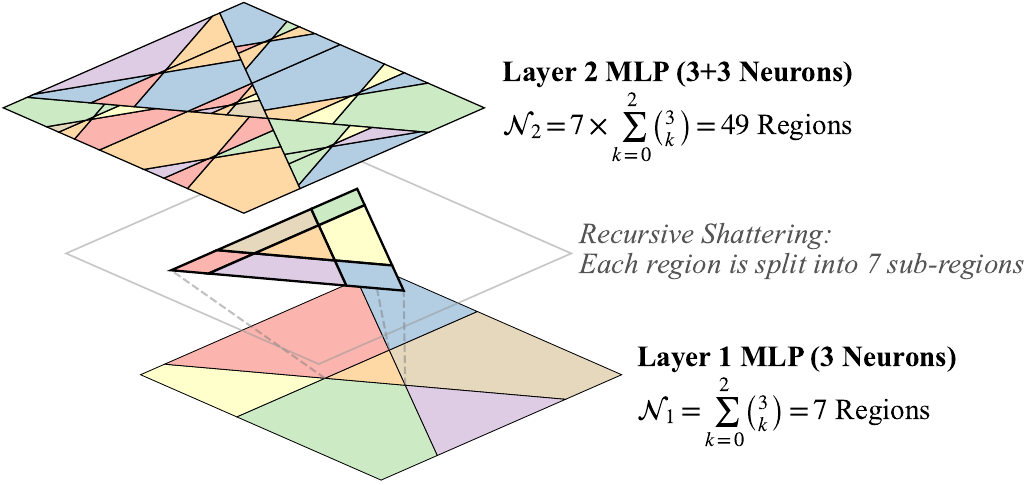}
    \caption{\textbf{Recursive spatial partitioning in a 2-layer MLP ($d=2$)}. \textbf{(Bottom)} Layer 1 ($n=3$ neurons) induces $\mathcal{N}_1 = 7$ linear regions via \textcolor{red}{Definition \ref{def:zaslavsky}}. \textbf{(Middle)} Structural space-folding: the ReLU activation enables subsequent hyperplanes to intersect the pre-activated regions. \textbf{(Top)} Layer 2 ($n=3$) achieves a multiplicative expansion, yielding a theoretical maximum of $\mathcal{N}_2 = 49$ regions.}
    \label{fig:mlp_expressivity}
\end{figure}

\textbf{The Obstruction in Transformers:} While the exact counting of linear regions has matured for classical Feed-Forward Networks (FFNs) \citep{montufar2014number,serra2018bounding,hanin2019complexity}, extending this topological metric to the transformer architecture inherently faces a mathematical challenge. The self-attention mechanism, characterized by the softmax function, is smooth and non-piecewise-linear. Therefore, it does \textbf{not trivially} induce discrete linear regions, rendering classical hyperplane arrangement theories inapplicable. 

To bridge this gap and quantify the continuous spatial partitioning capabilities of attention mechanisms, we must adopt a geometric framework capable of translating smooth log-sum-exp operations into piecewise-linear polytopes. This necessitates the introduction of \textit{Tropical Geometry} and \textit{Maslov Dequantization}.

\subsection{Tropical Algebra and Geometry}
We operate over the \textit{max-plus semiring} (also known as the tropical semiring), denoted by $\mathbb{T} = (\mathbb{R} \cup \{-\infty\}, \oplus, \otimes)$ \citep{mikhalkin2009tropical,maclagan2015introduction}.

\begin{definition}[Tropical Semiring Operations]
For any scalars $a, b \in \mathbb{T}$, the tropical addition and multiplication are defined as \citep{maragos2021tropical}: $a \oplus b := \max(a, b), \quad a \otimes b := a + b$. The identity elements for $\oplus$ and $\otimes$ are $-\infty$ and $0$, respectively.
\end{definition}

These operations extend to matrices and vectors in the standard manner. If $A, B \in \mathbb{T}^{n \times n}$, their product is $(A \otimes B)_{ij} = \bigoplus_{k=1}^n (A_{ik} \otimes B_{kj})$, which exactly corresponds to standard matrix multiplication evaluated in the $(\max, +)$ algebra \citep{viro2011basic}.

\begin{definition}[Tropical Polynomial and Newton Polytope]
A tropical polynomial $P: \mathbb{R}^d \to \mathbb{R}$ is a function of the form \citep{speyer2005tropical}: $P(x) = \bigoplus_{j=1}^m \left( c_j \otimes x^{\otimes \alpha_j} \right) = \max_{j=1,\dots,m} \left( c_j + \langle \alpha_j, x \rangle \right)$, where $x \in \mathbb{R}^d$ is the input vector, $c_j \in \mathbb{R}$ are scalar coefficients, and $\alpha_j \in \mathbb{Z}^d$ are exponent vectors. The geometric structure of $P$ is completely characterized by its \textbf{Newton Polytope}, denoted as $\text{Newt}(P)$, which is defined as the convex hull of the exponent vectors $\{\alpha_1, \dots, \alpha_m\} \subset \mathbb{R}^d$ \citep{zhang2018tropical,su2026sparsity}.
\end{definition}
Geometrically, a tropical polynomial defines a convex CPWL function whose graph forms a convex polyhedron. Projecting this graph onto the input space $\mathbb{R}^d$ yields a regular subdivision into maximal linear regions, fully determined by $\text{Newt}(P)$ and the coefficients $c_j$. Since tropical division corresponds to standard subtraction ($\oslash$), this extends to tropical rational functions, as defined in \textcolor{red}{Definition~\ref{def:trop_ration_map}}.

\begin{definition}[Tropical Rational Map]
\label{def:trop_ration_map}
A tropical rational function $F: \mathbb{R}^d \to \mathbb{R}$ is defined as the tropical quotient (i.e., standard subtraction) of two tropical polynomials, expressed as $F(x) = P(x) \oslash Q(x) = P(x) - Q(x)$ \citep{mikhalkin2009tropical,zhang2018tropical}. Extending this, a \textbf{vector-valued tropical rational map} $\mathbf{F} : \mathbb{R}^d \to \mathbb{R}^{d_{out}}$ is a mapping where each coordinate function $F_c(x)$ takes the form $P_c(x) \oslash Q_c(x)$. We define it as \textbf{structured} if all output coordinates strictly share a common denominator $Q_c(x) = Q(x)$.
\end{definition}
Geometrically, while tropical polynomials strictly represent convex CPWL functions, tropical rational maps represent Difference-of-Convex (DC) CPWL functions, which are capable of forming complex, non-convex spatial decision boundaries \citep{su2026sparsity}.

\begin{definition}[Normal Fan and Tropical Hypersurface]
For a tropical polynomial $P(x) = \max_{j \in \{1, \dots, m\}} (\langle \alpha_j, x \rangle + c_j)$, the \textbf{tropical hypersurface} $\mathcal{H}(P)$ is the set of points where the maximum is attained by at least two distinct terms \citep{maclagan2015introduction}:
\begin{equation*}
    \mathcal{H}(P) = \{ x \in \mathbb{R}^d : \exists i \neq j, \, \langle \alpha_i, x \rangle + c_i = \langle \alpha_j, x \rangle + c_j = P(x) \}.
\end{equation*}
The \textbf{normal fan} $\Sigma(P)$ is the collection of polyhedral cones $\{\sigma_j\}_{j=1}^m$ that partition $\mathbb{R}^d$, where each $d$-dimensional cone $\sigma_j$ is defined by the optimality of the $j$-th term \citep{ziegler2012lectures}:
\begin{equation*}
    \sigma_j = \{ x \in \mathbb{R}^d : \langle \alpha_j, x \rangle + c_j \ge \langle \alpha_i, x \rangle + c_i, \, \forall i \in \{1, \dots, m\} \}.
\end{equation*}
The union of all lower-dimensional faces of these cones constitutes $\mathcal{H}(P)$. For a structured rational map $\mathbf{F}$, the global decision boundary $\mathcal{H}$ is defined as the union of hypersurfaces across all coordinates: $\mathcal{H} = \bigcup_c \mathcal{H}(P_c) \cup \mathcal{H}(Q)$.
\end{definition}

\begin{definition}[Common Refinement \citep{fulton1993introduction,ziegler2012lectures}]
Given two polyhedral fans $\Sigma_1$ and $\Sigma_2$ in $\mathbb{R}^d$, their \textbf{common refinement}, denoted as $\Sigma_1 \wedge \Sigma_2$, is the fan obtained by taking all possible intersections of cones from each: $\Sigma_1 \wedge \Sigma_2 = \{ \sigma_1 \cap \sigma_2 \mid \sigma_1 \in \Sigma_1, \sigma_2 \in \Sigma_2 \}$. This operation corresponds to the superimposition of the spatial partitions induced by both fans.
\end{definition}

\subsection{Maslov Dequantization}
To bridge standard neural network operations with tropical geometry, we utilize \textit{Maslov dequantization} \citep{litvinov2005maslov}. This process treats standard arithmetic as a continuous deformation of tropical arithmetic, controlled by a temperature parameter $\tau > 0$.

\begin{definition}[Deformed Semiring Operations]
For a given temperature $\tau > 0$, we define the smooth operations: $a \oplus_\tau b := \tau \log(e^{a/\tau} + e^{b/\tau}), \quad a \otimes_\tau b := a + b$.
\end{definition}
The operator $\oplus_\tau$ is the standard \texttt{LogSumExp} (LSE) function scaled by $\tau$. It is a foundational property that $\lim_{\tau \to 0^+} (a \oplus_\tau b) = \max(a, b) = a \oplus b$. Consequently, standard matrix multiplication involving the LSE operation asymptotically converges to tropical matrix multiplication as $\tau \to 0$. This strict asymptotic boundary is referred to as the \textit{tropical limit} or \textit{zero-temperature limit}.

\subsection{The Tropical Formulation of Self-Attention}
Let $X \in \mathbb{R}^{N \times d_{\mathrm{model}}}$ denote the sequence of $N$ input tokens, where each row $x_i$ is a $d_{\mathrm{model}}$-dimensional embedding. For a single attention head, we define the query, key, and value matrices as $Q = X W_Q$, $K = X W_K$, and $V = X W_V$, where $W_Q, W_K \in \mathbb{R}^{d_{\mathrm{model}} \times d_k}$ and $W_V \in \mathbb{R}^{d_{\mathrm{model}} \times d_{v}}$ are learnable projections. Specifically, these matrices are composed of row vectors $Q = [q_1, \dots, q_N]^\top$, $K = [k_1, \dots, k_N]^\top$, and $V = [v_1, \dots, v_N]^\top$, where $q_i, k_i \in \mathbb{R}^{d_k}$ and $v_i \in \mathbb{R}^{d_v}$ represent the embeddings for the $i$-th token. Furthermore, we denote the $j$-th element of the $i$-th value vector $v_i$ as $v_{i,j}$, for $i \in \{1, \dots, N\}$ and $j \in \{1, \dots, d_v\}$.

Identifying the temperature parameter $\tau$ with the inverse scaling factor (typically $1/\sqrt{d_k}$), the $i$-th row of the standard dot-product attention output, $Z \in \mathbb{R}^{N \times d_v}$, is computed as \citep{hashemi2025tropical}:
\begin{equation*}
    z_i = \frac{\sum_{j=1}^N e^{\langle q_i, k_j \rangle / \tau} v_j}{\sum_{l=1}^N e^{\langle q_i, k_l \rangle / \tau}}.
\end{equation*}
For the purpose of spatial partitioning analysis, we represent the attention mechanism as a continuous functional mapping $Z: \mathbb{R}^{d_k} \to \mathbb{R}^{d_v}$. For any arbitrary query vector $q \in \mathbb{R}^{d_k}$, the output is given by $Z(q) = \sum_{j} \text{softmax}(\langle q, k_j \rangle / \tau) v_j$, such that the $i$-th row of the output matrix is simply $z_i = Z(q_i)$.

To analyze this under the tropical framework, we evaluate an auxiliary positive attention family in the log domain using the generalized max-plus convention, which allows real slope vectors and thus accommodates learned keys \citep{zhang2018tropical}. Since ordinary division becomes tropical subtraction $\oslash$, the log-domain output is expressed as the difference of two tropical max-affine functions.

\begin{definition}[Log-Lifted Tropical Representation of an Auxiliary Attention Family]
Fix the keys $k_1,\dots,k_N$ and coefficients $\tilde{v}_{j,c}\in\mathbb{R}$ for all $j\in\{1,\dots,N\}$ and $c\in\{1,\dots,d_v\}$, independently of $\tau$. For each output coordinate $c$ and query $q\in\mathbb{R}^{d_k}$, define the \textbf{log-lifted} positive value family $v_{j,c}^{(\tau)}=e^{\tilde{v}_{j,c}/\tau}$ and the corresponding auxiliary attention coordinate
\begin{equation*}
    \widetilde{z}_c^{(\tau)}(q):=\frac{\sum_{j=1}^N e^{\langle q,k_j\rangle/\tau}v_{j,c}^{(\tau)}}{\sum_{l=1}^N e^{\langle q,k_l\rangle/\tau}}.
\end{equation*}
As $\tau\to0$, $\tau\log\widetilde{z}_c^{(\tau)}(q)\longrightarrow P_{\text{num},c}(q)\oslash P_{\text{denom}}(q)$, where the numerator and denominator converge to distinct tropical polynomials: $P_{\text{num},c}(q):=\bigoplus_{j=1}^N\left(\langle q,k_j\rangle\otimes\tilde{v}_{j,c}\right)$ and $P_{\text{denom}}(q):=\bigoplus_{l=1}^N\langle q,k_l\rangle$. For token $i$, the corresponding coordinate is obtained by evaluating $\widetilde{z}_c^{(\tau)}$ at $q=q_i$.
\end{definition}

\begin{remark}[Log-Lifted Values and the Fixed-Value Limit]
\label{rem:theoretical_artifice}
The parameterization $v_{j,c}^{(\tau)}=e^{\tilde{v}_{j,c}/\tau}$ is a formal algebraic device assigning the coefficient the finite tropical limit $\tau\log v_{j,c}^{(\tau)}=\tilde{v}_{j,c}$. At any fixed $\tau>0$, a positive value can be written in this form by setting $\tilde{v}_{j,c}=\tau\log v_{j,c}$. In the asymptotic construction, however, $\tilde{v}_{j,c}$ is fixed as $\tau\to0$, so this defines an auxiliary $\tau$-indexed scaling regime rather than a temperature-dependent value matrix for a standard transformer.

To see why this scaling is needed, consider a temperature-independent value matrix $V$. For the coordinatewise log-domain calculation, assume $v_{j,c}>0$; signed standard values are not directly covered by this representation. Since $\log v_{j,c}$ is finite and independent of $\tau$, $\tau\log v_{j,c}\to0$. Equivalently, $e^{\langle q,k_j\rangle/\tau}v_{j,c}=e^{(\langle q,k_j\rangle+\tau\log v_{j,c})/\tau}$, and the correction $\tau\log v_{j,c}$ vanishes. Hence
\begin{equation*}
    \lim_{\tau\to0}\tau\log\left(\sum_{j=1}^N e^{\langle q,k_j\rangle/\tau}v_{j,c}\right)=\max_j\langle q,k_j\rangle.
\end{equation*}
The numerator and denominator therefore have the same tropical limit, so their tropical subtraction converges to $0$ in the scaled logarithmic domain. Thus, the finite magnitudes of a fixed value matrix disappear in this limit, although the original linear-domain attention output does not converge to zero.

To make the temperature dependence and fixed value matrix explicit, define
\begin{equation*}
    Z_V^{(\tau)}(q):=\frac{\sum_{j=1}^N e^{\langle q,k_j\rangle/\tau}v_j}{\sum_{l=1}^N e^{\langle q,k_l\rangle/\tau}}.
\end{equation*}
If $j^*(q)=\mathop{\mathrm{arg\,max}}_j\langle q,k_j\rangle$ is unique, then the softmax weights concentrate on $j^*(q)$ and $Z_V^{(\tau)}(q)\to v_{j^*(q)}$. Standard fixed-value attention therefore converges on each unique-winner region to a piecewise-constant selector. In this sense, the query--key scores determine where the selected index changes, while the value matrix determines which vector is returned on each region.

Log-lifting serves a separate purpose: it preserves the prescribed coefficient-level quantities $\tilde{v}_{j,c}$ in the tropical limit and yields a nontrivial tropical rational log-output. If the coefficients are nonconstant, the numerator may introduce channel-dependent breakpoints that refine a routing region without changing the maximizing key index or the underlying routing partition.
\end{remark}

\begin{remark}[CPWL and Polyhedral Refinement]
\label{rem:nor_fan_ref}
    Since the tropical max-affine functions $P_{\text{num},c}$ and $P_{\text{denom}}$ are polyhedral convex functions, their difference is a CPWL DC function. Let $\Sigma(P)$ denote the polyhedral complex formed by the domains of linearity of a max-affine function $P$. Geometrically, a common refinement on which every coordinate of the auxiliary structured vector-valued tropical rational map $\mathbf{F}$ is affine is given by
    \begin{equation*}
        \Sigma(\mathbf{F})=\bigwedge_{c=1}^{d_v}\left(\Sigma(P_{\text{num},c})\wedge\Sigma(P_{\text{denom}})\right).
    \end{equation*}
    The denominator complex $\Sigma(P_{\text{denom}})$ describes the zero-temperature softmax routing partition. The numerator complexes $\Sigma(P_{\text{num},c})$ may further subdivide its cells and describe additional affine breakpoints of the auxiliary lifted log-output. These numerator-induced refinements do not, in general, correspond to changes in the softmax routing index.
\end{remark}

Since $P_{\text{denom}}$ is shared across all $d_v$ output coordinates, it follows from \textcolor{red}{Definition \ref{def:trop_ration_map}} that the auxiliary positive log-lifted attention family has a structured vector-valued tropical rational limit $\mathbf{F}:\mathbb{R}^{d_k}\to\mathbb{R}^{d_v}$. By contrast, standard attention with a fixed value matrix converges on the interiors of the routing cells to a piecewise constant selector. Both constructions involve the same denominator complex $\Sigma(P_{\text{denom}})$: for standard attention, it defines the zero-temperature softmax routing partition, whereas for the auxiliary log-lifted output, it forms part of the finer affine refinement together with the numerator complexes. To extend the routing analysis to the full MHSA layer, we must formalize the aggregation of multiple independent attention heads within the tropical semiring.

\begin{definition}[Tropical Abstraction of MHSA Aggregation]
\label{def:mhsa_abstraction}
Let $x\in\mathbb{R}^{d_{\mathrm{model}}}$ denote the common pre-head query variable. For each head $h\in\{1,\dots,H\}$, let $a_j^{(h)}\in\mathbb{R}^{d_{\mathrm{model}}}$ be the effective key vector obtained by pulling the head-specific query-key score back to this common space, so that the score of key $j$ is $\langle x,a_j^{(h)}\rangle$. Define the zero-temperature routing potential
\begin{equation*}
    f_h(x):=\bigoplus_{j=1}^N\langle x,a_j^{(h)}\rangle=\max_{1\le j\le N}\langle x,a_j^{(h)}\rangle,
\end{equation*}
and let $P_h:=\operatorname{Newt}(f_h)=\operatorname{conv}\{a_1^{(h)},\dots,a_N^{(h)}\}$. The routing cell associated with index $j$ in head $h$ is
\begin{equation*}
    C_j^{(h)}:=\left\{x\in\mathbb{R}^{d_{\mathrm{model}}}:\langle x,a_j^{(h)}\rangle\geq\langle x,a_r^{(h)}\rangle,\ \forall r\in\{1,\dots,N\}\right\}.
\end{equation*}
A routing tuple $(j_1,\dots,j_H)$ is active on the possibly empty intersection $\bigcap_{h=1}^H C_{j_h}^{(h)}$. Thus, the collection of nonempty intersections forms the common refinement of the headwise routing partitions.

To encode this joint routing structure by a scalar tropical potential, define $\Psi(x):=\bigotimes_{h=1}^H f_h(x)=\sum_{h=1}^H f_h(x)$. Since tropical multiplication is ordinary addition,
\begin{equation*}
    \Psi(x)=\max_{(j_1,\dots,j_H)\in\{1,\dots,N\}^H}\left\langle x,\sum_{h=1}^H a_{j_h}^{(h)}\right\rangle.
\end{equation*}
Hence, before removing duplicate or redundant terms, the candidate slope set of $\Psi$ consists of all sums $\sum_{h=1}^H a_{j_h}^{(h)}$. Its Newton polytope is therefore the Minkowski sum of the constituent Newton polytopes \citep{gritzmann1993minkowski}:
\begin{equation*}
    \operatorname{Newt}(\Psi)=\operatorname{conv}\left\{\sum_{h=1}^H a_{j_h}^{(h)}:(j_1,\dots,j_H)\in\{1,\dots,N\}^H\right\}=P_1+\dots+P_H.
\end{equation*}
At the level of polyhedral fans, the standard identity for support functions and Minkowski sums gives
\begin{equation*}
\Sigma(\Psi)=\bigwedge_{h=1}^H\Sigma(f_h)=\bigwedge_{h=1}^H\Sigma(P_h)=\Sigma(P_1+\dots+P_H).
\end{equation*}
Thus, $\Psi$ is a scalar bookkeeping device for the joint multi-head routing partition rather than a replacement for the vector-valued MHSA output. For a fixed value matrix, the concatenated zero-temperature head output is piecewise constant on this common refinement. The subsequent linear projection $W_O$ cannot introduce additional routing boundaries, although existing cells may merge when their concatenated outputs have identical projections. The potential $\Psi$ records only the denominator-induced routing refinement; any additional numerator-induced breakpoints of an auxiliary log-lifted output must be treated separately.
\end{definition}

\subsection{Algebraic Closure of Deep Transformer Blocks}

To ensure that our topological complexity bounds (derived in \textcolor{red}{Section~\ref{sec:expressive_power}}) are mathematically well-posed, we formalize the closure properties of the remaining components in the normalization-free tropical model analyzed in this work.
\begin{itemize}
    \item \textbf{Feed-Forward Network (FFN)}, consisting of affine layers and ReLU activations, are CPWL maps. Under the tropical representation, general ReLU networks admit tropical rational representations, while tropical polynomial representations arise under additional sign constraints on the weights \citep{zhang2018tropical}. In particular, $\operatorname{ReLU}(x)=\max(0,x)=0\oplus x$ preserves piecewise-linearity.
    \item \textbf{Residual Connections} of the form $Z(X)=X+\operatorname{Layer}(X)$ preserve CPWL closure whenever $\operatorname{Layer}$ is CPWL. For tropical polynomial coordinates, ordinary addition corresponds to tropical multiplication and induces Minkowski addition of their Newton polytopes. More generally, the sum of tropical rational functions remains tropical rational by applying this operation separately to their numerator and denominator polynomials.
\end{itemize}

LayerNorm and RMSNorm are not included in the exact tropical model analyzed in this work. Both operations divide by an input-dependent square root of the feature variance or mean square and are therefore generally nonlinear and not CPWL. Consequently, the bounds in \textcolor{red}{Section~\ref{sec:expressive_power}} are stated for the normalization-free tropical transformer skeleton composed of the auxiliary tropical attention map, residual connections, and ReLU FFNs. If the normalization statistics are fixed independently of the input, the normalization becomes affine and may be absorbed into adjacent affine projections, preserving the CPWL arguments. Under this explicitly stated model restriction, each analyzed transformer block is CPWL, enabling its polyhedral regions to be composed and counted across depth.

\section{The Geometric Structure of Self-Attention}
\label{sec:voronoi_routing}

In this section, before analyzing the spatial partitioning induced by self-attention, we formalize the geometric concepts of standard and weighted tessellations from computational geometry.

\begin{definition}[Standard Voronoi Diagram \citep{gowda1983dynamic}]
\label{def:standard_voronoi}
Let $S = \{c_1, \dots, c_N\} \subset \mathbb{R}^d$ be a set of distinct generator points (sites). The \textit{Standard Voronoi Diagram} partitions $\mathbb{R}^d$ into $N$ convex polyhedral regions $\{V_1, \dots, V_N\}$. Each cell $V_j$ is defined by the standard Euclidean metric, encompassing all points $x \in \mathbb{R}^d$ that are closer to $c_j$ than to any other site $c_l$ \citep{xu2006voronoi}:
\begin{equation*}
    V_j \coloneqq \left\{ x \in \mathbb{R}^d \mid \|x - c_j\| \le \|x - c_l\|, \quad \forall l \neq j \right\}.
\end{equation*}
Geometrically, the bounding hyperplanes between adjacent cells are the exact perpendicular bisectors of the line segments joining the sites.
\end{definition}

While the standard Voronoi diagram relies on unweighted Euclidean distances, assigning individual scalar weights to the generator sites fundamentally alters the boundary geometry, yielding the Power Voronoi Diagram.

\begin{definition}[Power Voronoi Diagram]
\label{def:power_voronoi}
Let $\hat{S} = \{(c_1, w_1), \dots, (c_N, w_N)\} \subset \mathbb{R}^d \times \mathbb{R}$ be a set of generating sites with associated scalar weights. The \textit{power distance} (or Laguerre distance) from a spatial point $x \in \mathbb{R}^d$ to a weighted site is defined compactly as $d_{\text{pow}}(x, c_j) \coloneqq \|x - c_j\|^2 - w_j$. The \textit{Power Voronoi Diagram} is the cellular complex $\{P_1, \dots, P_N\}$ partitioning $\mathbb{R}^d$, where each cell $P_j$ is defined by the strict optimality of this power distance \citep{aurenhammer1987power}:
\begin{equation*}
    P_j \coloneqq \left\{ x \in \mathbb{R}^d \mid d_{\text{pow}}(x, c_j) \le d_{\text{pow}}(x, c_l), \quad \forall l \neq j \right\}.
\end{equation*}
Unlike the standard diagram, bounding hyperplanes are shifted orthogonally away from the midplanes based on weight differences. Therefore, a power cell $P_j$ can be topologically empty if its site is heavily dominated by adjacent larger weights.
\end{definition}

Building upon these strict geometric structures, we analyze input space partitioning by self-attention in the zero-temperature limit ($\tau \to 0$). We show that attention divides the projected query space into convex polyhedra, and as softmax collapses to a one-hot indicator, the routing becomes algebraically equivalent to a \textbf{Power Voronoi Diagram}, with cells determined by the key vectors' metric properties.

\begin{theorem}[Voronoi Routing Theorem]
\label{thm:voronoi}
Let $K = \{k_1, \dots, k_N\} \subset \mathbb{R}^{d_k}$ be a fixed set of key vectors. In the zero-temperature limit ($\tau \to 0$), the attention mechanism partitions the query space $\mathbb{R}^{d_k}$ into a complex of (possibly empty) convex polyhedra $\mathcal{V}(K) = \{C_1, \dots, C_N\}$. For any $q \in \text{int}(C_j)$, the attention mapping converges to a one-hot indicator for token $j$. This partition is algebraically equivalent to a \textbf{Power Voronoi Diagram} generated by sites $\{k_j\}$ with associated weights $w_j = \|k_j\|^2$. In \textcolor{red}{Figure~\ref{fig:voronoi_comparison}}, the introduction of these weights shifts the decision boundaries $\mathcal{H}$ away from standard midplanes, where the full-dimensional cells are defined by:
\begin{equation*}
    C_j = \left\{ q \in \mathbb{R}^{d_k} \mid \|q - k_j\|^2 - w_j \le \|q - k_l\|^2 - w_l, \quad \forall l \neq j \right\}.
\end{equation*}
\end{theorem}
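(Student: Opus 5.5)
Define the cells first by the score ordering, $C_j:=\{q\in\mathbb{R}^{d_k}:\langle q,k_j\rangle\ge\langle q,k_l\rangle,\ \forall l\neq j\}$. These are exactly the cones of the normal fan $\Sigma(P_{\text{denom}})$ of the max-affine function $P_{\text{denom}}(q)=\max_l\langle q,k_l\rangle$. Each $C_j$ is a finite intersection of closed half-spaces $\{q:\langle q,k_j-k_l\rangle\ge 0\}$, so it is a (possibly empty or lower-dimensional) convex polyhedron, in fact a polyhedral cone. The cells cover $\mathbb{R}^{d_k}$, since every $q$ has at least one maximizer. Two cells intersect only on their boundaries, because a point lying in the interior of two cells would have two strict maximizers. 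Together this gives the complex $\mathcal{V}(K)$ claimed in the statement.

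\textbf{Concentration on the interior.} Fix $q\in\operatorname{int}(C_j)$. First I need strictness there: I claim $\langle q,k_j\rangle>\langle q,k_l\rangle$ for every $l$ with $k_l\neq k_j$.
\begin{itemize}
\item Suppose instead $\langle q,k_j-k_l\rangle=0$ for some such $l$.
\item Perturb to $q-\epsilon(k_j-k_l)$. For small $\epsilon>0$ this point is still in $C_j$, yet it violates the $l$-th inequality.
\item This contradicts $q\in\operatorname{int}(C_j)$.
\end{itemize}
Duplicate keys are either treated as a single key or assumed away, as I would state in the proof. With $\delta:=\min_{l\neq j}\langle q,k_j-k_l\rangle>0$, the softmax weights satisfy
\[
\operatorname{softmax}_l=\frac{1}{1+\sum_{m\neq j}e^{-\langle q,k_j-k_m\rangle/\tau}}\cdot e^{-\langle q,k_j-k_l\rangle/\tau}\le e^{-\delta/\tau}\quad (l\neq j).
\]
Hence the weight on $j$ is at least $1-(N-1)e^{-\delta/\tau}\to1$, and the attention vector converges to the one-hot indicator $e_j$. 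This is the same Maslov limit $\tau\log\sum_l e^{\langle q,k_l\rangle/\tau}\to P_{\text{denom}}(q)$ from Section~\ref{sec:preliminaries}.

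\textbf{Power-diagram equivalence.} This is the key algebraic identity. Expanding the power distance with $w_l=\|k_l\|^2$ gives
\[
\|q-k_l\|^2-w_l=\|q\|^2-2\langle q,k_l\rangle+\|k_l\|^2-\|k_l\|^2=\|q\|^2-2\langle q,k_l\rangle .
\]
The term $\|q\|^2$ is common to all $l$. So $d_{\text{pow}}(q,k_j)\le d_{\text{pow}}(q,k_l)$ holds if and only if $\langle q,k_j\rangle\ge\langle q,k_l\rangle$. The power cells of Definition~\ref{def:power_voronoi} therefore coincide with the $C_j$ exactly, both as closed cells and in their interiors. The boundary between cells $j$ and $l$ is the hyperplane $\langle q,k_j-k_l\rangle=0$. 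The perpendicular bisector instead has equation $\langle q,k_j-k_l\rangle=\tfrac12(\|k_j\|^2-\|k_l\|^2)$. The two hyperplanes differ by a shift proportional to the weight difference $w_j-w_l$, which is the displacement from the midplanes described in the statement.

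\textbf{Main obstacle.} The subtle step is not the algebra but the limiting statement near boundaries. On ties, meaning points of $\mathcal{H}(P_{\text{denom}})$, the softmax converges to a uniform distribution over the tied indices rather than to a one-hot vector. The theorem therefore has to be read on interiors only, and I would state this explicitly. I would also note two further points:
\begin{itemize}
\item The convergence is pointwise on each interior and uniform on compact subsets of it, since $\delta$ is bounded below there.
\item The phrase ``full-dimensional cells'' refers to those $j$ with $\operatorname{int}(C_j)\neq\emptyset$. These are exactly the indices $j$ for which $k_j$ is a vertex of $\operatorname{conv}(K)$, so non-vertex keys give empty or lower-dimensional cells.
\end{itemize}
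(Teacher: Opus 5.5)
Your proof is correct. Its core step, expanding $\|q-k_l\|^2-\|k_l\|^2=\|q\|^2-2\langle q,k_l\rangle$ and cancelling the common $\|q\|^2$, is the same polarization-identity chain of equivalences the paper uses to identify the score-maximization cells with the power cells. Your extra pieces (convexity and covering of the cells, the perturbation argument for strict score gaps at interior points, the explicit $e^{-\delta/\tau}$ concentration bound, and the duplicate-key caveat) supply what the paper's appendix proof leaves implicit or defers to the proof of Corollary~\ref{corr:value_aggregation}.
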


\begin{proof}
    Please refer to \textcolor{red}{Appendix~\ref{app:proof_theorem_voronoi}} for a detailed proof.
\end{proof}

\begin{figure}[htbp]
    \centering
    \includegraphics[width=0.85\linewidth]{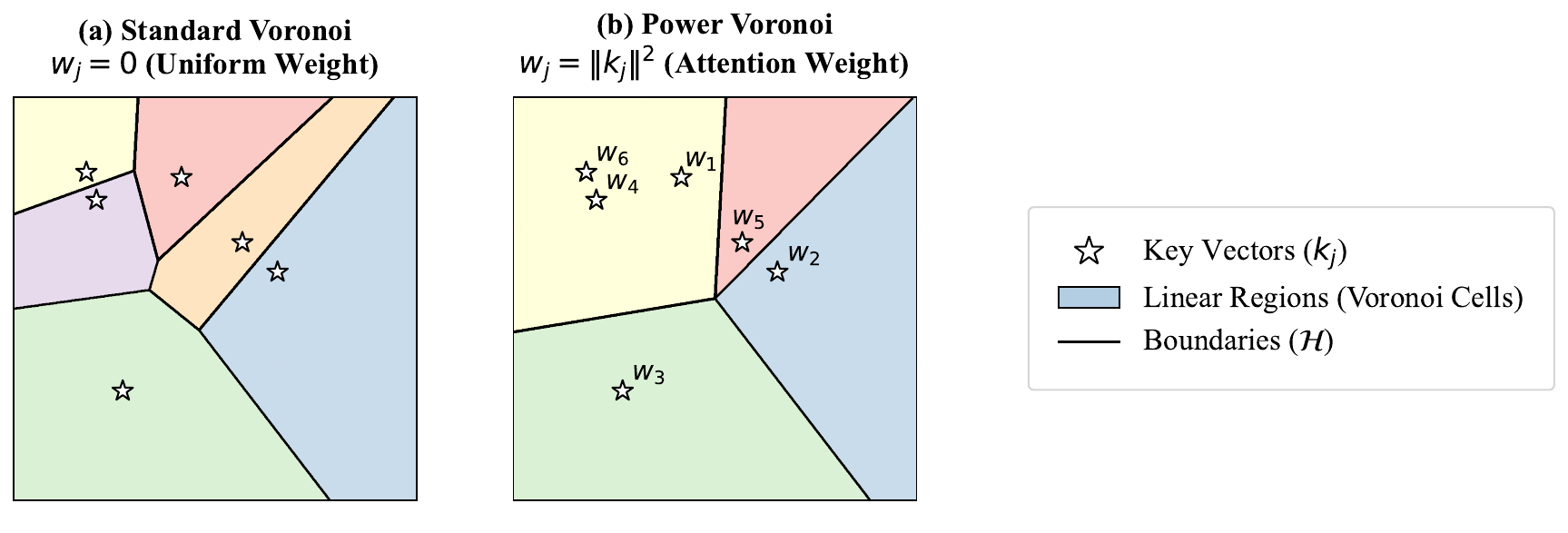}
    \caption{\textbf{Partitioning of the query space $\mathbb{R}^{d_k}$ in the tropical limit.} As detailed in the legend: white stars denote keys $k_j$, colored tiles represent maximal linear regions (Voronoi cells), and black lines identify routing boundaries $\mathcal{H}$. \textbf{(a)} Standard Voronoi diagram where routing is determined by Euclidean distance to keys. \textbf{(b)} Power Voronoi diagram corresponding to dot-product attention, where cell geometry is biased by $w_j = \|k_j\|^2$. Note the disappearance of the purple and orange regions in (b) due to their relatively small weights, confirming the possibly empty property in \textcolor{red}{Theorem~\ref{thm:voronoi}}.}
    \label{fig:voronoi_comparison}
\end{figure}

We now connect the analysis back to the standard arithmetic domain of the transformer output. As shown in \textcolor{red}{Remark~\ref{rem:nor_fan_ref}}, the tropical log-lifted model precisely captures the polyhedral geometry of the routing boundaries $\mathcal{H}$. Since the piecewise-constant output arises from the asymptotic collapse of softmax, independent of tropical algebra, we can safely replace the tropical values with the standard linear-domain value matrix $V$ within the collapsed regions.

\begin{corollary}[Piecewise Constant Value Aggregation]
\label{corr:value_aggregation}
Because the tropical rational map strictly defines the bounded polyhedral regions where the softmax collapses to an $\arg\max$ indicator, the original linear-domain attention output $Z(q) = \sum_{j} \text{softmax}(\langle q, k_j \rangle / \tau) v_j$ becomes strictly piecewise constant over the interior of the Voronoi cells. For any $q \in \text{int}(C_j)$, the original output evaluates identically to the corresponding unmodified value vector $Z(q) = v_j \in \mathbb{R}^{d_v}$. Consequently, the standard Value matrix $V$ acts purely as a piecewise constant vector field, assigning a fixed vector to each full-dimensional convex cell in the partitioned query space.
\end{corollary}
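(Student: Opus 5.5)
The plan is to prove the statement in two steps: first pin down which softmax weight dominates at a query in the interior of a routing cell, then take the limit of the output itself. Everything is understood in the zero-temperature limit $\tau\to0$, with keys $k_j$ and values $v_j$ fixed.

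\textbf{Step 1: the interior winner is unique.} By Theorem~\ref{thm:voronoi}, the power-distance inequalities defining $C_j$ reduce, after the $\|q\|^2$ and $\|k_j\|^2$ terms cancel, to
\begin{equation*}
\langle q,k_j\rangle \ge \langle q,k_l\rangle \quad \text{for all } l\neq j .
\end{equation*}
Take $q\in\operatorname{int}(C_j)$. I will argue that each of these inequalities is strict, so that the score gap $\delta(q):=\langle q,k_j\rangle-\max_{l\ne j}\langle q,k_l\rangle$ is positive. If some $l\ne j$ had $\langle q,k_l-k_j\rangle=0$, there are two cases.
\begin{itemize}
\item If $k_l\neq k_j$, then moving $q$ by a small multiple of $k_l-k_j$ makes $k_l$ score strictly higher than $k_j$. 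This leaves $C_j$, contradicting interiority.
\item If $k_l=k_j$, the two tokens are tied everywhere, and no perturbation separates them.
\end{itemize}
The second case is the genuine subtlety. I will handle it by assuming distinct keys, or at least distinct keys among tokens with distinct values; tokens sharing both a key and a value can simply be merged. So under this convention $\delta(q)>0$.

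\textbf{Step 2: the softmax concentrates on token $j$.} Write the softmax weights as
\begin{equation*}
\alpha_l(q)=\Big(\sum_m e^{(\langle q,k_m\rangle-\langle q,k_l\rangle)/\tau}\Big)^{-1}.
\end{equation*}
For $l\ne j$ this gives $\alpha_l(q)\le e^{-\delta(q)/\tau}$, hence $1-\alpha_j(q)\le (N-1)e^{-\delta(q)/\tau}$. It follows that
\begin{equation*}
\|Z(q)-v_j\| = \Big\|\sum_{l\ne j}\alpha_l(q)(v_l-v_j)\Big\| \le (N-1)\,e^{-\delta(q)/\tau}\max_l\|v_l-v_j\| \;\longrightarrow\; 0 .
\end{equation*}
So the zero-temperature output $Z^{(0)}(q):=\lim_{\tau\to0}Z(q)$ equals $v_j$ at every $q\in\operatorname{int}(C_j)$. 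This is exact in the limit, which is how I read ``evaluates identically''. It holds for arbitrary signed values, because no logarithm is taken and log-lifting is not used.

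\textbf{Step 3: piecewise constancy and interpretation.} Piecewise constancy follows directly. Each full-dimensional $C_j$ is a convex polyhedron, so $\operatorname{int}(C_j)$ is connected, and $Z^{(0)}$ takes the constant value $v_j$ there. The cells are the routing partition $\Sigma(P_{\text{denom}})$ of Remark~\ref{rem:nor_fan_ref}. The boundaries $\mathcal{H}$ are finite unions of hyperplane pieces and so have measure zero; the limit value there (an average over the tied tokens) is irrelevant to the claim. Thus $V$ acts as a lookup table assigning $v_j$ to cell $C_j$. If the statement is read as applying on compact subsets of $\operatorname{int}(C_j)$, the gap $\delta$ is bounded below there and the convergence is uniform. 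This is worth stating, since it previews the finite-temperature stability results.

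The main obstacle is Step 1: ensuring that the interior of $C_j$ really has a strict unique winner. This requires dealing with duplicate keys, which the ``possibly empty'' and ``complex'' phrasing of Theorem~\ref{thm:voronoi} glosses over, and making clear that ``identically'' means in the $\tau\to0$ limit rather than at finite $\tau$.
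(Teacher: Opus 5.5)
Your proposal is correct and follows essentially the same route as the paper's proof: obtain a strict score gap at interior points, then factor out the winning exponential so that the off-winner softmax weights vanish like $e^{-\delta(q)/\tau}$ and $Z(q)\to v_j$. Your version is more careful than the paper's in two places. The paper simply asserts that interiority forces strict inequalities, which fails when some $k_l=k_j$ with $v_l\neq v_j$, since the limit is then the average of the tied values. It also claims the convergence is uniform on all of $\operatorname{int}(C_j)$, whereas, as you note, uniformity holds only on compact or margin-separated subsets.
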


\begin{proof}
    Please refer to \textcolor{red}{Appendix~\ref{app:proof_corr_value_aggregation}} for a detailed proof.
\end{proof}

\begin{remark}[Scope of the Query-Space Partition]
\label{rem:pullback}
    \textcolor{red}{Theorem~\ref{thm:voronoi}} characterizes the partition of the projected query space $\mathbb{R}^{d_k}$ for a fixed key set $K=\{k_1,\dots,k_N\}$, rather than a direct partition of the raw sequence space $\mathbb{R}^{N\times d_{\mathrm{model}}}$. Here, $C_j$ denotes the $j$-th Power Voronoi cell defined in \textcolor{red}{Theorem~\ref{thm:voronoi}}. If the key branch is held fixed and $q_i=x_iW_Q$, then the pullback of $C_j$ to the token representation space is $\left\{x_i\in\mathbb{R}^{d_{\mathrm{model}}}:x_iW_Q\in C_j\right\}$. In the full self-attention map, however, both queries and keys depend on $X$, so the routing inequalities are coupled and generally quadratic in the entries of $X$. Hence, the conditional Power Voronoi partition in query space does not induce a fixed Cartesian-product partition of the full sequence input space.
\end{remark}

\begin{remark}[Equal-Norm Keys: Reduction to Euclidean and Angular Routing]
\label{rem:key_norm}
    Suppose the projected keys have a common Euclidean norm $\|k_j\|=r_K>0$ for all $j$, where $r_K$ denotes their shared radius from the origin. Then the Power weights in \textcolor{red}{Theorem~\ref{thm:voronoi}} are identical, $w_j=r_K^2$, and $\|q-k_j\|^2-w_j=\|q\|^2-2\langle q,k_j\rangle$. Therefore, the Power Voronoi diagram reduces to the standard Euclidean Voronoi diagram of the key sites. Moreover, for any nonzero query $q$, $\langle q,k_j\rangle=\|q\|r_K\cos\theta_j$, where $\theta_j$ is the angle between $q$ and $k_j$. Since $\|q\|r_K$ is positive and independent of $j$, dot-product routing is equivalent to maximizing cosine similarity. Thus, equal key norms remove the norm-dependent weighting, whereas variable key norms yield the general Power Voronoi geometry.
\end{remark}

\section{Expressive Power via Tropical Complexity}
\label{sec:expressive_power}

In this section, we characterize \textit{\textbf{how the sequence length $N$ amplifies the topological complexity of transformer networks}}. We first establish a geometric rationale for the MHSA mechanism via Newton polytopes, and then leverage this local complexity to derive tight asymptotic bounds on the global number of linear regions for transformers.

\subsection{The Geometric Necessity of Multi-Head Self-Attention}
\label{subsec:multi_head_geometry}

A key question in transformer design is the advantage of MHSA over a single head. We address this using the \textbf{Newton Polytope} as a combinatorial measure of partitioning capacity. As defined in \textcolor{red}{Definition~\ref{def:mhsa_abstraction}}, multi-head aggregation corresponds to the Minkowski sum of individual Newton polytopes. \textcolor{red}{Figure~\ref{fig:minkowski_mhsa}} illustrates this geometry: while a single head’s partitioning is bounded by sequence length $N$, combining multiple heads produces a complex mixed-face structure, causing the number of extreme points to grow exponentially with $H$. This topological complexity is formalized in the following \textcolor{red}{Theorem~\ref{thm:newton_polytope}}.

\begin{figure}[htbp]
    \centering
    \includegraphics[width=\textwidth]{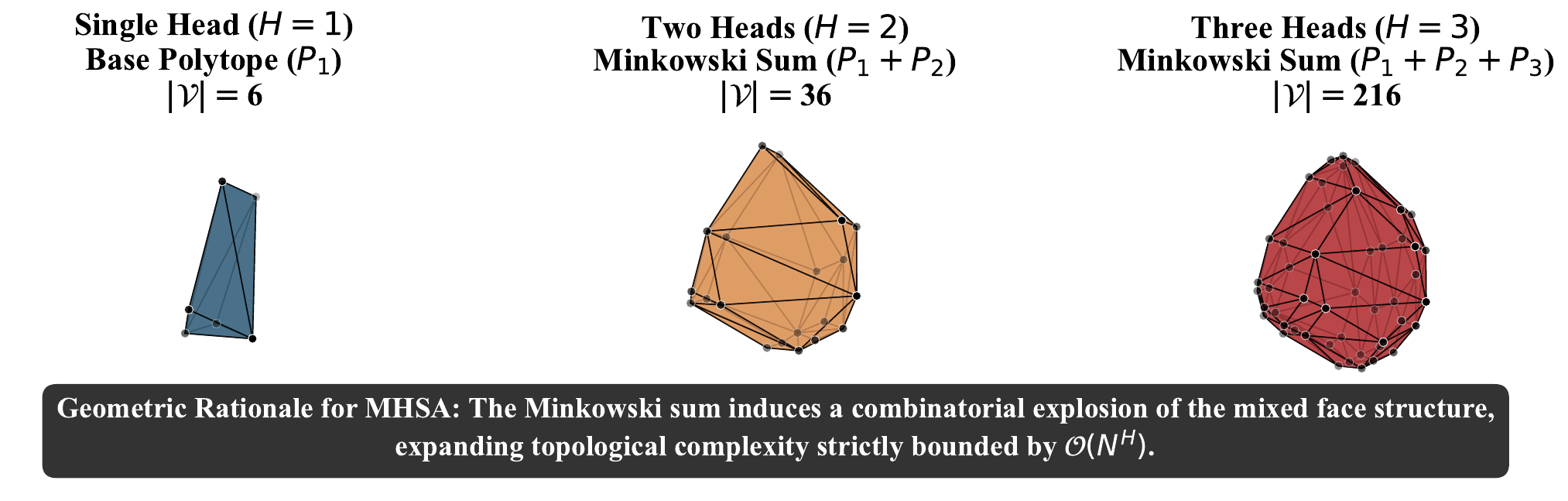}
    \caption{\textbf{Visualizing the Geometric Rationale of Multi-Head Self-Attention via Minkowski Sums.} 
    To illustrate the combinatorial polytope gain from \textcolor{red}{Theorem~\ref{thm:newton_polytope}}, we construct 3D Newton polytopes for self-attention with sequence length $N=6$. \textbf{(Left)} A single head yields a simple polytope with at most $N$ vertices. \textbf{(Middle \& Right)} Under the tropical view, multi-head aggregation corresponds to the Minkowski sum of individual polytopes, producing complex mixed-face structures and increasing the vertex count from 6 to 216. This demonstrates the superior spatial partitioning capacity of multi-head attention, scaling as $\mathcal{O}(N^H)$.}
    \label{fig:minkowski_mhsa}
\end{figure}

\begin{proposition}[Newton Polytope of Single-Head Attention]
\label{prop:single_head_polytope}
Consider a single-head attention (SHA) layer mapping to dimension $d_{\mathrm{model}}$ with sequence length $N$. The tropical potential function of this routing mechanism induces a Newton polytope strictly defined as the convex hull of the $N$ projected key vectors in $\mathbb{R}^{d_{\mathrm{model}}}$. Thus, the maximum number of extreme points (vertices) is fundamentally bottlenecked by the sequence length: $V_{\text{single}} \le N$.
\end{proposition}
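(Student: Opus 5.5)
The plan is to specialize Definition~\ref{def:mhsa_abstraction} to $H=1$ and then apply an elementary convex-geometry fact. With one head, write $a_j:=a_j^{(1)}\in\mathbb{R}^{d_{\mathrm{model}}}$ for the effective key vector obtained by pulling the query--key score back to the common query variable $x$. If $q=xW_Q$, then $\langle q,k_j\rangle=\langle x,W_Qk_j\rangle$, so $a_j=W_Qk_j$. The zero-temperature routing potential is the denominator polynomial $P_{\text{denom}}$ written in the variable $x$:
\begin{equation*}
f(x)=\bigoplus_{j=1}^N\langle x,a_j\rangle=\max_{1\le j\le N}\langle x,a_j\rangle .
\end{equation*}
This is a (generalized, real-exponent) tropical polynomial with $N$ monomials, all with zero coefficients. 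By the definition of the Newton polytope, $\operatorname{Newt}(f)=\operatorname{conv}\{a_1,\dots,a_N\}$. This is exactly the first claim: the polytope is the convex hull of the $N$ projected keys in $\mathbb{R}^{d_{\mathrm{model}}}$.

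For the vertex bound, I will use the standard fact that every vertex (extreme point) of the convex hull of a finite set $S$ belongs to $S$. The argument is short. Suppose an extreme point $p$ of $\operatorname{conv}(S)$ were not in $S$. Write $p=\sum_j\lambda_j a_j$ as a convex combination with at least two positive weights. Then $p$ is a nontrivial convex combination of points of the polytope, which contradicts extremality. Hence $\operatorname{vert}(\operatorname{Newt}(f))\subseteq\{a_1,\dots,a_N\}$, and therefore $V_{\text{single}}\le N$. Coincident keys and keys lying in the interior of the hull only lower the count.

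I will also add a consistency check linking this bound to routing. By the normal-fan description, the full-dimensional linearity regions of $f$ correspond bijectively to the vertices of $\operatorname{Newt}(f)$. A key $a_j$ that is not a vertex has a routing cell $C_j$ with empty interior. This matches the ``possibly empty'' cells of Theorem~\ref{thm:voronoi}, so the number of nonempty routing cells is also at most $N$.

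None of the steps is a real obstacle. The only care needed is to state precisely that the generalized max-plus convention, which allows real slope vectors, still defines $\operatorname{Newt}$ as the convex hull of the slopes. Under that convention the extreme-point argument applies verbatim.
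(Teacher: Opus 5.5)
Your proposal is correct and follows essentially the same route as the paper's proof (given at the start of Appendix~\ref{app:proof_theorem_newton_polytope}). Both write the routing potential as $\max_j\langle x,a_j\rangle$ with pulled-back keys $a_j$, so that $\operatorname{Newt}(f)=\operatorname{conv}\{a_1,\dots,a_N\}$, and then use the fact that every vertex of the convex hull of a finite set lies in that set. Your normal-fan consistency check matches the vertex--routing-cell correspondence the paper invokes later in the proof of Theorem~\ref{thm:linear_regions}.
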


While \textcolor{red}{Proposition~\ref{prop:single_head_polytope}} demonstrates that a single attention head possesses highly restricted spatial-partitioning capabilities regardless of the embedding dimension, we now prove that multi-head aggregation overcomes this linear bottleneck.

\begin{theorem}[Combinatorial Polytope Complexity of MHSA]
\label{thm:newton_polytope}
Consider an MHSA layer with $H\geq1$ attention heads, sequence length $N\geq1$, and embedding dimension $d_{\mathrm{model}}\geq2$. Assume that the routing Newton polytope associated with each head has at most $N$ vertices. Let $V_{\mathrm{multi}}$ denote the number of vertices of the joint routing Newton polytope defined by the aggregated multi-head routing potential in \textcolor{red}{Definition~\ref{def:mhsa_abstraction}}. Then, for fixed $d_{\mathrm{model}}$,
\begin{equation*}
    V_{\mathrm{multi}}=
    \begin{cases}
        \mathcal{O}(N^H), & H<d_{\mathrm{model}},\\
        \mathcal{O}\!\left((HN)^{d_{\mathrm{model}}-1}\right), & H\geq d_{\mathrm{model}},
    \end{cases}
\end{equation*}
where the hidden constants depend only on $d_{\mathrm{model}}$.
\end{theorem}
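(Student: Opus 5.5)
We bound the vertex count of the Minkowski sum $P_1+\dots+P_H$ from Definition~\ref{def:mhsa_abstraction} in two ways and take whichever applies. Both bounds rest on the fact that every vertex of a Minkowski sum decomposes uniquely as a sum of vertices of the summands. Concretely, if $v$ is a vertex of $P_1+\dots+P_H$, pick a generic direction $x$ in the interior of its normal cone. Then $v$ is the unique maximizer of $\langle x,\cdot\rangle$ over the sum, so $v=\sum_h v_h$, where $v_h$ is the unique maximizer of $\langle x,\cdot\rangle$ on $P_h$ and is a vertex of $P_h$. Distinct vertices of the sum have disjoint normal-cone interiors, so they give distinct tuples $(v_1,\dots,v_H)$.

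\textbf{Universal bound.} The decomposition gives an injection from the vertices of the sum into $\prod_h \operatorname{vert}(P_h)$. By the hypothesis that each $P_h$ has at most $N$ vertices (cf. Proposition~\ref{prop:single_head_polytope}), we get $V_{\mathrm{multi}}\le N^H$. This holds for every $H$, so in particular it gives the $\mathcal{O}(N^H)$ case when $H<d_{\mathrm{model}}$, with constant $1$.

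\textbf{Dimension-dependent bound for $H\ge d_{\mathrm{model}}$.} Here we use the normal-fan description $\Sigma(P_1+\dots+P_H)=\bigwedge_h\Sigma(P_h)$ already stated in Definition~\ref{def:mhsa_abstraction}. The vertices of the sum correspond bijectively to the full-dimensional cones of the common refinement. Each full-dimensional cone of $\Sigma(P_h)$ is a cone $C_j^{(h)}$, and its walls lie in the hyperplanes $\{x:\langle x,a_j^{(h)}-a_r^{(h)}\rangle=0\}$. Every full-dimensional cell of the refinement is therefore a union of cells of the central arrangement made of the edge-normal hyperplanes of all the $P_h$.

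To count hyperplanes, we use the edges of $P_h$ rather than all pairs of vertices. Since $d=d_{\mathrm{model}}$ is fixed, the Upper Bound Theorem bounds the number of edges of a polytope with at most $N$ vertices by $\mathcal{O}(N^{\lfloor d/2\rfloor})$, which for $d\ge 4$ exceeds $N$. A more careful route is needed, and I would try it first: bound the number of full-dimensional cones of the refinement, equivalently the vertices of the sum, by applying the known Minkowski-sum vertex bound of Gritzmann--Sturmfels (cited earlier as \citep{gritzmann1993minkowski}). That bound states that the sum of $H$ polytopes in $\mathbb{R}^d$ with at most $m$ edge directions in total has at most $2\sum_{i=0}^{d-1}\binom{m-1}{i}$ vertices. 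Its proof intersects the central zonotope fan with a sphere and applies Zaslavsky's count (Definition~\ref{def:zaslavsky}) in dimension $d-1$. The full-dimensional cells of a central arrangement of $m$ hyperplanes in $\mathbb{R}^d$ number at most $2\sum_{i=0}^{d-1}\binom{m-1}{i}=\mathcal{O}(m^{d-1})$; this exponent $d-1$ is the source of the stated exponent. The remaining task is to show that the relevant $m$ is $\mathcal{O}(HN)$.

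\textbf{Main obstacle.} The hard step is bounding the total number of edge directions by $\mathcal{O}(HN)$. For $d\le 3$ this is straightforward: a polytope with $N$ vertices has at most $3N-6$ edges, giving $m\le 3HN$ and hence $V_{\mathrm{multi}}=\mathcal{O}((HN)^{d-1})$. For general fixed $d$, the naive edge count is superlinear in $N$. I would first try a refined counting argument: localize to each vertex cone of the refinement, perturb the summands generically (which can only increase vertex count), and use the Gritzmann--Sturmfels face bound $\mathcal{O}(H^{d-1}\cdot\#\text{edges})$. If that fails, I would fall back to reading the hypothesis ``at most $N$ vertices'' as bounding the number of routing hyperplanes contributed by each head by $\mathcal{O}(N)$, which is what the stated $(HN)^{d_{\mathrm{model}}-1}$ rate requires. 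Under that reading, the arrangement count gives the bound with constants depending only on $d_{\mathrm{model}}$. Finally, combining the two bounds, and noting that $N^H$ is also valid when $H\ge d_{\mathrm{model}}$, yields the stated case split.
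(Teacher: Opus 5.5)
\textbf{Verdict: there is a genuine gap in the case $H\ge d_{\mathrm{model}}\ge 4$.}

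\textbf{What works.} Your universal bound is correct and is the same as the paper's. The paper obtains $V_{\mathrm{multi}}\le\prod_h n_h\le N^H$ from the fact that every vertex of $\operatorname{conv}(\mathcal V_1+\dots+\mathcal V_H)$ lies in the generating set, which is equivalent to your unique-decomposition injection. Your arrangement argument also gives the saturated bound when $d_{\mathrm{model}}\le 3$.

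\textbf{Where it fails.} No bound of the form $\mathcal O(m^{d-1})$, with $m$ the number of edge directions or wall hyperplanes, can deliver the $(HN)^{d-1}$ rate once $d\ge 4$. For $d\ge 4$, a polytope with $N$ vertices can have $\binom N2$ pairwise nonparallel edges. Cyclic polytopes are $2$-neighborly, and distinct chords of the moment curve are never parallel. With generically rotated copies one gets $m=H\binom N2$, so the Gritzmann--Sturmfels/Zaslavsky count only yields $\mathcal O\bigl((HN^2)^{d-1}\bigr)$. The problem is structural, not a matter of counting edges more carefully. Replacing each normal fan by the full arrangement of its wall hyperplanes discards the one piece of information that matters: each fan has at most $N$ maximal cones. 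The common refinement has far fewer cells than that arrangement. Your fallback, reading the hypothesis as saying each head contributes $\mathcal O(N)$ routing hyperplanes, changes the theorem rather than proving it. (As a side remark, the Upper Bound Theorem is not the relevant tool for edges: the bound $f_1\le\binom N2$ is trivial and is attained for $d\ge 4$.)

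\textbf{The missing idea.} You need a reduction from many summands to subsums of $d-1$ summands that keeps vertex counts, not edge counts, as the currency. The paper uses Weibel's bound: for $H\ge d_\Sigma$ full-dimensional polytopes $Q_h$ in a $d_\Sigma$-dimensional space,
$f_0(Q_1+\dots+Q_H)\le\sum_{|S|=d_\Sigma-1}f_0\bigl(\sum_{h\in S}Q_h\bigr)$.
Applying the product bound to each subsum then gives
$\sum_{|S|=d_\Sigma-1}\prod_{h\in S}m_h\le\binom{m_1+\dots+m_H}{d_\Sigma-1}=\mathcal O\bigl((HN)^{d_\Sigma-1}\bigr)$.

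\textbf{Technical steps this route requires.}
\begin{itemize}
\item \emph{Full-dimensionality.} Weibel's bound needs full-dimensional summands, so the paper first translates the summands into the linear span $E$ of the translated sum, with $d_\Sigma=\dim E$. It then replaces each $\overline P_h$ by $Q_h=\overline P_h+\varepsilon\Delta$, where $\Delta\subset E$ is a full-dimensional simplex.
\item \emph{Vertex counts only go up.} Since $Q_1+\dots+Q_H=\overline P_\Sigma+H\varepsilon\Delta$, its normal fan refines that of $\overline P_\Sigma$, so the vertex count cannot decrease. Meanwhile $m_h\le(d_\Sigma+1)n_h$, so this costs only a constant factor.
\item \emph{Degenerate cases and the final exponent.} The cases $d_\Sigma\in\{0,1\}$ are handled directly. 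Finally, $d_\Sigma\le d_{\mathrm{model}}$ yields the stated exponent with a constant depending only on $d_{\mathrm{model}}$.
\end{itemize}
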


\begin{proof}
    Please refer to \textcolor{red}{Appendix~\ref{app:proof_theorem_newton_polytope}} for a detailed proof.
\end{proof}

\begin{remark}[Interpretation of the Vertex Bounds]
\label{rem:polytope_complexity_interpretation}
When $H<d_{\mathrm{model}}$, \textcolor{red}{Theorem~\ref{thm:newton_polytope}} gives $V_{\mathrm{multi}}\leq N^H$, whereas a single-head routing polytope satisfies $V_{\mathrm{single}}\leq N$. These are worst-case upper bounds and do not imply that either bound is attained.
\end{remark}

\begin{example}[SHA and MHSA under a Fixed Projection-Parameter Budget]
Following the standard multi-head attention parameterization \citep{vaswani2017attention}, and ignoring bias terms, the query, key, value, and output projections contain $2Hd_{\mathrm{model}}d_k+2Hd_{\mathrm{model}}d_v$ parameters. Under the standard choice $d_k=d_v=d_{\mathrm{model}}/H$, this count becomes $4d_{\mathrm{model}}^2$, independently of $H$. Thus, fixing $d_{\mathrm{model}}$ controls the number of attention-projection parameters, but not the parameter count of the entire transformer block. Consider $d_{\mathrm{model}}=512$ and $N=512$. Both SHA and MHSA then contain $4d_{\mathrm{model}}^2=1{,}048{,}576$ attention-projection parameters.
\begin{itemize}
    \item \textbf{SHA ($H=1$):} Setting $d_k=d_v=d_{\mathrm{model}}$, the routing is governed by a single Newton polytope. By \textcolor{red}{Proposition~\ref{prop:single_head_polytope}}, its vertex count satisfies $V_{\mathrm{single}}\leq N=512$.
    \item \textbf{MHSA ($H=8$):} Setting $d_k=d_v=d_{\mathrm{model}}/8$, the joint routing polytope is the Minkowski sum of eight head polytopes. Since $H=8<d_{\mathrm{model}}=512$, \textcolor{red}{Theorem~\ref{thm:newton_polytope}} gives the worst-case asymptotic bound $V_{\mathrm{multi}}=\mathcal{O}(N^8)=\mathcal{O}(512^8)$.
\end{itemize}
Therefore, under the same attention-projection parameter count, the multi-head upper bound has a higher polynomial dependence on the sequence length than the single-head bound. This is a worst-case asymptotic comparison and does not assert that a particular parameter configuration realizes $N^H$ vertices.
\end{example}

\subsection{Counting Linear Regions of Tropical Transformers}
\label{subsec:linear_regions}

Building on \textcolor{red}{Theorem~\ref{thm:newton_polytope}}, we study hard-routing partitions in a conditioned single-query input space, where the context keys and $W_Q$ are fixed while $x\in\mathbb{R}^{d_{\mathrm{model}}}$ varies. For geometric region counting, identify indices that induce identical score functionals, and let $M\leq N$ denote the number of distinct functionals, relabeled as $s_j(x):=\langle xW_Q,k_j\rangle,\quad j=1,\dots,M$.
The unique-winner top-$1$ routing region of key $j$ is $\widetilde C_j^\circ:=\{x:s_j(x)>s_l(x)\ \text{for all }l\neq j\}$.

\begin{definition}[Ghost-Region Overcount]
Let $S(x):=(s_1(x),\dots,s_M(x))$ be the affine score map, and let
$\mathcal H_{ab}:=\{x:s_a(x)=s_b(x)\}=S^{-1}\!\left(\{z:z_a=z_b\}\right)$, $1\leq a<b\leq M$, be its pairwise score-equality hyperplanes, with coincident hyperplanes allowed. Their arrangement is a possibly nonessential pullback of the braid arrangement, and its chambers correspond to realizable strict score orderings \citep{stanley2007introduction}. For the unique-winner region $\widetilde C_j^\circ:=\{x:s_j(x)>s_l(x),\ \forall l\neq j\}$, let $r_j$ denote the number of arrangement chambers contained in $\widetilde C_j^\circ$. The ghost-region overcount is $G:=\sum_{j=1}^M\max\{r_j-1,0\}$.
\end{definition}

The quantity $G$ counts subdivisions of a top-$1$ routing cell caused solely by changes in the ordering of nonmaximal scores. For a fixed winner $j$, the remaining $M-1$ scores admit at most $(M-1)!$ strict orderings, so $r_j\leq(M-1)!, \quad 0\leq G\leq M!-M$. If $M\leq N$, then also $G\leq N!-N$. Equality $G=M!-M$ requires all $M!$ strict score orderings to be realizable. In general, $G$ depends on the rank and geometry of $S$, so no universal prevalence is claimed. The Power Voronoi formulation counts the nonempty top-$1$ routing cells directly and therefore avoids this ordering-based overcount \citep{aurenhammer1987power}.

\begin{assumption}[Cellwise Noncollapse]
\label{ass:cellwise_noncollapse}
Let $\Phi_\ell$ denote the composition of the first $\ell$ transformer blocks, with $\Phi_0=\operatorname{id}$. For every $\ell\in\{0,\dots,L-1\}$, every full-dimensional affine cell $Q$ of $\Phi_\ell$, and every nonzero affine functional $g$ defining a routing-tie or ReLU activation boundary of block $\ell+1$, assume that $g\circ\left.\Phi_\ell\right|_Q\not\equiv0$. This cellwise noncollapse condition is an affine analogue of the standard transversality principle \citep{hirsch2012differential}; the corresponding pullback of activation hyperplanes through preceding affine regions is standard in linear-region analyses of piecewise-linear networks \citep{serra2018bounding}.
\end{assumption}

\begin{theorem}[Upper Bound on Full-Dimensional Linear Regions]
\label{thm:linear_regions}
Consider a tropical transformer network $\mathcal T(L,H,d_{\mathrm{model}},d_{\mathrm{ff}})$ in the conditioned single-query setting, where $L\geq1$ is the number of transformer blocks, $H\geq1$ is the number of attention heads per block, $d_{\mathrm{model}}\geq2$ is the embedding dimension, $d_{\mathrm{ff}}\geq1$ is the hidden width of each feed-forward network, and $N\geq1$ is the sequence length. Under \textcolor{red}{Assumption~\ref{ass:cellwise_noncollapse}}, for fixed $d_{\mathrm{model}}$, $H$, $d_{\mathrm{ff}}$, and $L$, the number of full-dimensional affine regions satisfies
\begin{equation*}
    \mathcal N\!\left(\mathcal T(L,H,d_{\mathrm{model}},d_{\mathrm{ff}})\right)=
    \begin{cases}
        \mathcal O(N^{HL}), & H<d_{\mathrm{model}},\\
        \mathcal O\!\left(N^{(d_{\mathrm{model}}-1)L}\right), & H\geq d_{\mathrm{model}}.
    \end{cases}
\end{equation*}
\end{theorem}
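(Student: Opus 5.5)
The plan is to prove the bound one block at a time, counting how much a single block can subdivide each cell it receives, and then multiply across depth. \textbf{Step 1 (single block, single cell).} Fix $\ell$ and a full-dimensional affine cell $Q$ of $\Phi_\ell$. On $Q$, $\Phi_\ell$ is affine, so every routing score of block $\ell+1$ pulled back to $Q$ is an affine functional of the input $x$.

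\emph{Routing refinement.} Within one head, the top-$1$ routing partition on $Q$ is the pullback of the Power Voronoi diagram of Theorem~\ref{thm:voronoi}. It therefore has at most $N$ full-dimensional cells; the ghost subdivisions are quotiented out because only unique-winner regions are counted. Across the $H$ heads, the joint routing partition is the common refinement in Definition~\ref{def:mhsa_abstraction}, i.e.\ the normal fan of the Minkowski sum $P_1+\dots+P_H$, pulled back to $Q$. Its full-dimensional cells correspond to vertices of the Minkowski sum, which is a polytope in an affine image of dimension at most $d_{\mathrm{model}}$. Theorem~\ref{thm:newton_polytope} then bounds the number of routing cells inside $Q$ by $\mathcal O(N^H)$ when $H<d_{\mathrm{model}}$, and by $\mathcal O((HN)^{d_{\mathrm{model}}-1})=\mathcal O(N^{d_{\mathrm{model}}-1})$ when $H\ge d_{\mathrm{model}}$, with $H$ fixed.

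\emph{FFN refinement.} Each routing cell is affine after $W_O$ and the residual. The FFN adds at most $d_{\mathrm{ff}}$ pulled-back ReLU hyperplanes, which by Definition~\ref{def:zaslavsky} refine each cell into at most $R(d_{\mathrm{ff}},d_{\mathrm{model}})=\mathcal O(1)$ pieces, since $d_{\mathrm{ff}}$ and $d_{\mathrm{model}}$ are fixed. If the log-lifted numerator breakpoints are included, they contribute at most $d_v$ additional max-affine complexes of $N$ pieces each on every routing cell. I would either absorb these into the constant, after checking that they add only $\mathcal O(N^{d_{\mathrm{model}}})$-type factors that do not exceed the stated exponent, or, preferably, restrict to the hard-routing skeleton as the theorem's framing indicates.

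\textbf{Step 2 (role of the assumption).} Assumption~\ref{ass:cellwise_noncollapse} guarantees that each pulled-back tie or ReLU functional is a genuine nonconstant affine function on $Q$. Its zero set is then a hyperplane section of $Q$, so the per-cell count above is the count of a genuine polyhedral subdivision of $Q$. For an upper bound, degenerate functionals could only reduce the count, so the assumption mainly makes the partition well defined. The full-dimensional affine regions of $\Phi_{\ell+1}$ refine the partition of $\Phi_\ell$. A maximal region of $\Phi_{\ell+1}$ may merge several cells, which only decreases the count, so it suffices to bound the number of cells of this refinement.

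\textbf{Step 3 (depth).} Write $B(N)$ for the per-cell factor from Step 1, namely $\mathcal O(N^{\min\{H,d_{\mathrm{model}}-1\}})$. Induction on $\ell$ gives $\mathcal N(\Phi_{\ell+1})\le B(N)\,\mathcal N(\Phi_\ell)$, and $\mathcal N(\Phi_0)=1$. Hence $\mathcal N\le B(N)^L=\mathcal O(N^{\min\{H,d_{\mathrm{model}}-1\}L})$. The $L$-th power of the hidden constant is harmless because $L$ is fixed. Splitting into the cases $H<d_{\mathrm{model}}$ and $H\ge d_{\mathrm{model}}$ yields the two stated bounds.

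The main obstacle is Step 1: translating the vertex bound of Theorem~\ref{thm:newton_polytope} into a bound on full-dimensional routing cells \emph{restricted to an arbitrary cell $Q$}. The normal fan of the pulled-back Minkowski sum restricted to $Q$ has at most as many full-dimensional cones as the sum has vertices, but the pullback through the affine map $\Phi_\ell|_Q$ (possibly rank-deficient) and the cell $Q$ could in principle be cut into several pieces per vertex. Convexity settles this: each normal cone is convex, and its intersection with the convex set $Q$ is convex, hence connected. So each vertex contributes at most one piece, and the count remains bounded by $V_{\mathrm{multi}}$.
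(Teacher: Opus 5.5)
Your proposal is correct and follows essentially the same route as the paper. Both arguments bound the full-dimensional joint routing cells of a block by the vertex count of the Minkowski sum via Theorem~\ref{thm:newton_polytope}, multiply by an $N$-independent Zaslavsky factor for the ReLU hyperplanes, use convexity so that each (prefix cell, block cell) pair contributes at most one convex piece, use Assumption~\ref{ass:cellwise_noncollapse} so that these pieces cover each prefix cell up to a lower-dimensional set, and then multiply across the $L$ blocks.

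Two points need tightening. First, run the induction on the convex refinement cells (the paper's $\mathscr Q_\ell$) rather than on the maximal regions of $\Phi_\ell$: maximal regions need not be convex, and your convexity argument fails for them. Second, commit to your second option of working with the fixed-value hard-routing skeleton, as the paper does. Log-lifted numerator breakpoints cannot be absorbed into an $N$-independent constant, since factors of order $N^{d_{\mathrm{model}}}$ would already exceed the per-block exponent $\min\{H,d_{\mathrm{model}}-1\}$.
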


\begin{proof}
    Please refer to \textcolor{red}{Appendix~\ref{app:proof_theorem_linear_regions}} for a detailed proof.
\end{proof}

\begin{remark}[Interpretation of the Linear-Region Bound]
\label{rem:linear_region_interpretation}
\textcolor{red}{Theorem~\ref{thm:linear_regions}} shows that the dependence on the sequence length $N$ is governed by the joint MHSA routing partition. For fixed $d_{\mathrm{model}}$ and $d_{\mathrm{ff}}$, the FFN contributes only an $N$-independent multiplicative factor within each block, while composition across $L$ blocks multiplies the per-block sequence-length exponent by $L$. Consequently, the exponent is $HL$ when $H<d_{\mathrm{model}}$ and saturates at $(d_{\mathrm{model}}-1)L$ when $H\geq d_{\mathrm{model}}$.
\end{remark}

While \textcolor{red}{Theorem~\ref{thm:linear_regions}} gives an upper bound, repeated composition attains this sequence-length exponent when the attention routing cells are mapped onto a common invariant domain. The construction below combines coordinatewise attention routing with the folding mechanism used in classical linear-region lower bounds for deep piecewise-linear networks \citep{montufar2014number,telgarsky2016benefits}.

\begin{theorem}[Constructive Lower Bound on Linear Regions]
\label{thm:lower_bound_linear_regions}
Consider the tropical transformer network $\mathcal T(L,H,d_{\mathrm{model}},d_{\mathrm{ff}})$ in the conditioned single-query setting of \textcolor{red}{Theorem~\ref{thm:linear_regions}}, with $d_{\mathrm{model}}\geq2$ and $d_{\mathrm{ff}}\geq2\min\{H,d_{\mathrm{model}}-1\}$. Assume that at least $\min\{H,d_{\mathrm{model}}-1\}$ attention heads have query--key dimension at least two. Then there exists a parameter configuration such that, for fixed $d_{\mathrm{model}}$, $H$, $d_{\mathrm{ff}}$, and $L$, the number of full-dimensional affine regions satisfies
\begin{equation*}
    \mathcal N\!\left(\mathcal T(L,H,d_{\mathrm{model}},d_{\mathrm{ff}})\right)=
    \begin{cases}
        \Omega(N^{HL}), & H<d_{\mathrm{model}},\\
        \Omega\!\left(N^{(d_{\mathrm{model}}-1)L}\right), & H\geq d_{\mathrm{model}}.
    \end{cases}
\end{equation*}
\end{theorem}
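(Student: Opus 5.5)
We construct explicit parameters, separating the per-block gain from its composition over depth. Let $m:=\min\{H,d_{\mathrm{model}}-1\}$. We use $m$ heads of query--key dimension at least two. All other heads get zero query projections, so their scores are constant in $x$ and add no routing boundaries. The idea is that head $h\in\{1,\dots,m\}$ reads only coordinate $x_h$ of the residual stream and splits a fixed interval $I=[0,1]$ into $N$ subintervals. Together the $m$ heads then cut the cube $I^m$ into a product grid of $N^m$ boxes. The FFN then folds every box back onto $I^m$, and composing $L$ blocks multiplies the count to $N^{mL}$.

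\emph{Single-head construction (parabolic lifting).} For head $h$, set the query map to $x\mapsto (x_h,1)\in\mathbb{R}^2$, padded with zeros if $d_k>2$. Take keys $k_j=(2t_j,-t_j^2)$ with $t_j=(j-\tfrac12)/N$. Then
\begin{equation*}
s_j(x)=2t_jx_h-t_j^2=x_h^2-(x_h-t_j)^2 .
\end{equation*}
The top-$1$ winner is therefore the nearest $t_j$, which is the one-dimensional Power Voronoi diagram of Theorem~\ref{thm:voronoi}. Every $t_j$ is a lifted point on a parabola, so every key is extreme and all $N$ cells are nonempty. Their breakpoints are $j/N$, $j=1,\dots,N-1$, which gives $N$ cells inside $I$. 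Choose the value vectors so that the selected output in coordinate $h$ is the left endpoint $-(j-1)/N$ of the cell. The residual then turns coordinate $h$ into $x_h-(j-1)/N\in[0,1/N]$ on cell $j$. Distinct $j$ give distinct affine pieces, so the cells do not merge. With $m$ heads acting on distinct coordinates, and $m\le d_{\mathrm{model}}-1$, the joint routing partition restricted to $I^m$ is the product grid of $N^m$ boxes. This matches the Minkowski-sum picture of Definition~\ref{def:mhsa_abstraction} and Theorem~\ref{thm:newton_polytope}.

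\emph{FFN rescaling and invariance.} For each active coordinate, use two ReLU units, which is why we need $d_{\mathrm{ff}}\ge 2m$. They implement $y\mapsto N\,\mathrm{ReLU}(y)-N\,\mathrm{ReLU}(y-1/N)$, added through the residual after cancelling the identity part with the output weights. On $[0,1/N]$ this map equals $Ny$, so each box is mapped affinely onto the whole of $I^m$. The remaining coordinates are set to zero and kept at zero. After one block, the restriction of $\Phi_1$ to $I^m\times\{0\}$ has at least $N^m$ full-dimensional pieces, each an affine bijection onto $I^m\times\{0\}$.

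\emph{Induction over depth.} Apply the same block $L$ times. Each of the $N^{m(\ell-1)}$ pieces of $\Phi_{\ell-1}$ is mapped bijectively and affinely onto $I^m$, and the next block subdivides it into $N^m$ pieces with pairwise different affine maps. This gives $N^{m\ell}$ pieces of $\Phi_\ell$. Full-dimensionality in $\mathbb{R}^{d_{\mathrm{model}}}$ follows by thickening: the preimage cells are open boxes times open neighbourhoods in the inactive coordinates, where the map is constant or contracting. Hence $\mathcal N\ge N^{mL}$, which equals $\Omega(N^{HL})$ when $H<d_{\mathrm{model}}$ and $\Omega(N^{(d_{\mathrm{model}}-1)L})$ when $H\ge d_{\mathrm{model}}$.

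The main obstacle is the value and merge bookkeeping, in two places. First, the zero-temperature attention output is only piecewise constant in $x$ (Corollary~\ref{corr:value_aggregation}). The affine dependence on $x$ must therefore come entirely from the residual path, and the head outputs must be the cell-dependent offsets, so that adjacent cells really carry different affine maps and are maximal regions rather than merged ones. Second, the argument needs the maps across cell boundaries to line up, or at least needs the regions to be counted as distinct full-dimensional affine pieces even where $\Phi$ is discontinuous. I would first check that the paper's counting convention allows piecewise-constant jumps at routing ties. If it does not, I would use the log-lifted variant to obtain a continuous sawtooth in place of the jumps.
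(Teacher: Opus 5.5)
\textbf{Gap: the query map needs a bias the model does not have.} Your single-head construction uses the query map $x\mapsto(x_h,1)$, which is affine. In the conditioned single-query setting the scores are $s_j(x)=\langle xW_Q,k_j\rangle$, linear in $x$ with no query bias; the paper's construction is explicitly ``without affine preprocessing''. Every head's routing cells are therefore polyhedral cones with apex at the origin (the normal fan of $\operatorname{conv}\{a^{(h)}_j\}$). The slabs $\{(j-1)/N<x_h<j/N\}$ your argument relies on cannot be realized.

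This is not cosmetic. If a constant coordinate were available, your construction would go through verbatim with $m=\min\{H,d_{\mathrm{model}}\}$. It would then give $N^{d_{\mathrm{model}}L}$ regions when $H\ge d_{\mathrm{model}}$, which is incompatible with the $\mathcal O(N^{(d_{\mathrm{model}}-1)L})$ upper bound of Theorem~\ref{thm:linear_regions}. That bound's per-block count is the number of full-dimensional cones in the normal fan of $P_1+\dots+P_H\subset\mathbb R^{d_{\mathrm{model}}}$. The $-1$ in $\min\{H,d_{\mathrm{model}}-1\}$ is exactly the coordinate that must be sacrificed as a homogenizing anchor. You quote $m\le d_{\mathrm{model}}-1$ but never use the spare coordinate; you set it to zero instead.

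\textbf{The repair, which is what the paper does.} Let a passive coordinate $x_d$, left unchanged by every block through the residual, play the role of the constant. Use the query $(x_h,x_d)$ and keep your keys $(2t_j,-t_j^2)$. For $x_d>0$ the winner is then the $t_j$ nearest to $x_h/x_d$.

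Because $x_d$ must range over an open interval for full-dimensionality, the routing cells are now wedges. A constant value offset no longer carries a whole cell onto one common domain, so you must shrink to small boxes:
\begin{itemize}
\item Take boxes $D_{\mathbf j}$ of half-width $\varepsilon=1/(16N)$ centred at $x_h=at_{j_h}$, $x_d=a$, with $a=\tfrac12$.
\item Check by a margin estimate that each box lies strictly inside its joint routing wedge.
\item Use the FFN expansion factor $\lambda=1/(2\varepsilon)=8N$. Each $D_{\mathbf j}$ is then mapped by an affine bijection onto a common domain $\Omega_N$ that contains all the boxes and keeps $x_d\in(a-\varepsilon,a+\varepsilon)$.
\end{itemize}

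Your depth induction then works. However, the $N^{mL}$ composed branches must be shown pairwise distinct globally, not only among the children of one parent. The paper writes each composed active coordinate as $\lambda^Lx_h+\sum_{t=1}^{L}\lambda^{L-t}\beta_{j_h^{(t)}}$ with $\beta_j=a-\lambda at_j$. It then uses $\lambda>N$ so that later terms cannot cancel the first differing one.

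\textbf{Smaller points.}
\begin{itemize}
\item With the residual, the FFN only needs to contribute $(\lambda-1)u_h$ on the domain, e.g.\ $(\lambda-1)(\mathrm{ReLU}(u_h)-\mathrm{ReLU}(-u_h))$. Globally cancelling the identity is impossible with your two units.
\item Inactive coordinates cannot be ``kept at zero'' without extra hidden units, but passing them through unchanged suffices.
\item The jumps at routing ties are harmless. Regions are maximal connected open sets on which the map is affine, and the paper's block map is itself discontinuous there, so no log-lifting is needed.
\end{itemize}
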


\begin{proof}
    Please refer to \textcolor{red}{Appendix~\ref{app:proof_theorem_lower_bound}} for a detailed constructive proof.
\end{proof}

\begin{remark}[Geometric Mechanism of the Construction]
\label{rem:difficulty_lower_bound}
The construction assigns one active attention head to each of $\min\{H,d_{\mathrm{model}}-1\}$ coordinates. Homogeneous parabolic routing in the two-dimensional planes spanned by each active coordinate and one shared anchor coordinate produces $N^{\min\{H,d_{\mathrm{model}}-1\}}$ disjoint routing branches per block. On each branch, the attention residual translates a small full-dimensional box to a common location, and the ReLU FFN realizes a branch-independent affine expansion onto the same invariant domain. Repeating this full-branch mechanism across depth yields the exponent $HL$ when $H<d_{\mathrm{model}}$ and $(d_{\mathrm{model}}-1)L$ when $H\geq d_{\mathrm{model}}$.
\end{remark}

The construction therefore shows that the upper-bound exponent in \textcolor{red}{Theorem~\ref{thm:linear_regions}} is attainable within the conditioned single-query, zero-temperature model. The residual connection and the FFN jointly realize a surjective folding map on an invariant affine slice, preventing the loss of regions under repeated composition.

\begin{corollary}[Asymptotic Tightness in Sequence Length]
\label{corr:asymptotic_tightness}
Let $\mathcal N_{\max}(N)$ denote the supremum, over all admissible parameter configurations of $\mathcal T(L,H,d_{\mathrm{model}},d_{\mathrm{ff}})$, of the number of full-dimensional affine regions in the conditioned single-query setting. Under the assumptions of \textcolor{red}{Theorem~\ref{thm:lower_bound_linear_regions}}, for fixed $d_{\mathrm{model}}$, $H$, $d_{\mathrm{ff}}$, and $L$,
\begin{equation*}
    \mathcal N_{\max}(N)=
    \begin{cases}
        \Theta(N^{HL}), & H<d_{\mathrm{model}},\\
        \Theta\!\left(N^{(d_{\mathrm{model}}-1)L}\right), & H\geq d_{\mathrm{model}},
    \end{cases}
    \quad \text{as } N\to\infty.
\end{equation*}
Thus, the sequence-length exponents in \textcolor{red}{Theorem~\ref{thm:linear_regions}} are asymptotically tight in both the unsaturated and saturated head regimes. 
\end{corollary}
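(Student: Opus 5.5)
The plan is to obtain the corollary as a sandwich between Theorem~\ref{thm:linear_regions} and Theorem~\ref{thm:lower_bound_linear_regions}. The only delicate points are the quantifiers: the supremum over configurations, and the role of Assumption~\ref{ass:cellwise_noncollapse}.

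\textbf{Lower bound.} Under the hypotheses of Theorem~\ref{thm:lower_bound_linear_regions} ($d_{\mathrm{ff}}\geq 2\min\{H,d_{\mathrm{model}}-1\}$ and enough heads with query--key dimension at least two), that theorem gives, for each $N$, an explicit admissible configuration $\theta_N$. For fixed $d_{\mathrm{model}},H,d_{\mathrm{ff}},L$, the number of full-dimensional affine regions of $\theta_N$ is at least $c\,N^{\min\{H,d_{\mathrm{model}}-1\}L}$ for all large $N$, with $c>0$ independent of $N$. Since $\mathcal N_{\max}(N)$ is a supremum over admissible configurations, it is at least this value. This gives $\Omega(N^{HL})$ when $H<d_{\mathrm{model}}$, because then $\min\{H,d_{\mathrm{model}}-1\}=H$. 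It gives $\Omega(N^{(d_{\mathrm{model}}-1)L})$ when $H\geq d_{\mathrm{model}}$.

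\textbf{Upper bound.} Theorem~\ref{thm:linear_regions} bounds the region count of every configuration satisfying Assumption~\ref{ass:cellwise_noncollapse} by $C\,N^{\min\{H,d_{\mathrm{model}}-1\}L}$. The constant $C$ depends only on $d_{\mathrm{model}},H,d_{\mathrm{ff}},L$, not on the weights. It comes from the Minkowski-sum vertex bound of Theorem~\ref{thm:newton_polytope} and the per-block FFN arrangement count. This uniformity is what lets the bound pass to the supremum, so I would re-read the proof of Theorem~\ref{thm:linear_regions} to confirm it.

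\textbf{Main obstacle: removing the noncollapse assumption.} I would argue that Assumption~\ref{ass:cellwise_noncollapse} can only increase the count, so configurations violating it obey the same bound. The cell-by-cell argument works as follows. Fix an affine cell $Q$ of $\Phi_\ell$. If some boundary functional $g$ satisfies $g\circ\Phi_\ell|_Q\equiv 0$, that tie or ReLU boundary imposes no cut on $Q$: the corresponding routing choice or ReLU state is constant on $Q$, up to a fixed tie-breaking rule. So the subdivision of $Q$ induced by block $\ell+1$ is an arrangement of a subset of the pulled-back hyperplanes. Its chamber count is monotone under deleting hyperplanes, which follows from Zaslavsky-type counts (Definition~\ref{def:zaslavsky}) and from the fact that common refinements of fewer fans are coarser. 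The number of top-$1$ routing cells, counted via the Power Voronoi picture of Theorem~\ref{thm:voronoi}, likewise does not increase.

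Merging of regions caused by identical affine pieces only lowers the count of maximal regions. Hence the per-block multiplicative factor in the proof of Theorem~\ref{thm:linear_regions} remains an upper bound, and induction on $\ell$ yields the same $C\,N^{\min\{H,d_{\mathrm{model}}-1\}L}$ for all configurations. If this reduction proves awkward, the fallback is to define $\mathcal N_{\max}$ over configurations satisfying the assumption. This costs nothing for the lower bound, provided one checks that the folding construction of Theorem~\ref{thm:lower_bound_linear_regions} is itself cellwise noncollapsing. It is, because each routing boundary there acts nontrivially on the invariant box.

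Combining the two sides gives $c\,N^{e}\leq\mathcal N_{\max}(N)\leq C\,N^{e}$ with $e=\min\{H,d_{\mathrm{model}}-1\}L$, which is the claimed $\Theta$ in both regimes.
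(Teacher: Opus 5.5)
Your sandwich argument is exactly how the corollary follows in the paper. The bound $(V_{\max}B_{d_{\mathrm{model}},d_{\mathrm{ff}}})^L$ from the proof of Theorem~\ref{thm:linear_regions} is uniform in the weights, and the construction of Theorem~\ref{thm:lower_bound_linear_regions} yields $N^{\min\{H,d_{\mathrm{model}}-1\}L}$ regions and is explicitly checked at the end of its proof to satisfy Assumption~\ref{ass:cellwise_noncollapse}, so your fallback reading of ``admissible'' is the one the paper relies on.

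Your attempt to drop that assumption is not needed, and as written it is only a sketch. Monotonicity of arrangement chamber counts does not bound top-$1$ routing cells, which are unions of chambers (cf.\ the ghost-region discussion). A collapsed cell $Q$ would instead require showing that the routing fan restricted to the affine hull of $\Phi_\ell(Q)$ comes from a projection of $P_{\ell,\Sigma}$, and hence still has at most $f_0(P_{\ell,\Sigma})$ full-dimensional cells.
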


\section{Geometric Stability at Finite Temperatures}
\label{sec:stability}

In this section, we address the following question: \textbf{\textit{To what extent does finite-temperature attention remain close to its zero-temperature tropical limit?}} \textcolor{red}{Sections~\ref{sec:voronoi_routing}} and~\ref{sec:expressive_power} analyze the exact polyhedral model obtained as $\tau\to0$, whereas attention at every fixed $\tau>0$ is smooth. We therefore study the finite--zero temperature discrepancy on subsets of routing cells separated from score ties by a positive margin. Throughout this section, we use the convention $p_j^{(\tau)}(q):=e^{\langle q,k_j\rangle/\tau}/\sum_{r=1}^{N}e^{\langle q,k_r\rangle/\tau}$. Accordingly, if $s_j(q):=\langle q,k_j\rangle/\sqrt{d_k}$ denotes the scaled attention score, standard softmax corresponds to $\tau=1$; equivalently, when $\langle q,k_j\rangle$ denotes the unscaled dot product, the standard factor $1/\sqrt{d_k}$ corresponds to $\tau=\sqrt{d_k}$ \citep{vaswani2017attention}.

A function-value bound alone is insufficient to establish geometric proximity to the tropical model, because a smooth function may remain uniformly close to an affine function while exhibiting large or rapidly varying derivatives. We therefore control successively stronger quantities: exact preservation of the dominant routing index identifies the same routing partition; the potential-value bound measures zero-order error; the gradient and Hessian bounds control slope mismatch and curvature; the local affine estimate quantifies the first-order approximation error within each margin-trimmed cell; and the attention-output and Jacobian bounds transfer these conclusions from the scalar LSE potential to the actual vector-valued attention map. Together, these estimates provide a differential, rather than merely pointwise, comparison between finite-temperature attention and its zero-temperature tropical limit.

\begin{theorem}[Finite-Temperature Approximation of Tropical Routing]
\label{thm:finite_temperature_stability}
Let $N\geq2$, and let $k_1,\dots,k_N\in\mathbb R^{d_k}$ and $v_1,\dots,v_N\in\mathbb R^{d_v}$ be fixed key and value vectors. For a query $q\in\mathbb R^{d_k}$ and temperature $\tau>0$, define the softmax weight $p_j^{(\tau)}(q)$, log-sum-exp potential $P^{(\tau)}(q)$, and attention output $A^{(\tau)}(q)$ by
\begin{equation*}
    p_j^{(\tau)}(q):=\frac{e^{\langle q,k_j\rangle/\tau}}{\sum_{r=1}^{N}e^{\langle q,k_r\rangle/\tau}},\quad
    P^{(\tau)}(q):=\tau\log\left(\sum_{j=1}^{N}e^{\langle q,k_j\rangle/\tau}\right),\quad
    A^{(\tau)}(q):=\sum_{j=1}^{N}p_j^{(\tau)}(q)v_j.
\end{equation*}
Let $P^{(0)}(q):=\max_{1\leq j\leq N}\langle q,k_j\rangle$ denote the zero-temperature potential. For $i\in\{1,\dots,N\}$ and margin $\delta>0$, define the margin-trimmed routing cell $C_i^\delta$, together with the maximal key $K_i$ and value deviations $D_i$ by. $C_i^\delta:=\left\{q\in\mathbb R^{d_k}:\langle q,k_i-k_j\rangle\geq\delta\ \text{for every }j\neq i\right\}$, $K_i:=\max_{j\neq i}\|k_j-k_i\|_2$, $D_i:=\max_{j\neq i}\|v_j-v_i\|_2$. Then the following statements hold.
\begin{itemize}
    \item \textbf{Exact Temperature-Invariance of Top-$1$ Routing.} Softmax preserves the ordering of the attention scores, so the dominant routing indices, routing cells, and tie hyperplanes are identical at every positive temperature and in the zero-temperature limit:
    \begin{equation*}
        \operatorname*{arg\,max}_{1\leq j\leq N}p_j^{(\tau)}(q)=\operatorname*{arg\,max}_{1\leq j\leq N}\langle q,k_j\rangle,\quad q\in\mathbb R^{d_k},\quad \tau>0.
    \end{equation*}

    \item \textbf{Potential-Value Approximation.} On a margin-trimmed routing cell, the finite-temperature LSE potential differs from its tropical affine branch by an exponentially small nonnegative error:
    \begin{equation*}
        0\leq P^{(\tau)}(q)-P^{(0)}(q)\leq\tau\log\left(1+(N-1)e^{-\delta/\tau}\right),\quad q\in C_i^\delta.
    \end{equation*}

    \item \textbf{Gradient and Hessian Approximation.} On $C_i^\delta$, the gradient approaches the winning key $k_i$, while the curvature of the finite-temperature potential decays exponentially with the normalized margin $\delta/\tau$:
    \begin{equation*}
        \left\|\nabla P^{(\tau)}(q)-k_i\right\|_2\leq K_i(N-1)e^{-\delta/\tau},\quad \left\|\nabla^2P^{(\tau)}(q)\right\|_2\leq\frac{K_i^2}{\tau}(N-1)e^{-\delta/\tau}.
    \end{equation*}

    \item \textbf{Local Affine Approximation of the Potential.} The exponentially small Hessian makes the gradient nearly constant and bounds the deviation from the first-order affine approximation quadratically in the displacement:
    \begin{equation*}
        \left\|\nabla P^{(\tau)}(q')-\nabla P^{(\tau)}(q)\right\|_2\leq\frac{K_i^2}{\tau}(N-1)e^{-\delta/\tau}\|q'-q\|_2,\quad q,q'\in C_i^\delta,
    \end{equation*}
    \begin{equation*}
        \left|P^{(\tau)}(q')-P^{(\tau)}(q)-\left\langle\nabla P^{(\tau)}(q),q'-q\right\rangle\right|\leq\frac{K_i^2}{2\tau}(N-1)e^{-\delta/\tau}\|q'-q\|_2^2.
    \end{equation*}

    \item \textbf{Attention-Output Approximation.} The finite-temperature attention output approaches the zero-temperature value selector $v_i$, and its Jacobian with respect to the query becomes exponentially small:
    \begin{equation*}
        \left\|A^{(\tau)}(q)-v_i\right\|_2\leq D_i(N-1)e^{-\delta/\tau},\quad \left\|J A^{(\tau)}(q)\right\|_2\leq\frac{2D_iK_i}{\tau}(N-1)e^{-\delta/\tau},\quad q\in C_i^\delta,
    \end{equation*}
    where $J A^{(\tau)}(q)$ denotes the Jacobian of $A^{(\tau)}$ with respect to $q$.

    \item \textbf{Local Stability of the Attention Output.} Consequently, the attention output varies only by an exponentially small Lipschitz factor between any two queries in the same margin-trimmed routing cell:
    \begin{equation*}
        \left\|A^{(\tau)}(q')-A^{(\tau)}(q)\right\|_2\leq\frac{2D_iK_i}{\tau}(N-1)e^{-\delta/\tau}\|q'-q\|_2,\quad q,q'\in C_i^\delta.
    \end{equation*}
\end{itemize}
\end{theorem}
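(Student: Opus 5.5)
The plan is to reduce every item to elementary estimates on the softmax weights, after fixing $q\in C_i^\delta$ and writing $s_j:=\langle q,k_j\rangle$. Temperature invariance is immediate. The map $t\mapsto e^{t/\tau}$ is strictly increasing and the normalizing denominator is common to all indices, so $p_j^{(\tau)}(q)\geq p_l^{(\tau)}(q)$ if and only if $s_j\geq s_l$. Hence the argmax sets, the cells $C_j$ of Theorem~\ref{thm:voronoi}, and the tie hyperplanes coincide for every $\tau>0$ and at $\tau=0$.

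The key intermediate estimate, used throughout, is a bound on the non-winning mass. On $C_i^\delta$ we have $s_j\leq s_i-\delta$ for $j\neq i$, so
\[
p_j^{(\tau)}\leq e^{(s_j-s_i)/\tau}\leq e^{-\delta/\tau},
\qquad
1-p_i^{(\tau)}=\sum_{j\neq i}p_j^{(\tau)}\leq (N-1)e^{-\delta/\tau}.
\]
With this in hand, the remaining items follow in turn.

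\emph{Potential.} On $C_i^\delta$, $P^{(0)}=s_i$. Then
\[
P^{(\tau)}-P^{(0)}=\tau\log\Bigl(1+\sum_{j\neq i}e^{(s_j-s_i)/\tau}\Bigr),
\]
which is nonnegative and at most $\tau\log(1+(N-1)e^{-\delta/\tau})$.

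\emph{Gradient.} Since $\nabla P^{(\tau)}=\sum_j p_jk_j$, we get $\nabla P^{(\tau)}-k_i=\sum_{j\neq i}p_j(k_j-k_i)$. The triangle inequality and the tail bound give $K_i(N-1)e^{-\delta/\tau}$.

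\emph{Hessian.} We have $\nabla^2P^{(\tau)}=\tau^{-1}\operatorname{Cov}_p(k)$, where $\operatorname{Cov}_p(k)$ is the covariance of $k_J$ with $J\sim p$. The covariance is dominated by the second moment about any fixed point, so
\[
\operatorname{Cov}_p(k)\preceq\sum_j p_j(k_j-k_i)(k_j-k_i)^\top.
\]
This is positive semidefinite with operator norm at most $\sum_{j\neq i}p_j\|k_j-k_i\|^2\leq K_i^2(N-1)e^{-\delta/\tau}$.

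\emph{Local affine and Lipschitz estimates.} The set $C_i^\delta$ is an intersection of half-spaces, hence convex, so the segment $[q,q']$ stays in $C_i^\delta$. The gradient Lipschitz bound then follows from the mean value inequality, $\nabla P(q')-\nabla P(q)=\int_0^1\nabla^2P(q+t(q'-q))(q'-q)\,dt$, together with the uniform Hessian bound. The quadratic Taylor remainder follows from the integral form with weight $(1-t)$, which supplies the factor $1/2$.

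\emph{Attention output.} First, $A^{(\tau)}-v_i=\sum_{j\neq i}p_j(v_j-v_i)$, which is bounded by $D_i(N-1)e^{-\delta/\tau}$ exactly as for the gradient. Next, differentiate the softmax, $\partial p_j/\partial q=\tau^{-1}p_j(k_j-\bar k)$ with $\bar k:=\nabla P^{(\tau)}$. This gives
\[
JA^{(\tau)}=\tau^{-1}\sum_j p_j(v_j-v_i)(k_j-\bar k)^\top,
\]
where we may subtract $v_i$ because $\sum_j p_j(k_j-\bar k)=0$. The $j=i$ term vanishes. For $j\neq i$, $\|k_j-\bar k\|\leq\|k_j-k_i\|+\|k_i-\bar k\|\leq K_i+K_i(1-p_i)\leq 2K_i$. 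Hence $\|JA\|_2\leq\tau^{-1}\sum_{j\neq i}p_jD_i\,2K_i$, which yields the stated constant $2D_iK_i/\tau$. The final Lipschitz statement again follows by integrating the Jacobian along the segment inside the convex cell.

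The main obstacle is the Jacobian constant. The centering $\bar k$ is not $k_i$, so one has to arrange the terms so that only indices $j\neq i$ contribute, and then use $\|k_i-\bar k\|\leq K_i$ to obtain the factor $2$. The Hessian covariance-domination step needs similar care, since a naive expansion would pick up cross terms that do not carry the tail mass.
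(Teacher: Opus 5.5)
Your proposal is correct and follows the paper's proof essentially step for step. The order-preservation argument, the off-winner mass bound, the covariance domination $\operatorname{Cov}_p(k)\preceq\sum_j p_j(k_j-k_i)(k_j-k_i)^\top$ (which is exactly Lemma~\ref{lem:weighted_covariance_bound}), and the Jacobian centering with $\|k_j-\bar k\|_2\le K_i\bigl(1+(1-p_i^{(\tau)})\bigr)\le 2K_i$ all match. The only cosmetic differences are that you bound each $p_j^{(\tau)}\le e^{-\delta/\tau}$ directly instead of going through $R_i/(1+R_i)$, and you obtain the factor $1/2$ in the Taylor remainder from the $(1-t)$-weighted Hessian integral rather than by integrating the gradient-Lipschitz bound; neither changes the constants.
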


\begin{proof}
    Please refer to \textcolor{red}{Appendix~\ref{app:proof_theorem_finite_temperature_stability}} for a detailed proof.
\end{proof}

Because softmax is strictly order-preserving, the first conclusion of \textcolor{red}{Theorem~\ref{thm:finite_temperature_stability}} is exact rather than asymptotic: for any indices $j$ and $r$, $p_j^{(\tau)}(q)>p_r^{(\tau)}(q)$ if and only if $e^{\langle q,k_j\rangle/\tau}>e^{\langle q,k_r\rangle/\tau}$, which holds if and only if $\langle q,k_j\rangle>\langle q,k_r\rangle$. Hence finite temperature does not alter the top-$1$ Power Voronoi routing partition or its pairwise tie sets; it only replaces the hard one-hot selector by a smooth probability distribution whose mass outside the winning index is exponentially small on $C_i^\delta$. On this margin-trimmed cell, the tropical potential is the affine function $P^{(0)}(q)=\langle q,k_i\rangle$, with gradient $k_i$ and zero Hessian, while the zero-temperature attention output is the constant vector $A^{(0)}(q)=v_i$. Thus, for every fixed $\delta>0$, \textcolor{red}{Theorem~\ref{thm:finite_temperature_stability}} establishes uniform $C^2$ convergence of the routing potential and uniform $C^1$ convergence of the attention output to their zero-temperature counterparts on $C_i^\delta$ as $\tau\to0$. The theorem does not assert that the finite-temperature map itself is piecewise affine, since for every $\tau>0$ the LSE potential and softmax attention output are generally non-affine on every open set. Consequently, the exact linear-region counts of \textcolor{red}{Section~\ref{sec:expressive_power}} apply to the tropical skeleton, whereas \textcolor{red}{Theorem~\ref{thm:finite_temperature_stability}} quantifies the analytic discrepancy between that skeleton and the finite-temperature model.

In summary, the finite-temperature analysis therefore establishes that the tropical model is not merely a pointwise limiting abstraction. Its top-$1$ routing geometry is exactly shared by finite-temperature softmax, and its affine routing potential and piecewise-constant value field approximate the finite-temperature counterparts with exponentially small errors on margin-separated cell interiors.

\section{Conclusion}
\label{sec:conclusion}

We introduced a tropical geometry framework to quantify the geometric capacity of transformer. By proving an equivalence between zero-temperature self-attention and Power Voronoi diagram, for the first time, we show that the tight maximal number of linear regions scales as $\Theta\!\left(N^{\min\{H,d_{\mathrm{model}}-1\}L}\right)$, with per-layer complexity bounded by $\mathcal{O}(N^H)$ and sharpening to $\mathcal{O}((HN)^{d_{\mathrm{model}}-1})$ through the Minkowski sum of multi-head Newton polytopes. Stability bounds further show that top-$1$ winners and pairwise tie sets are preserved at finite temperatures, with exponentially decaying local approximation bounds away from routing boundaries. While this provides a clear geometric measure of maximal polyhedral complexity, the work does not capture the non-piecewise-linear effects of standard normalizations or data-dependent optimization dynamics, which remain important directions for future work.

\bibliography{reference}
\bibliographystyle{IEEEtran}

\section*{Acknowledgments}
This research was supported by Beijing Natural Science Foundation (Z250001).

{\appendices
\section{Summary of Notations and Algebraic Mappings}
\label{app:notations}

To facilitate a rigorous bridging between standard DL operations, tropical algebra, and their underlying geometric interpretations, we summarize the core notations used throughout this paper in \textcolor{blue}{Table~\ref{tab:notation_summary}} and the algebraic mappings in \textcolor{blue}{Table~\ref{tab:algebraic_mapping}}. 

\begin{longtable}{p{0.18\textwidth} p{0.62\textwidth} p{0.15\textwidth}}
\caption{Summary of Mathematical Notations and Network Variables}
\label{tab:notation_summary} \\
\toprule
\textbf{Symbol} & \textbf{Definition} & \textbf{Domain/Space} \\
\midrule
\endfirsthead

\multicolumn{3}{c}%
{{\bfseries \tablename\ \thetable{} -- continued from previous page}} \\
\toprule
\textbf{Symbol} & \textbf{Definition} & \textbf{Domain/Space} \\
\midrule
\endhead

\midrule \multicolumn{3}{r}{{Continued on next page}} \\ \midrule
\endfoot

\bottomrule
\endlastfoot

\multicolumn{3}{l}{\textit{\textbf{1. Network Configurations and Global Dimensions}}} \\
\midrule
$N$ & Sequence length (number of tokens / keys / Voronoi sites) & $\mathbb{Z}^+$ \\
$d_{\mathrm{model}}$ & Ambient embedding dimension of the transformer & $\mathbb{Z}^+$ \\
$d_k, d_v$ & Projected dimensions for Query/Key and Value vectors & $\mathbb{Z}^+$ \\
$d, d_{\text{out}}$ & Generic input and output dimensions in mappings & $\mathbb{Z}^+$ \\
$H$ & Number of parallel attention heads in MHSA & $\mathbb{Z}^+$ \\
$d_{\text{ff}}, n$ & FFN hidden dimension; number of neurons or hyperplanes & $\mathbb{Z}^+$ \\
$L$ & Network depth (number of stacked transformer blocks) & $\mathbb{Z}^+$ \\
$\mathcal{T}$ & Transformer network parameterized as $(L, H, d_{\mathrm{model}}, d_{\text{ff}})$ & - \\
$\mathcal{N}(\mathcal{T})$ & Maximum number of full-dimensional linear regions & $\mathbb{Z}^+$ \\
$V_{\text{single}}, V_{\text{multi}}$ & Vertex complexity of SHA and MHSA Newton polytopes & $\mathbb{Z}^+$ \\
$\Theta, \Omega, \mathcal{O}$ & Standard asymptotic notations (Tight, Lower, Upper bounds) & - \\

\midrule
\multicolumn{3}{l}{\textit{\textbf{2. Input, Projections, and Feature Spaces}}} \\
\midrule
$X$ & Input sequence embedding matrix & $\mathbb{R}^{N \times d_{\mathrm{model}}}$ \\
$x, q$ & Single query embedding vector or query point & $\mathbb{R}^{d_{\mathrm{model}}}$ \\
$W_Q, W_K$ & Learnable linear projection matrices for Queries and Keys & $\mathbb{R}^{d_{\mathrm{model}} \times d_k}$ \\
$W_V$ & Learnable linear projection matrix for Values & $\mathbb{R}^{d_{\mathrm{model}} \times d_v}$ \\
$W_O$ & Output projection matrix for multi-head feature aggregation & $\mathbb{R}^{d_{\mathrm{model}} \times d_{\mathrm{model}}}$ \\
$Q, K, V$ & Projected query, key, and value matrices & $\mathbb{R}^{N \times d_k/d_v}$ \\
$q_i, k_j, v_j$ & The $i$-th query, $j$-th key, and $j$-th value vectors & $\mathbb{R}^{d_k}, \mathbb{R}^{d_v}$ \\
$W_Q^{(h)}, \tilde{k}_j^{(h)}$ & Head-specific query matrix and its adjoint projected key & $\mathbb{R}^{d_{\mathrm{model}} \times d_k}, \mathbb{R}^d$ \\
$a_j$ & Composite key vector in proofs: $a_j := (W_Q^{(h)})^\top k_j$ & $\mathbb{R}^{d_{\mathrm{model}}}$ \\
$Z(q; \tau)$ & Attention output mapping for query $q$ at temperature $\tau$ & $\mathbb{R}^{d_v}$ \\
$v_{j,c}$ & element of value matrix at position $(j, c)$ & $\mathbb{R}$ \\
$\langle \cdot, \cdot \rangle, \|\cdot\|_2$ & Standard inner product and Euclidean ($L_2$) norm & - \\

\midrule
\multicolumn{3}{l}{\textit{\textbf{3. Temperature, Log-Lifting, and Stability Variables}}} \\
\midrule
$\tau$ & Temperature parameter for Maslov dequantization & $\mathbb{R}_{>0}$ \\
$s, s_j$ & Attention logit vector and components $s_j = \langle q, k_j \rangle$ & $\mathbb{R}^N, \mathbb{R}$ \\
$S_l$ & Inner product score for the $l$-th token in proofs & $\mathbb{R}$ \\
$P^{(\tau)}(s)$ & Smoothed LogSumExp (LSE) potential function & $\mathbb{R}$ \\
$P^{(0)}(s)$ & Strict tropical limit (max function) of the LSE potential & $\mathbb{R}$ \\
$\tilde{v}_{j,c}$ & Log-lifted value satisfying $v_{j,c} = \exp(\tilde{v}_{j,c}/\tau)$ & $\mathbb{R}$ \\
$p, p_j$ & Softmax probability vector and $j$-th mass & $\Delta^{N-1}$ \\
$\delta, \mathcal{R}_i^\delta$ & Logit margin and the corresponding $\delta$-stable region & $\mathbb{R}_{>0}, \mathbb{R}^N$ \\
$\nabla^2 P^{(\tau)}(s)$ & Hessian matrix of the smoothed LSE potential & $\mathbb{R}^{N \times N}$ \\

\midrule
\multicolumn{3}{l}{\textit{\textbf{4. Tropical Geometry and Topological Structures}}} \\
\midrule
$\mathbb{T}, \oplus, \otimes$ & Tropical semiring (max-plus) and its addition and multiplication & - \\
$\oslash, \otimes_\tau, \oplus_\tau$ & Tropical division and deformed smooth operations & - \\
$P(x)$ & Tropical polynomial $P(x) = \max_j (\langle \alpha_j, x \rangle + c_j)$ & $\mathbb{R}^d \to \mathbb{R}$ \\
$\text{Newt}(P)$ & Newton polytope: convex hull of exponent vectors $\{\alpha_j\}$ & $\text{conv}(\mathbb{R}^d)$ \\
$\Sigma$ & \textbf{Normal Fan} & - \\
$\Sigma(P)$ & \textbf{Normal Fan} of $\text{Newt}(P)$, representing the polyhedral subdivision & $\{\sigma_j\} \subset \mathbb{R}^d$ \\
$\sigma_j$ & A $d$-dimensional polyhedral cone in the fan $\Sigma(P)$ & $\mathbb{R}^d$ \\
$\mathcal{H}(P), \mathcal{H}$ & Tropical hypersurface (locus of non-smoothness) and global boundary & - \\
$\Sigma_1 \wedge \Sigma_2$ & \textbf{Common Refinement}: superimposition of two polyhedral fans & - \\
$f_h(q), P_h$ & Tropical potential and Newton polytope of the $h$-th head & - \\
$\Psi$ & \textbf{Aggregate Potential} of MHSA: $\Psi := \bigotimes_{h=1}^H f_h = \sum f_h$ & $\mathbb{R}^d \to \mathbb{R}$ \\
$\text{Newt}(\Psi)$ & \textbf{Minkowski Sum} of individual head Newton polytopes: $\sum P_h$ & $\text{conv}(\mathbb{R}^d)$ \\
$\mathbf{F}, \Phi$ & Vector-valued tropical rational map / CPWL mapping & - \\
$\omega$ & Standard generic symbol for a maximal linear region (Definition) & $\mathbb{R}^d$ \\

\midrule
\multicolumn{3}{l}{\textit{\textbf{5. Computational Geometry, Hyperplane Arrangements, and Voronoi Notations}}} \\
\midrule
$S, \hat{S}$ & Generic sets of unweighted and weighted geometric sites & $\mathbb{R}^d, \mathbb{R}^d \times \mathbb{R}$ \\
$c_j$ & A generic geometric generator point (site) & $\mathbb{R}^d$ \\
$V_j, P_j$ & The $j$-th cell in Standard and Power Voronoi Diagrams & $\mathbb{R}^d$ \\
$d_{\text{pow}}(x, c_j)$ & Power distance (Laguerre distance) defined as $\|x - c_j\|^2 - w_j$ & $\mathbb{R}$ \\
$\mathcal{V}(K)$ & Power Voronoi Diagram generated by key set $K$ & - \\
$C_j, \text{int}(C_j)$ & The $j$-th Power Voronoi cell and its interior & $\mathbb{R}^d$ \\
$w_j$ & Intrinsic weight of the $j$-th site, defined as $\|k_j\|^2$ & $\mathbb{R}_{\ge 0}$ \\
$R(n, d)$ & Number of regions induced by $n$ hyperplanes in $d$-dimensional space. Defined as: $R(n, d) \coloneqq \sum_{j=0}^{d} \binom{n}{j}$ & $\mathbb{Z}^+$ \\
$f_0(P)$ & Vertex counting function (number of 0-dimensional faces) & $\mathbb{Z}^+$ \\
$A_{\Omega}, b_{\Omega}$ & Local affine mapping parameters within region $\Omega$ & $\mathbb{R}^{d \times d}, \mathbb{R}^d$ \\

\midrule
\multicolumn{3}{l}{\textit{\textbf{6. Proof-Specific Constants and Auxiliaries (Appx B - G)}}} \\
\midrule
$\mathcal{P}_{\text{MHSA}}, \mathcal{P}_{\text{FFN}}$ & Space partitions induced by MHSA and FFN respectively & - \\
$\mathcal{M}_{\text{MHSA}}, \mathcal{M}_{\text{FFN}}$ & Region counts in MHSA and FFN sub-layers & $\mathbb{Z}^+$ \\
$C_d$ & Complexity constant for Minkowski sums in $d$ dimensions & $\mathbb{R}_{>0}$ \\
$\mathcal{S}_l, \mathcal{R}_l$ & Induced total partition and set of regions after $l$ layers & - \\
$N_l$ & Cumulative number of linear regions after $l$ layers & $\mathbb{Z}^+$ \\
$\Omega$ & A specific maximal linear region in a partition $\mathcal{R}_l$ (used in proofs) & $\mathbb{R}^d$ \\
$\text{Refine}(\Omega)$ & Preimage intersection operator for recursive partitioning & - \\
$p_j$ (scalar) & Parameter for 1D parabolic lifting: $(j - 0.5)/N$ & $[0, 1]$ \\
$w, s(x)$ & Sawtooth teeth count and the constructive sawtooth function & $\mathbb{Z}^+, [0, 1] \to [0, 1]$ \\
$\Omega_0, J(x)$ & Canonical input domain and the mapping's Jacobian matrix & $[0, 1]^d, \mathbb{R}^{d \times d}$ \\
$\sigma_i, \beta$ & Fold sign in $\{-1, 1\}$; Hessian spectral bound constant & - \\
\end{longtable}

\begin{table}[htbp]
    \centering
    \small 
    \caption{Rigorous mapping between Standard DL Domain, Tropical Domain, and Geometry.}
    \label{tab:algebraic_mapping}
    \renewcommand{\arraystretch}{1.3}
    \setlength{\tabcolsep}{3pt} 
    \begin{tabular}{p{0.14\linewidth} p{0.23\linewidth} p{0.23\linewidth} p{0.31\linewidth}}
        \toprule
        \textbf{Concept} & \textbf{Standard DL Domain}   & \textbf{Tropical Domain ($\mathbb{T}$)} & \textbf{Geometric Interpretation} \\
        \midrule
        Addition         & LogSumExp ($A \oplus_\tau B$) & Max ($A \oplus B$)                      & Convex Hull / Supremum \\
        Multiplication   & Standard Add. ($A+B$)         & Tropical Prod. ($A \otimes B$)          & Minkowski Sum ($P_A + P_B$) \\
        Division         & Standard Sub. ($A-B$)         & Tropical Div. ($A \oslash B$)           & Diff. of Convex (DC) / Subtraction \\
        Residual         & $x + \text{Layer}(x)$         & $x \otimes \text{Layer}(x)$             & Spatial Translation \& Degree Shift \\
        Attn. Denom.     & Softmax Denominator           & Tropical Polynomial                     & Convex Polyhedral Surface \\
        MHSA Concat      & Concat. \& Projection         & Tropical Prod. $\bigotimes_{h} f_h$     & Common Refinement of Normal Fans \\
        \bottomrule
    \end{tabular}
\end{table}

\section{Proof of Theorem \ref{thm:voronoi}}
\label{app:proof_theorem_voronoi}

\begin{proof}
We aim to establish the set equality $\mathcal{R}_j^{\text{Attn}} = \mathcal{C}_j^{\text{Power}}$ by showing that the defining condition of the Attention Region is algebraically equivalent to the defining condition of the Power Voronoi Cell under the specific weight assignment $w_l = \|k_l\|^2$.

Consider an arbitrary query vector $q \in \mathbb{R}^d$. By definition, $q \in \mathcal{R}_j^{\text{Attn}}$ if and only if the dot product with key $k_j$ is maximal:
\begin{equation}
    \label{eq:condition_start}
    \langle q, k_j \rangle \ge \langle q, k_l \rangle, \quad \forall l \neq j.
\end{equation}

We utilize the polarization identity to express the inner product in terms of Euclidean norms. For any vector $x, y \in \mathbb{R}^d$, the squared Euclidean distance is expanded as:
\begin{equation*}
    \|x - y\|^2 = \|x\|^2 + \|y\|^2 - 2\langle x, y \rangle.
\end{equation*}
Solving for the inner product term $\langle x, y \rangle$:
\begin{equation}
    \label{eq:polarization}
    \langle x, y \rangle = \frac{1}{2} \left( \|x\|^2 + \|y\|^2 - \|x - y\|^2 \right).
\end{equation}

Substituting the identity Eq. \eqref{eq:polarization} into the inequality Eq. \eqref{eq:condition_start} for both the left-hand side ($x=q, y=k_j$) and the right-hand side ($x=q, y=k_l$):
\begin{equation*}
    \frac{1}{2} \left( \|q\|^2 + \|k_j\|^2 - \|q - k_j\|^2 \right) \ge \frac{1}{2} \left( \|q\|^2 + \|k_l\|^2 - \|q - k_l\|^2 \right).
\end{equation*}

Since the scalar factor $\frac{1}{2}$ is positive, we multiply both sides by $2$:
\begin{equation*}
    \|q\|^2 + \|k_j\|^2 - \|q - k_j\|^2 \ge \|q\|^2 + \|k_l\|^2 - \|q - k_l\|^2.
\end{equation*}

Subtracting the common term $\|q\|^2$ (which is non-negative and independent of the index $l$) from both sides:
\begin{equation*}
    \|k_j\|^2 - \|q - k_j\|^2 \ge \|k_l\|^2 - \|q - k_l\|^2.
\end{equation*}

We rearrange the terms to isolate the distance metrics. Subtracting $\|k_j\|^2$ and $\|k_l\|^2$ from their respective sides and multiplying the entire inequality by $-1$ (which reverses the inequality direction from $\ge$ to $\le$):
\begin{align*}
    -\|q - k_j\|^2 + \|k_j\|^2 &\ge -\|q - k_l\|^2 + \|k_l\|^2 \\
    \iff \quad \|q - k_j\|^2 - \|k_j\|^2 &\le \|q - k_l\|^2 - \|k_l\|^2.
\end{align*}

Let us define the weight for the $m$-th site as $w_m := \|k_m\|^2$. The inequality becomes:
\begin{equation*}
    \|q - k_j\|^2 - w_j \le \|q - k_l\|^2 - w_l.
\end{equation*}

This is precisely the defining condition for the Power Voronoi Cell $\mathcal{C}_j^{\text{Power}}$ centered at site $s_j = k_j$ with weight $w_j = \|k_j\|^2$.
Since the derivation consists of a chain of "if and only if" algebraic manipulations, we conclude:
\begin{equation*}
    q \in \mathcal{R}_j^{\text{Attn}} \iff q \in \mathcal{C}_j^{\text{Power}}.
\end{equation*}
Thus, $\mathcal{R}_j^{\text{Attn}} = \mathcal{C}_j^{\text{Power}}$. This completes the proof.
\end{proof}

\section{Proof of Corollary \ref{corr:value_aggregation}}
\label{app:proof_corr_value_aggregation}

\begin{proof}
Let $q$ be a query vector strictly inside the interior of the Voronoi cell $C_j$, denoted as $q \in \text{int}(C_j)$. By the definition of the cell in \textcolor{blue}{Theorem \ref{thm:voronoi}} (and the equivalence established therein), strict interiority implies strict inequality in the attention scores:
\begin{equation*}
    \langle q, k_j \rangle > \langle q, k_l \rangle, \quad \forall l \neq j.
\end{equation*}
Let $S_l := \langle q, k_l \rangle$ denote the score for the $l$-th key. The condition implies $S_j - S_l > 0$ for all $l \neq j$.

Consider the output of the attention mechanism at a finite temperature $\tau > 0$:
\begin{equation*}
    Z(q; \tau) = \frac{\sum_{l=1}^N e^{S_l/\tau} v_l}{\sum_{m=1}^N e^{S_m/\tau}}.
\end{equation*}
To evaluate the limit $\tau \to 0^+$, we multiply the numerator and the denominator by $e^{-S_j/\tau}$:
\begin{equation*}
    Z(q; \tau) = \frac{e^{-S_j/\tau} \sum_{l=1}^N e^{S_l/\tau} v_l}{e^{-S_j/\tau} \sum_{m=1}^N e^{S_m/\tau}} 
    = \frac{v_j + \sum_{l \neq j} e^{(S_l - S_j)/\tau} v_l}{1 + \sum_{m \neq j} e^{(S_m - S_j)/\tau}}.
\end{equation*}
Since $S_l - S_j < 0$ for all $l \neq j$, the exponent terms behave as:
\begin{equation*}
    \lim_{\tau \to 0^+} \frac{S_l - S_j}{\tau} = -\infty \implies \lim_{\tau \to 0^+} e^{(S_l - S_j)/\tau} = 0.
\end{equation*}
Applying this limit to the expression for $Z(q; \tau)$:
\begin{equation*}
    Z(q) := \lim_{\tau \to 0^+} Z(q; \tau) = \frac{v_j + 0}{1 + 0} = v_j.
\end{equation*}
This holds uniformly for all $q \in \text{int}(C_j)$, confirming that $Z(q)$ is a piecewise constant vector field parameterized by the value matrix $V$. This completes the proof.
\end{proof}

\section{Proof of Theorem \ref{thm:newton_polytope}}
\label{app:proof_theorem_newton_polytope}

\begin{proof}
Let $d:=d_{\mathrm{model}}$. We first verify the single-head Newton-polytope representation. Let $q,k_1,\dots,k_N\in\mathbb{R}^{d_k}$ and define
\begin{equation*}
    f_{\mathrm{single}}(q):=\max_{1\leq j\leq N}\langle q,k_j\rangle.
\end{equation*}
Under the generalized max-plus convention adopted in this work, $k_1,\dots,k_N$ are the slope vectors of $f_{\mathrm{single}}$. Therefore,
\begin{equation*}
    \operatorname{Newt}(f_{\mathrm{single}})=\operatorname{conv}\{k_1,\dots,k_N\}.
\end{equation*}
Every vertex of the convex hull of a finite set belongs to that generating set. Hence,
\begin{equation*}
    \left|\operatorname{Vert}\!\left(\operatorname{Newt}(f_{\mathrm{single}})\right)\right|\leq N.
\end{equation*}

We next express the headwise routing potentials in the common pre-head representation space. Following the row-vector convention used in the main text, let $x\in\mathbb{R}^{d}$, let $W_Q^{(h)}\in\mathbb{R}^{d\times d_k}$, and let $k_j^{(h)}\in\mathbb{R}^{d_k}$ be the $j$-th projected key of head $h$. The projected query is $xW_Q^{(h)}\in\mathbb{R}^{d_k}$, and
\begin{equation*}
    f_h(x):=\max_{1\leq j\leq N}\left\langle xW_Q^{(h)},k_j^{(h)}\right\rangle.
\end{equation*}
For row vectors,
\begin{equation*}
    \left\langle xW_Q^{(h)},k_j^{(h)}\right\rangle=xW_Q^{(h)}(k_j^{(h)})^\top=\left\langle x,k_j^{(h)}(W_Q^{(h)})^\top\right\rangle.
\end{equation*}
Define
\begin{equation*}
    a_j^{(h)}:=k_j^{(h)}(W_Q^{(h)})^\top\in\mathbb{R}^{d},\quad A_h:=\{a_1^{(h)},\dots,a_N^{(h)}\}.
\end{equation*}
Then
\begin{equation*}
    f_h(x)=\max_{1\leq j\leq N}\langle x,a_j^{(h)}\rangle
\end{equation*}
and
\begin{equation*}
    P_h=\operatorname{Newt}(f_h)=\operatorname{conv}(A_h).
\end{equation*}
Consequently,
\begin{equation*}
    n_h=|\operatorname{Vert}(P_h)|\leq|A_h|\leq N.
\end{equation*}

By Definition~\ref{def:mhsa_abstraction}, the joint routing potential is the tropical product
\begin{equation*}
    \Psi(x):=\bigotimes_{h=1}^H f_h(x).
\end{equation*}
Multiplication in the max-plus semiring is ordinary addition. Thus,
\begin{equation*}
    \Psi(x)=\sum_{h=1}^H f_h(x)=\sum_{h=1}^H\max_{1\leq j_h\leq N}\langle x,a_{j_h}^{(h)}\rangle.
\end{equation*}
For every $(j_1,\dots,j_H)\in\{1,\dots,N\}^H$,
\begin{equation*}
    \sum_{h=1}^H\langle x,a_{j_h}^{(h)}\rangle\leq\sum_{h=1}^H\max_{1\leq j\leq N}\langle x,a_j^{(h)}\rangle.
\end{equation*}
Taking the maximum over all tuples yields
\begin{equation*}
    \max_{(j_1,\dots,j_H)\in\{1,\dots,N\}^H}\sum_{h=1}^H\langle x,a_{j_h}^{(h)}\rangle\leq\Psi(x).
\end{equation*}
Conversely, for each $h$, choose
\begin{equation*}
    j_h^*\in\operatorname*{arg\,max}_{1\leq j\leq N}\langle x,a_j^{(h)}\rangle.
\end{equation*}
Then
\begin{equation*}
    \Psi(x)=\sum_{h=1}^H\langle x,a_{j_h^*}^{(h)}\rangle\leq\max_{(j_1,\dots,j_H)\in\{1,\dots,N\}^H}\sum_{h=1}^H\langle x,a_{j_h}^{(h)}\rangle.
\end{equation*}
Therefore,
\begin{equation*}
    \Psi(x)=\max_{(j_1,\dots,j_H)\in\{1,\dots,N\}^H}\sum_{h=1}^H\langle x,a_{j_h}^{(h)}\rangle.
\end{equation*}
By linearity of the inner product,
\begin{equation*}
    \Psi(x)=\max_{(j_1,\dots,j_H)\in\{1,\dots,N\}^H}\left\langle x,\sum_{h=1}^H a_{j_h}^{(h)}\right\rangle.
\end{equation*}

Define the finite Minkowski sum
\begin{equation*}
    A_1+\dots+A_H:=\left\{\sum_{h=1}^H a_{j_h}^{(h)}:(j_1,\dots,j_H)\in\{1,\dots,N\}^H\right\}.
\end{equation*}
The preceding max-affine representation gives
\begin{equation*}
    \operatorname{Newt}(\Psi)=\operatorname{conv}(A_1+\dots+A_H).
\end{equation*}
We now prove
\begin{equation*}
    \operatorname{conv}(A_1+\dots+A_H)=P_1+\dots+P_H.
\end{equation*}

Since $A_h\subseteq P_h$ for every $h$,
\begin{equation*}
    A_1+\dots+A_H\subseteq P_1+\dots+P_H.
\end{equation*}
The right-hand side is convex, so
\begin{equation*}
    \operatorname{conv}(A_1+\dots+A_H)\subseteq P_1+\dots+P_H.
\end{equation*}

For the reverse inclusion, let $y\in P_1+\dots+P_H$. Then there exist $y_h\in P_h$ such that
\begin{equation*}
    y=\sum_{h=1}^H y_h.
\end{equation*}
Because $P_h=\operatorname{conv}(A_h)$, there exist coefficients $\lambda_{h,j}\geq0$ satisfying
\begin{equation*}
    \sum_{j=1}^N\lambda_{h,j}=1,\quad y_h=\sum_{j=1}^N\lambda_{h,j}a_j^{(h)}.
\end{equation*}
For every tuple $(j_1,\dots,j_H)$, define
\begin{equation*}
    \lambda_{j_1,\dots,j_H}:=\prod_{h=1}^H\lambda_{h,j_h}.
\end{equation*}
Then $\lambda_{j_1,\dots,j_H}\geq0$ and
\begin{equation*}
    \sum_{j_1=1}^N\cdots\sum_{j_H=1}^N\lambda_{j_1,\dots,j_H}
    =\prod_{h=1}^H\left(\sum_{j_h=1}^N\lambda_{h,j_h}\right)=1.
\end{equation*}
Moreover,
\begin{align*}
    &\sum_{j_1=1}^N\cdots\sum_{j_H=1}^N\lambda_{j_1,\dots,j_H}\left(\sum_{h=1}^H a_{j_h}^{(h)}\right)=\sum_{h=1}^H\sum_{j_1=1}^N\cdots\sum_{j_H=1}^N\left(\prod_{g=1}^H\lambda_{g,j_g}\right)a_{j_h}^{(h)}=\sum_{h=1}^H\left(\sum_{j_h=1}^N\lambda_{h,j_h}a_{j_h}^{(h)}\right)
    \prod_{\substack{g=1\\g\neq h}}^H\left(\sum_{j_g=1}^N\lambda_{g,j_g}\right)\\
    &\quad=\sum_{h=1}^H\left(\sum_{j_h=1}^N\lambda_{h,j_h}a_{j_h}^{(h)}\right)
    =\sum_{h=1}^H y_h=y.
\end{align*}
Thus, $y$ is a convex combination of points in $A_1+\dots+A_H$, and
\begin{equation*}
    P_1+\dots+P_H\subseteq\operatorname{conv}(A_1+\dots+A_H).
\end{equation*}
Combining both inclusions gives
\begin{equation*}
    P_{\Sigma}=\operatorname{Newt}(\Psi)=P_1+\dots+P_H.
\end{equation*}

We next prove the universal product bound. Let
\begin{equation*}
    \mathcal{V}_h:=\operatorname{Vert}(P_h).
\end{equation*}
Since $P_h=\operatorname{conv}(\mathcal{V}_h)$, the identity just proved gives
\begin{equation*}
    P_{\Sigma}=\operatorname{conv}(\mathcal{V}_1+\dots+\mathcal{V}_H).
\end{equation*}
Every vertex of this convex hull belongs to its generating set. Hence,
\begin{equation*}
    V_{\mathrm{multi}}\leq|\mathcal{V}_1+\dots+\mathcal{V}_H|.
\end{equation*}
The map
\begin{equation*}
    \mathcal{V}_1\times\dots\times\mathcal{V}_H\longrightarrow\mathcal{V}_1+\dots+\mathcal{V}_H,\quad (v_1,\dots,v_H)\longmapsto v_1+\dots+v_H,
\end{equation*}
is surjective. Therefore,
\begin{equation*}
    |\mathcal{V}_1+\dots+\mathcal{V}_H|\leq|\mathcal{V}_1\times\dots\times\mathcal{V}_H|=\prod_{h=1}^H|\mathcal{V}_h|=\prod_{h=1}^H n_h.
\end{equation*}
Since $n_h\leq N$,
\begin{equation*}
    V_{\mathrm{multi}}\leq\prod_{h=1}^H n_h\leq N^H.
\end{equation*}

We now prove the intrinsic-dimension bound. If $d_{\Sigma}=0$, then $P_{\Sigma}$ is a single point and
\begin{equation*}
    V_{\mathrm{multi}}=1.
\end{equation*}
If $d_{\Sigma}=1$, then $P_{\Sigma}$ is a nondegenerate line segment and
\begin{equation*}
    V_{\mathrm{multi}}=2.
\end{equation*}
Assume henceforth that
\begin{equation*}
    d_{\Sigma}\geq2,\quad H\geq d_{\Sigma}.
\end{equation*}

For each $h$, choose $p_h\in P_h$ and define
\begin{equation*}
    \overline{P}_h:=P_h-p_h,\quad \overline{P}_{\Sigma}:=P_{\Sigma}-\sum_{h=1}^H p_h.
\end{equation*}
Then
\begin{align*}
    \overline{P}_{\Sigma}
    &=\left(P_1+\dots+P_H\right)-\sum_{h=1}^H p_h=(P_1-p_1)+\dots+(P_H-p_H)=\overline{P}_1+\dots+\overline{P}_H.
\end{align*}
Translations preserve vertices and affine dimension. Thus,
\begin{equation*}
    |\operatorname{Vert}(\overline{P}_h)|=n_h,\quad |\operatorname{Vert}(\overline{P}_{\Sigma})|=V_{\mathrm{multi}},\quad \dim(\operatorname{aff}(\overline{P}_{\Sigma}))=d_{\Sigma}.
\end{equation*}
Because $p_h\in P_h$,
\begin{equation*}
    0\in\overline{P}_h
\end{equation*}
for every $h$. Hence, for every $z\in\overline{P}_h$,
\begin{equation*}
    z=0+\dots+0+z+0+\dots+0\in\overline{P}_1+\dots+\overline{P}_H=\overline{P}_{\Sigma}.
\end{equation*}
Therefore,
\begin{equation*}
    \overline{P}_h\subseteq\overline{P}_{\Sigma}.
\end{equation*}

Since $0\in\overline{P}_{\Sigma}$, define the linear space
\begin{equation*}
    E:=\operatorname{aff}(\overline{P}_{\Sigma})=\operatorname{span}(\overline{P}_{\Sigma}).
\end{equation*}
Then
\begin{equation*}
    \dim(E)=d_{\Sigma},\quad \overline{P}_h\subseteq E.
\end{equation*}

Let $\Delta\subset E$ be a full-dimensional $d_{\Sigma}$-simplex, let $\varepsilon>0$, and define
\begin{equation*}
    Q_h:=\overline{P}_h+\varepsilon\Delta.
\end{equation*}
Because $\varepsilon\Delta$ is full-dimensional in $E$, each $Q_h$ is full-dimensional in $E$. Define
\begin{equation*}
    m_h:=|\operatorname{Vert}(Q_h)|.
\end{equation*}
Since $\Delta$ has $d_{\Sigma}+1$ vertices, the universal product bound gives
\begin{equation*}
    m_h\leq|\operatorname{Vert}(\overline{P}_h)|\,|\operatorname{Vert}(\varepsilon\Delta)|=n_h(d_{\Sigma}+1).
\end{equation*}

Define
\begin{equation*}
    Q_{\Sigma}:=Q_1+\dots+Q_H.
\end{equation*}
Then
\begin{align*}
    Q_{\Sigma} &=\sum_{h=1}^H(\overline{P}_h+\varepsilon\Delta)=\left(\sum_{h=1}^H\overline{P}_h\right)+
    \underbrace{\varepsilon\Delta+\dots+\varepsilon\Delta}_{H\ \mathrm{times}}.
\end{align*}
Since $\Delta$ is convex,
\begin{equation*}
    \underbrace{\Delta+\dots+\Delta}_{H\ \mathrm{times}}=H\Delta.
\end{equation*}
Indeed, for $\delta_1,\dots,\delta_H\in\Delta$,
\begin{equation*}
    \delta_1+\dots+\delta_H=H\left(\frac{1}{H}\sum_{h=1}^H\delta_h\right)\in H\Delta,
\end{equation*}
while every $H\delta\in H\Delta$ satisfies
\begin{equation*}
    H\delta=\delta+\dots+\delta.
\end{equation*}
Therefore,
\begin{equation*}
    Q_{\Sigma}=\overline{P}_{\Sigma}+H\varepsilon\Delta.
\end{equation*}

For a polytope $R\subseteq E$, let $\mathcal{N}_E(R)$ denote its normal fan relative to $E$. The normal fan of a Minkowski sum is the common refinement of the normal fans of its summands. Hence,
\begin{equation*}
    \mathcal{N}_E(Q_{\Sigma})
    =\mathcal{N}_E(\overline{P}_{\Sigma})\wedge\mathcal{N}_E(H\varepsilon\Delta)
    =\mathcal{N}_E(\overline{P}_{\Sigma})\wedge\mathcal{N}_E(\Delta).
\end{equation*}
Thus, $\mathcal{N}_E(Q_{\Sigma})$ refines $\mathcal{N}_E(\overline{P}_{\Sigma})$. Every full-dimensional cone of $\mathcal{N}_E(\overline{P}_{\Sigma})$ contains at least one full-dimensional cone of $\mathcal{N}_E(Q_{\Sigma})$. Since full-dimensional normal cones correspond bijectively to vertices,
\begin{equation*}
    V_{\mathrm{multi}}=|\operatorname{Vert}(\overline{P}_{\Sigma})|\leq|\operatorname{Vert}(Q_{\Sigma})|.
\end{equation*}

After identifying the $d_{\Sigma}$-dimensional Euclidean space $E$ with $\mathbb{R}^{d_{\Sigma}}$, the polytopes $Q_1,\dots,Q_H$ are full-dimensional $d_{\Sigma}$-polytopes. Since $H\geq d_{\Sigma}$, Corollary~2 of \cite{weibel2012maximal} gives
\begin{equation*}
    |\operatorname{Vert}(Q_{\Sigma})|
    \leq\sum_{\substack{S\subseteq\{1,\dots,H\}\\|S|=d_{\Sigma}-1}}
    \left|\operatorname{Vert}\!\left(\sum_{h\in S}Q_h\right)\right|.
\end{equation*}
For every $S\subseteq\{1,\dots,H\}$ with $|S|=d_{\Sigma}-1$, the universal product bound gives
\begin{equation*}
    \left|\operatorname{Vert}\!\left(\sum_{h\in S}Q_h\right)\right|\leq\prod_{h\in S}m_h.
\end{equation*}
Therefore,
\begin{equation*}
    |\operatorname{Vert}(Q_{\Sigma})|
    \leq\sum_{\substack{S\subseteq\{1,\dots,H\}\\|S|=d_{\Sigma}-1}}\prod_{h\in S}m_h.
\end{equation*}

Define
\begin{equation*}
    m_{\mathrm{tot}}:=\sum_{h=1}^H m_h.
\end{equation*}
Let
\begin{equation*}
    \mathcal{U}:=\bigsqcup_{h=1}^H\operatorname{Vert}(Q_h)
\end{equation*}
be the disjoint union of the summand vertex sets. Then
\begin{equation*}
    |\mathcal{U}|=m_{\mathrm{tot}}.
\end{equation*}
Each term $\prod_{h\in S}m_h$ counts the subsets of $\mathcal{U}$ containing exactly one vertex from each summand indexed by $S$. Hence,
\begin{equation*}
    \sum_{\substack{S\subseteq\{1,\dots,H\}\\|S|=d_{\Sigma}-1}}\prod_{h\in S}m_h
\end{equation*}
counts only those $(d_{\Sigma}-1)$-element subsets of $\mathcal{U}$ whose elements come from distinct summands. These form a subset of all $(d_{\Sigma}-1)$-element subsets of $\mathcal{U}$. Consequently,
\begin{equation*}
    \sum_{\substack{S\subseteq\{1,\dots,H\}\\|S|=d_{\Sigma}-1}}\prod_{h\in S}m_h
    \leq\binom{m_{\mathrm{tot}}}{d_{\Sigma}-1}.
\end{equation*}
Combining the preceding inequalities yields
\begin{equation*}
    V_{\mathrm{multi}}\leq|\operatorname{Vert}(Q_{\Sigma})|\leq\binom{m_{\mathrm{tot}}}{d_{\Sigma}-1}.
\end{equation*}

Since
\begin{equation*}
    \binom{m_{\mathrm{tot}}}{d_{\Sigma}-1}
    =\frac{m_{\mathrm{tot}}(m_{\mathrm{tot}}-1)\cdots(m_{\mathrm{tot}}-d_{\Sigma}+2)}{(d_{\Sigma}-1)!}
    \leq\frac{m_{\mathrm{tot}}^{d_{\Sigma}-1}}{(d_{\Sigma}-1)!},
\end{equation*}
and
\begin{equation*}
    m_{\mathrm{tot}}=\sum_{h=1}^H m_h\leq(d_{\Sigma}+1)\sum_{h=1}^H n_h,
\end{equation*}
we obtain
\begin{align*}
    V_{\mathrm{multi}}
    &\leq\frac{m_{\mathrm{tot}}^{d_{\Sigma}-1}}{(d_{\Sigma}-1)!}
    \leq\frac{(d_{\Sigma}+1)^{d_{\Sigma}-1}}{(d_{\Sigma}-1)!}
    \left(\sum_{h=1}^H n_h\right)^{d_{\Sigma}-1}.
\end{align*}
Define
\begin{equation*}
    C_{d_{\Sigma}}:=\frac{(d_{\Sigma}+1)^{d_{\Sigma}-1}}{(d_{\Sigma}-1)!}.
\end{equation*}
Then $C_{d_{\Sigma}}>0$ depends only on $d_{\Sigma}$, and
\begin{equation*}
    V_{\mathrm{multi}}\leq C_{d_{\Sigma}}\left(\sum_{h=1}^H n_h\right)^{d_{\Sigma}-1}.
\end{equation*}
Since $n_h\leq N$,
\begin{equation*}
    \sum_{h=1}^H n_h\leq\sum_{h=1}^H N=HN,
\end{equation*}
and therefore
\begin{equation*}
    V_{\mathrm{multi}}\leq C_{d_{\Sigma}}(HN)^{d_{\Sigma}-1}
    =\mathcal{O}((HN)^{d_{\Sigma}-1}).
\end{equation*}

It remains to derive the ambient-dimension bounds. If $H<d$, the universal product bound gives
\begin{equation*}
    V_{\mathrm{multi}}\leq N^H=\mathcal{O}(N^H).
\end{equation*}
Suppose that $H\geq d$. Since $d_{\Sigma}\leq d$,
\begin{equation*}
    H\geq d\geq d_{\Sigma}.
\end{equation*}
If $d_{\Sigma}\geq2$, then $HN\geq1$ and
\begin{equation*}
    V_{\mathrm{multi}}\leq C_{d_{\Sigma}}(HN)^{d_{\Sigma}-1}\leq C_{d_{\Sigma}}(HN)^{d-1}.
\end{equation*}
If $d_{\Sigma}=0$ or $d_{\Sigma}=1$, then
\begin{equation*}
    V_{\mathrm{multi}}\leq2\leq2(HN)^{d-1}.
\end{equation*}
Because $d$ is fixed and $d_{\Sigma}\in\{0,\dots,d\}$, define
\begin{equation*}
    \widehat{C}_d:=\max\{2,C_2,\dots,C_d\}.
\end{equation*}
It follows that
\begin{equation*}
    V_{\mathrm{multi}}\leq\widehat{C}_d(HN)^{d-1}
    =\mathcal{O}((HN)^{d-1}),\quad H\geq d.
\end{equation*}
At the boundary $H=d$, the universal product bound gives $V_{\mathrm{multi}}\leq N^d$, whereas the intrinsic-dimension bound has degree at most $d-1$ in $N$. Substituting $d=d_{\mathrm{model}}$ completes the proof.
\end{proof}

\section{Proof of Theorem \ref{thm:linear_regions}}
\label{app:proof_theorem_linear_regions}

\begin{figure}[h]
    \centering
    \includegraphics[width=\textwidth]{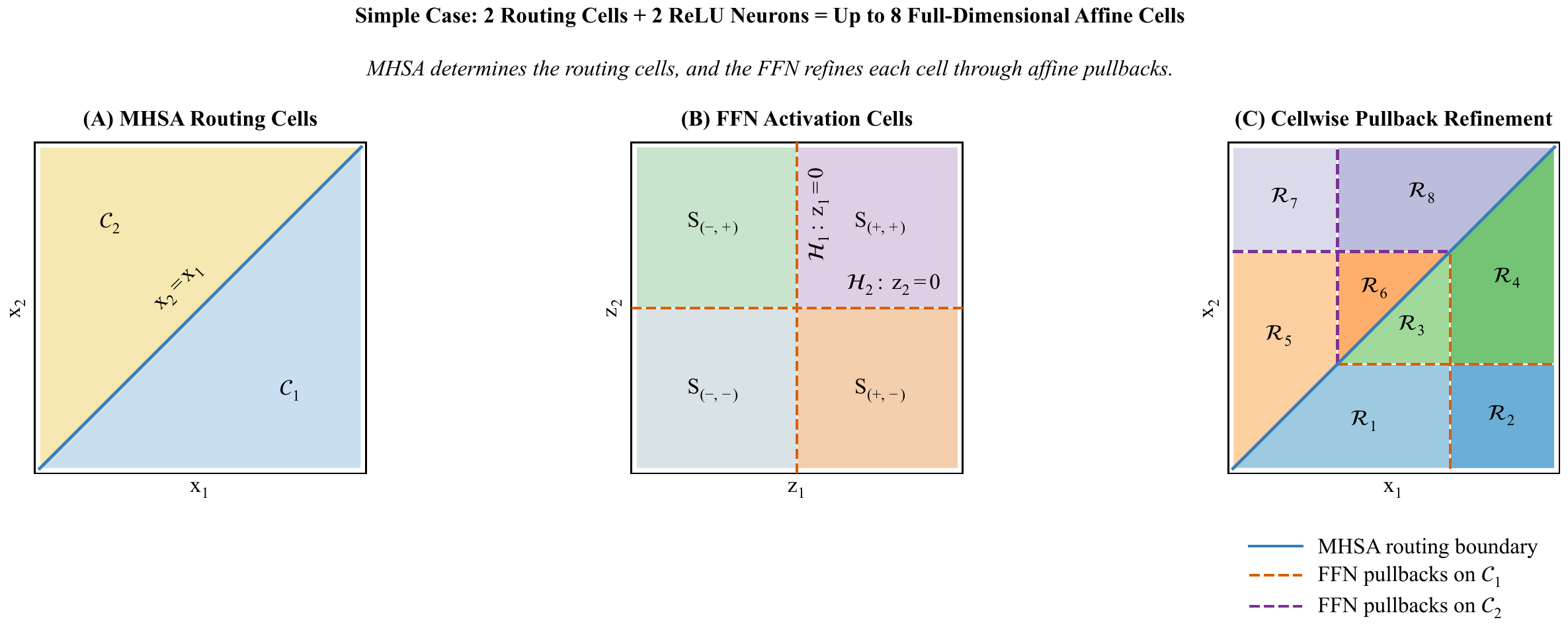}
    \caption{\textbf{Cellwise Pullback Refinement of MHSA Routing Cells by FFN Activation Boundaries.} A two-dimensional conditioned example with two MHSA routing cells and two ReLU neurons. \textbf{(A)} Hard MHSA partitions the query-input space into the routing cells $\mathcal{C}_1$ and $\mathcal{C}_2$. \textbf{(B)} The two activation hyperplanes $\mathcal{H}_1$ and $\mathcal{H}_2$ partition the attention-output space into $\sum_{k=0}^{2}\binom{2}{k}=4$ FFN activation cells. \textbf{(C)} On each routing cell $\mathcal{C}_r$, the affine attention map $T_r$ pulls these activation cells back to sets of the form $\mathcal{C}_r\cap T_r^{-1}(S)$. In the illustrated configuration, the two pulled-back boundaries intersect in the interior of each routing cell and produce four subcells; hence the two routing cells yield $2\times4=8$ full-dimensional affine cells.}
    \label{fig:refinement_illustration}
\end{figure}

\begin{proof}
For a polytope $P$, let
\begin{equation*}
    f_0(P):=|\operatorname{Vert}(P)|,
\end{equation*}
the number of vertices of $P$, equivalently the zeroth entry of its $f$-vector.

Fix a block $\ell\in\{1,\dots,L\}$ and a head $h\in\{1,\dots,H\}$. Let $W_{Q,\ell}^{(h)}\in\mathbb R^{d_{\mathrm{model}}\times d_k}$ and let $k_{\ell,h,j}\in\mathbb R^{d_k}$ be the $j$-th projected key. Define
\begin{equation*}
    a_{\ell,h,j}:=k_{\ell,h,j}(W_{Q,\ell}^{(h)})^\top\in\mathbb R^{d_{\mathrm{model}}}.
\end{equation*}
Under the row-vector convention,
\begin{equation*}
    \langle xW_{Q,\ell}^{(h)},k_{\ell,h,j}\rangle=xW_{Q,\ell}^{(h)}k_{\ell,h,j}^\top=\langle x,k_{\ell,h,j}(W_{Q,\ell}^{(h)})^\top\rangle=\langle x,a_{\ell,h,j}\rangle.
\end{equation*}
Hence the routing potential and its Newton polytope are
\begin{equation*}
    f_{\ell,h}(x):=\max_{1\leq j\leq N}\langle x,a_{\ell,h,j}\rangle,\quad P_{\ell,h}:=\operatorname{Newt}(f_{\ell,h})=\operatorname{conv}\{a_{\ell,h,1},\dots,a_{\ell,h,N}\}.
\end{equation*}
Every vertex of $P_{\ell,h}$ belongs to its generating set, so
\begin{equation*}
    f_0(P_{\ell,h})\leq|\{a_{\ell,h,1},\dots,a_{\ell,h,N}\}|\leq N.
\end{equation*}

For $v\in\operatorname{Vert}(P_{\ell,h})$, define its outer normal cone by
\begin{equation*}
    \mathcal C_{P_{\ell,h}}(v):=\{x\in\mathbb R^{d_{\mathrm{model}}}:\langle x,v\rangle\geq\langle x,u\rangle\text{ for every }u\in P_{\ell,h}\}.
\end{equation*}
The interiors of the full-dimensional cones of the outer normal fan $\operatorname{NF}(P_{\ell,h})$ are precisely the unique-winner routing cells, and these cones correspond bijectively to the vertices of $P_{\ell,h}$ \citep{ziegler2012lectures}. Therefore, the number of full-dimensional routing cells of head $h$ is
\begin{equation*}
    \mathcal M_{\ell,h}=f_0(P_{\ell,h})\leq N.
\end{equation*}

The joint routing partition of block $\ell$ is
\begin{equation*}
    \mathcal P_{\mathrm{att},\ell}:=\bigwedge_{h=1}^{H}\operatorname{NF}(P_{\ell,h}).
\end{equation*}
Since the normal fan of a Minkowski sum equals the common refinement of the normal fans of its summands,
\begin{equation*}
    \mathcal P_{\mathrm{att},\ell}=\operatorname{NF}(P_{\ell,\Sigma}),\quad P_{\ell,\Sigma}:=P_{\ell,1}+\cdots+P_{\ell,H}.
\end{equation*}
Thus the number of full-dimensional joint routing cells is
\begin{equation*}
    \mathcal M_{\mathrm{att},\ell}=f_0(P_{\ell,\Sigma})=V_{\mathrm{multi}}^{(\ell)}\leq V_{\max}.
\end{equation*}

Let $\mathcal C_{\ell,1},\dots,\mathcal C_{\ell,\mathcal M_{\mathrm{att},\ell}}$ denote the interiors of these full-dimensional joint routing cones. Fix $\mathcal C_{\ell,r}$. There exists a tuple $(j_{\ell,1,r},\dots,j_{\ell,H,r})\in\{1,\dots,N\}^H$ such that head $h$ selects index $j_{\ell,h,r}$ throughout $\mathcal C_{\ell,r}$. Let $v_{\ell,h,j}$ be the fixed value vector of head $h$ at index $j$, and let $W_{O,\ell}$ be the output projection. Then
\begin{equation*}
    o_{\ell,h}(x)=v_{\ell,h,j_{\ell,h,r}},\quad x\in\mathcal C_{\ell,r},
\end{equation*}
and consequently
\begin{equation*}
    o_{\ell}^{\mathrm{MHSA}}(x)=\operatorname{Concat}\!\left(v_{\ell,1,j_{\ell,1,r}},\dots,v_{\ell,H,j_{\ell,H,r}}\right)W_{O,\ell}.
\end{equation*}
Therefore, after residual addition and the admissible affine operations, the attention sublayer agrees on $\mathcal C_{\ell,r}$ with an affine map
\begin{equation*}
    T_{\ell,r}(x)=xB_{\ell,r}+c_{\ell,r}.
\end{equation*}

Consider the ReLU FFN
\begin{equation*}
    F_\ell(z):=\sigma(zW_{1,\ell}+b_{1,\ell})W_{2,\ell}+b_{2,\ell},
\end{equation*}
where $W_{1,\ell}\in\mathbb R^{d_{\mathrm{model}}\times d_{\mathrm{ff}}}$, $W_{2,\ell}\in\mathbb R^{d_{\mathrm{ff}}\times d_{\mathrm{model}}}$, and $\sigma(t)=\max\{0,t\}$ is applied coordinatewise. If $w_{\ell,i}$ is the $i$-th column of $W_{1,\ell}$, then the $i$-th proper activation boundary is
\begin{equation*}
    \mathcal H_{\ell,i}:=\{z\in\mathbb R^{d_{\mathrm{model}}}:\langle z,w_{\ell,i}\rangle+(b_{1,\ell})_i=0\}.
\end{equation*}

Let $Z(m,q)$ denote the maximum number of full-dimensional regions induced by $m$ proper affine hyperplanes in $\mathbb R^q$. Its initial conditions are
\begin{equation*}
    Z(0,q)=1,\quad Z(m,0)=1.
\end{equation*}
Deleting the $m$-th hyperplane leaves at most $Z(m-1,q)$ regions, while its restriction is subdivided into at most $Z(m-1,q-1)$ regions. Hence
\begin{equation*}
    Z(m,q)\leq Z(m-1,q)+Z(m-1,q-1).
\end{equation*}
We prove by induction on $m+q$ that
\begin{equation*}
    Z(m,q)\leq\sum_{k=0}^{q}\binom{m}{k},
\end{equation*}
where $\binom{m}{k}:=0$ for $k<0$ or $k>m$. The result is immediate for $m=0$ or $q=0$. For $m,q\geq1$, the induction hypothesis gives
\begin{align*}
    Z(m,q)
    &\leq Z(m-1,q)+Z(m-1,q-1)\\
    &\leq\sum_{k=0}^{q}\binom{m-1}{k}+\sum_{k=0}^{q-1}\binom{m-1}{k}\\
    &=\sum_{k=0}^{q}\binom{m-1}{k}+\sum_{k=1}^{q}\binom{m-1}{k-1}\\
    &=\sum_{k=0}^{q}\left[\binom{m-1}{k}+\binom{m-1}{k-1}\right]\\
    &=\sum_{k=0}^{q}\binom{m}{k}.
\end{align*}
Setting $m=d_{\mathrm{ff}}$ and $q=d_{\mathrm{model}}$ yields the Zaslavsky bound \citep{zaslavsky1975facing}
\begin{equation*}
    Z(d_{\mathrm{ff}},d_{\mathrm{model}})\leq\sum_{k=0}^{d_{\mathrm{model}}}\binom{d_{\mathrm{ff}}}{k}=\sum_{k=0}^{\min\{d_{\mathrm{model}},d_{\mathrm{ff}}\}}\binom{d_{\mathrm{ff}}}{k}=B_{d_{\mathrm{model}},d_{\mathrm{ff}}}.
\end{equation*}

On a fixed FFN activation cell, let $D_{\ell,\eta}\in\mathbb R^{d_{\mathrm{ff}}\times d_{\mathrm{ff}}}$ be its diagonal activation-indicator matrix. Then
\begin{align*}
    F_\ell(z)
    &=(zW_{1,\ell}+b_{1,\ell})D_{\ell,\eta}W_{2,\ell}+b_{2,\ell}\\
    &=zW_{1,\ell}D_{\ell,\eta}W_{2,\ell}+b_{1,\ell}D_{\ell,\eta}W_{2,\ell}+b_{2,\ell},
\end{align*}
so $F_\ell$ is affine on each activation cell.

For $x\in\mathcal C_{\ell,r}$, the pullback of $\mathcal H_{\ell,i}$ through $T_{\ell,r}$ is determined by
\begin{align*}
    \langle T_{\ell,r}(x),w_{\ell,i}\rangle+(b_{1,\ell})_i
    &=\langle xB_{\ell,r}+c_{\ell,r},w_{\ell,i}\rangle+(b_{1,\ell})_i\\
    &=\langle x,B_{\ell,r}w_{\ell,i}\rangle+\langle c_{\ell,r},w_{\ell,i}\rangle+(b_{1,\ell})_i.
\end{align*}
Thus each pullback is empty, equal to $\mathcal C_{\ell,r}$, or the intersection of $\mathcal C_{\ell,r}$ with a proper affine hyperplane. An identically vanishing pullback introduces no subdivision. Hence at most $d_{\mathrm{ff}}$ proper affine hyperplanes subdivide $\mathcal C_{\ell,r}$.

Their arrangement has at most $B_{d_{\mathrm{model}},d_{\mathrm{ff}}}$ full-dimensional chambers. Since both the arrangement chambers and $\mathcal C_{\ell,r}$ are convex, intersecting them with $\mathcal C_{\ell,r}$ cannot increase their number. Therefore,
\begin{equation*}
    \#\{\text{full-dimensional affine cells inside }\mathcal C_{\ell,r}\}\leq B_{d_{\mathrm{model}},d_{\mathrm{ff}}}.
\end{equation*}
Equivalently, if $S$ is an FFN activation cell, the corresponding complete-block cells are
\begin{equation*}
    \mathcal C_{\ell,r}\cap T_{\ell,r}^{-1}(S).
\end{equation*}
This cellwise pullback refinement is illustrated in Figure~\ref{fig:refinement_illustration}.

Summing over the joint routing cells gives
\begin{align*}
    \mathcal M_{\mathrm{block},\ell}
    \leq\sum_{r=1}^{\mathcal M_{\mathrm{att},\ell}}B_{d_{\mathrm{model}},d_{\mathrm{ff}}} =\mathcal M_{\mathrm{att},\ell}B_{d_{\mathrm{model}},d_{\mathrm{ff}}} =V_{\mathrm{multi}}^{(\ell)}B_{d_{\mathrm{model}},d_{\mathrm{ff}}}\leq V_{\max}B_{d_{\mathrm{model}},d_{\mathrm{ff}}}.
\end{align*}
Define
\begin{equation*}
    M:=V_{\max}B_{d_{\mathrm{model}},d_{\mathrm{ff}}}.
\end{equation*}
Thus every block has at most $M$ full-dimensional convex polyhedral affine cells.

Let $\Phi_\ell$ denote the composition of the first $\ell$ blocks, with $\Phi_0:=\operatorname{id}_{\mathbb R^{d_{\mathrm{model}}}}$. We construct a collection $\mathscr Q_\ell$ of full-dimensional convex polyhedral cells with pairwise disjoint interiors such that $\Phi_\ell$ is affine on every $Q\in\mathscr Q_\ell$. Set
\begin{equation*}
    \mathscr Q_0:=\{\mathbb R^{d_{\mathrm{model}}}\},\quad \mu_0:=|\mathscr Q_0|=1.
\end{equation*}

Assume $\mathscr Q_\ell$ has been constructed. For every $Q\in\mathscr Q_\ell$, write
\begin{equation*}
    \Phi_\ell(x)=xA_Q+b_Q,\quad x\in Q.
\end{equation*}
Let $\mathscr S_{\ell+1}$ be the collection of full-dimensional convex affine cells of block $\ell+1$, so that
\begin{equation*}
    |\mathscr S_{\ell+1}|\leq M.
\end{equation*}
For $Q\in\mathscr Q_\ell$ and $S\in\mathscr S_{\ell+1}$, define
\begin{equation*}
    Q_S:=Q\cap(\Phi_\ell|_Q)^{-1}(S).
\end{equation*}
Because $\Phi_\ell|_Q$ is affine and $Q$ and $S$ are convex, $Q_S$ is convex. Retain the sets with $\dim(Q_S)=d_{\mathrm{model}}$ and define
\begin{equation*}
    \mathscr Q_{\ell+1}:=\{Q_S:Q\in\mathscr Q_\ell,\ S\in\mathscr S_{\ell+1},\ \dim(Q_S)=d_{\mathrm{model}}\}.
\end{equation*}

If block $\ell+1$ agrees on $S$ with $G_{\ell+1,S}(z)=zC_S+d_S$, then for every $x\in Q_S$,
\begin{align*}
    \Phi_{\ell+1}(x)=G_{\ell+1,S}(\Phi_\ell(x))=(xA_Q+b_Q)C_S+d_S=x(A_QC_S)+(b_QC_S+d_S).
\end{align*}
Hence $\Phi_{\ell+1}$ is affine on every cell of $\mathscr Q_{\ell+1}$.

By Assumption~\ref{ass:cellwise_noncollapse}, the pullback to $Q$ of every proper boundary of block $\ell+1$ is either empty or lower-dimensional in $Q$. Therefore,
\begin{equation*}
    Q\setminus\bigcup_{S\in\mathscr S_{\ell+1}}Q_S
\end{equation*}
is lower-dimensional. Thus $\mathscr Q_{\ell+1}$ covers the full-dimensional part of the domain up to lower-dimensional boundaries.

For each fixed $Q\in\mathscr Q_\ell$, every $S\in\mathscr S_{\ell+1}$ produces at most one convex cell $Q_S$. Hence
\begin{align*}
    \mu_{\ell+1}:=|\mathscr Q_{\ell+1}|
    \leq\sum_{Q\in\mathscr Q_\ell}|\mathscr S_{\ell+1}|\leq\sum_{Q\in\mathscr Q_\ell}M=M|\mathscr Q_\ell|=M\mu_\ell.
\end{align*}
Since $\mu_0=1$,
\begin{equation*}
    \mu_1\leq M,\quad \mu_2\leq M\mu_1\leq M^2.
\end{equation*}
If $\mu_\ell\leq M^\ell$, then
\begin{equation*}
    \mu_{\ell+1}\leq M\mu_\ell\leq M^{\ell+1}.
\end{equation*}
Therefore, by induction,
\begin{equation*}
    \mu_L\leq M^L=\left(V_{\max}B_{d_{\mathrm{model}},d_{\mathrm{ff}}}\right)^L.
\end{equation*}

Let $\nu_L$ denote the number of maximal connected full-dimensional affine regions of $\Phi_L$. The complement of the union of $\mathscr Q_L$ is lower-dimensional, so every maximal full-dimensional affine region contains at least one cell of $\mathscr Q_L$. Distinct maximal affine regions contain disjoint cells. Hence
\begin{equation*}
    \nu_L\leq|\mathscr Q_L|=\mu_L.
\end{equation*}
Consequently,
\begin{equation*}
    \mathcal N\!\left(\mathcal T(L,H,d_{\mathrm{model}},d_{\mathrm{ff}})\right)=\nu_L\leq\left(V_{\max}B_{d_{\mathrm{model}},d_{\mathrm{ff}}}\right)^L.
\end{equation*}

It remains to apply Theorem~\ref{thm:newton_polytope}. If $H<d_{\mathrm{model}}$, then
\begin{equation*}
    V_{\max}\leq N^H,
\end{equation*}
and therefore
\begin{align*}
    \mathcal N\!\left(\mathcal T(L,H,d_{\mathrm{model}},d_{\mathrm{ff}})\right) \leq\left(B_{d_{\mathrm{model}},d_{\mathrm{ff}}}N^H\right)^L=B_{d_{\mathrm{model}},d_{\mathrm{ff}}}^{\,L}N^{HL}=\mathcal O(N^{HL}).
\end{align*}

If $H\geq d_{\mathrm{model}}$, Theorem~\ref{thm:newton_polytope} implies that there exists a constant $C>0$, independent of $N$, such that
\begin{equation*}
    V_{\max}\leq CN^{d_{\mathrm{model}}-1}.
\end{equation*}
Hence
\begin{align*}
    \mathcal N\!\left(\mathcal T(L,H,d_{\mathrm{model}},d_{\mathrm{ff}})\right)
    &\leq\left(CB_{d_{\mathrm{model}},d_{\mathrm{ff}}}N^{d_{\mathrm{model}}-1}\right)^L\\
    &=\left(CB_{d_{\mathrm{model}},d_{\mathrm{ff}}}\right)^LN^{(d_{\mathrm{model}}-1)L}\\
    &=\mathcal O\!\left(N^{(d_{\mathrm{model}}-1)L}\right).
\end{align*}
This proves the theorem.
\end{proof}

\section{Proof of Theorem \ref{thm:lower_bound_linear_regions}}
\label{app:proof_theorem_lower_bound}

The construction follows the full-branch replication principle used in classical lower bounds for deep piecewise-linear networks \citep{montufar2014number,telgarsky2016benefits}. It uses the standard normalization-free residual block
\begin{equation*}
    u=x+\operatorname{MHSA}(x),\quad \mathcal B(x)=u+\operatorname{FFN}(u),
\end{equation*}
without affine preprocessing or data-dependent normalization.

\begin{lemma}[Homogeneous Parabolic Routing]
\label{lem:homogeneous_parabolic_routing}
Let $N\geq1$ and define $p_j:=(j-\frac12)/N$ for $j\in\{1,\dots,N\}$. For $(s,t)\in\mathbb R^2$ with $t>0$, let
\begin{equation*}
    q(s,t):=(s,t),\quad k_j:=\left(p_j,-\frac12p_j^2\right).
\end{equation*}
Then
\begin{equation*}
    \operatorname*{arg\,max}_{1\leq j\leq N}\langle q(s,t),k_j\rangle=\operatorname*{arg\,min}_{1\leq j\leq N}\left|\frac{s}{t}-p_j\right|.
\end{equation*}
In particular, key $j$ is the unique maximizer whenever
\begin{equation*}
    \frac{j-1}{N}<\frac{s}{t}<\frac{j}{N}.
\end{equation*}
\end{lemma}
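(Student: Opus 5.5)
The plan is to compute the score explicitly and complete the square. For $t>0$ we have
\begin{equation*}
    \langle q(s,t),k_j\rangle=sp_j-\tfrac12 tp_j^2=-\tfrac{t}{2}\left(p_j-\tfrac{s}{t}\right)^2+\tfrac{s^2}{2t}.
\end{equation*}
The term $s^2/(2t)$ does not depend on $j$. Since $t>0$, the map $j\mapsto\langle q(s,t),k_j\rangle$ is a strictly decreasing function of $(p_j-s/t)^2$. Hence it is a strictly decreasing function of $|s/t-p_j|$. The maximizers of the score are therefore exactly the minimizers of $|s/t-p_j|$, which gives the set equality of argmax and argmin. This is the same polarization mechanism used in the proof of Theorem~\ref{thm:voronoi}, now in a homogeneous (projective) form: the ratio $s/t$ acts as a one-dimensional query against the sites $p_j$.

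For the uniqueness claim, I would identify the one-dimensional nearest-site cells. The sites $p_j=(j-\tfrac12)/N$ are equally spaced with gap $1/N$. The midpoint between consecutive sites $p_j$ and $p_{j+1}$ is $j/N$, so the nearest-site cell of $p_j$ is $[(j-1)/N,\,j/N]$. The two outermost cells extend to $\mp\infty$, which only enlarges them and does not affect the claim.

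Now let $u:=s/t\in((j-1)/N,\,j/N)$. Then $|u-p_j|<1/(2N)$. For any $l\neq j$, the triangle inequality gives
\begin{equation*}
    |u-p_l|\geq|p_l-p_j|-|u-p_j|>\tfrac1N-\tfrac1{2N}=\tfrac1{2N}.
\end{equation*}
So $p_j$ is the strict unique nearest site. By the first part, key $j$ is then the unique maximizer.

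No step here is a serious obstacle. The only points needing care are the sign condition $t>0$, without which the monotonicity flips, and the treatment of the boundary indices $j=1$ and $j=N$; the triangle-inequality argument handles both uniformly.
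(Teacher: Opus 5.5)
Your proof is correct and follows essentially the same route as the paper: you complete the square to write the score as $\frac{t}{2}\bigl(\frac{s^2}{t^2}-(\frac{s}{t}-p_j)^2\bigr)$, then use $t>0$ to turn the argmax into the argmin of $|s/t-p_j|$. For uniqueness, the paper compares $s/t$ with the midpoints $(p_i+p_j)/2$, treating $i<j$ and $i>j$ separately, whereas your triangle-inequality bound $|u-p_l|>\frac1N-\frac1{2N}$ handles every $l\neq j$ at once; this is a harmless and slightly cleaner variant.
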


\begin{proof}
For every $j\in\{1,\dots,N\}$,
\begin{equation*}
    \langle q(s,t),k_j\rangle=sp_j-\frac12tp_j^2=t\left(\frac{s}{t}p_j-\frac12p_j^2\right)=\frac{t}{2}\left(\frac{s^2}{t^2}-\left(\frac{s}{t}-p_j\right)^2\right).
\end{equation*}
Since $t>0$ and $ts^2/(2t^2)$ is independent of $j$,
\begin{equation*}
    \operatorname*{arg\,max}_{1\leq j\leq N}\langle q(s,t),k_j\rangle=\operatorname*{arg\,min}_{1\leq j\leq N}\left(\frac{s}{t}-p_j\right)^2=\operatorname*{arg\,min}_{1\leq j\leq N}\left|\frac{s}{t}-p_j\right|.
\end{equation*}
The sites satisfy $p_1<\cdots<p_N$, and the midpoint between adjacent sites is
\begin{equation*}
    \frac{p_j+p_{j+1}}{2}=\frac12\left(\frac{j-\frac12}{N}+\frac{j+\frac12}{N}\right)=\frac{j}{N}.
\end{equation*}
Fix $r:=s/t\in((j-1)/N,j/N)$. If $i<j$, then $(p_i+p_j)/2\leq(p_{j-1}+p_j)/2=(j-1)/N<r$, so $|r-p_j|<|r-p_i|$. If $i>j$, then $r<j/N=(p_j+p_{j+1})/2\leq(p_j+p_i)/2$, so $|r-p_j|<|r-p_i|$. Hence $p_j$ is strictly closer to $r$ than every other site, and key $j$ is the unique maximizer.
\end{proof}

\begin{proof}[Proof of Theorem~\ref{thm:lower_bound_linear_regions}]
For compactness, define
\begin{equation*}
    d:=d_{\mathrm{model}},\quad \rho:=\min\{H,d-1\},\quad a:=\frac12,\quad \varepsilon:=\frac{1}{16N},\quad \lambda:=\frac{1}{2\varepsilon}=8N.
\end{equation*}
Because $d\geq2$ and $H\geq1$, we have $\rho\geq1$. Moreover, $\lambda=8N>N$.

Use the first $\rho$ attention heads as active heads and set the outputs of all remaining heads to zero. Let $e_1,\dots,e_d$ denote the standard basis vectors of $\mathbb R^d$. For each active head $h\in\{1,\dots,\rho\}$, use two query--key coordinates and choose
\begin{equation*}
    W_Q^{(h)}:=\begin{bmatrix}e_h&e_d&0_{d\times(d_k-2)}\end{bmatrix}\in\mathbb R^{d\times d_k}.
\end{equation*}
Under the row-vector convention,
\begin{equation*}
    q_h(x)=xW_Q^{(h)}=(x_h,x_d,0,\dots,0).
\end{equation*}
For $j\in\{1,\dots,N\}$, define
\begin{equation*}
    p_j:=\frac{j-\frac12}{N},\quad k_j^{(h)}:=\left(p_j,-\frac12p_j^2,0,\dots,0\right)\in\mathbb R^{d_k}.
\end{equation*}
The score of key $j$ in active head $h$ is
\begin{equation*}
    \left\langle q_h(x),k_j^{(h)}\right\rangle=\left\langle(x_h,x_d,0,\dots,0),\left(p_j,-\frac12p_j^2,0,\dots,0\right)\right\rangle=x_hp_j-\frac12x_dp_j^2.
\end{equation*}
By Lemma~\ref{lem:homogeneous_parabolic_routing}, if $x_d>0$ and
\begin{equation*}
    \frac{j-1}{N}<\frac{x_h}{x_d}<\frac{j}{N},
\end{equation*}
then head $h$ uniquely selects key $j$.

For each active head, use one value coordinate and choose
\begin{equation*}
    v_j^{(h)}:=\left(a(1-p_j),0,\dots,0\right).
\end{equation*}
Choose the output projection so that the first value coordinate of active head $h$ is written into residual coordinate $h$, while every other coordinate receives zero from that head. Thus, when active head $h$ selects key $j_h$, its contribution to the MHSA output is
\begin{equation*}
    a(1-p_{j_h})e_h.
\end{equation*}

Define the common target domain
\begin{equation*}
    \Omega_N:=(0,1)^\rho\times(0,1)^{d-\rho-1}\times(a-\varepsilon,a+\varepsilon)\subset(0,1)^d.
\end{equation*}
When $d-\rho-1=0$, the middle Cartesian factor is omitted. For every routing tuple $\mathbf j=(j_1,\dots,j_\rho)\in\{1,\dots,N\}^{\rho}$, define
\begin{equation*}
    D_{\mathbf j}:=\prod_{h=1}^{\rho}(ap_{j_h}-\varepsilon,ap_{j_h}+\varepsilon)\times(0,1)^{d-\rho-1}\times(a-\varepsilon,a+\varepsilon).
\end{equation*}

We first verify that $D_{\mathbf j}\subset\Omega_N$. The smallest possible active-coordinate endpoint satisfies
\begin{equation*}
    ap_1-\varepsilon=\frac{1}{4N}-\frac{1}{16N}=\frac{3}{16N}>0,
\end{equation*}
while the largest satisfies
\begin{equation*}
    ap_N+\varepsilon=\frac12\left(1-\frac{1}{2N}\right)+\frac{1}{16N}=\frac12-\frac{1}{4N}+\frac{1}{16N}=\frac12-\frac{3}{16N}<1.
\end{equation*}
Hence all active-coordinate intervals lie in $(0,1)$, and therefore $D_{\mathbf j}\subset\Omega_N$.

The boxes $D_{\mathbf j}$ are pairwise disjoint. Indeed, if $\mathbf j\neq\mathbf j'$, there exists an active coordinate $h$ such that $j_h\neq j_h'$. The corresponding centers satisfy
\begin{equation*}
    |ap_{j_h}-ap_{j_h'}|=\frac{a|j_h-j_h'|}{N}\geq\frac{a}{N}=\frac{1}{2N}>2\varepsilon=\frac{1}{8N},
\end{equation*}
so their active-coordinate intervals are disjoint.

We next show that every $D_{\mathbf j}$ lies strictly inside the joint routing cell indexed by $\mathbf j$. Fix $x\in D_{\mathbf j}$ and an active head $h$. Write
\begin{equation*}
    x_h=ap_{j_h}+\delta_h,\quad x_d=a+\delta_d,\quad |\delta_h|<\varepsilon,\quad |\delta_d|<\varepsilon.
\end{equation*}
Using $p_{j_h}=(j_h-\frac12)/N$ and $0\leq(j_h-1)/N<1$,
\begin{equation*}
    x_h-\frac{j_h-1}{N}x_d=\frac{a}{2N}+\delta_h-\frac{j_h-1}{N}\delta_d>\frac{a}{2N}-2\varepsilon=\frac{1}{4N}-\frac{1}{8N}=\frac{1}{8N}>0.
\end{equation*}
Similarly, since $0<j_h/N\leq1$,
\begin{equation*}
    \frac{j_h}{N}x_d-x_h=\frac{a}{2N}+\frac{j_h}{N}\delta_d-\delta_h>\frac{a}{2N}-2\varepsilon=\frac{1}{8N}>0.
\end{equation*}
Moreover,
\begin{equation*}
    x_d>a-\varepsilon=\frac12-\frac{1}{16N}>0.
\end{equation*}
Dividing the preceding strict inequalities by $x_d$ gives
\begin{equation*}
    \frac{j_h-1}{N}<\frac{x_h}{x_d}<\frac{j_h}{N}.
\end{equation*}
Lemma~\ref{lem:homogeneous_parabolic_routing} therefore implies that active head $h$ uniquely selects key $j_h$ throughout $D_{\mathbf j}$.

Fix $\mathbf j$ and $x\in D_{\mathbf j}$. The MHSA output is
\begin{equation*}
    \operatorname{MHSA}(x)=\sum_{h=1}^{\rho}a(1-p_{j_h})e_h.
\end{equation*}
After the attention residual connection,
\begin{equation*}
    u=x+\operatorname{MHSA}(x),
\end{equation*}
so
\begin{equation*}
    u_h=x_h+a(1-p_{j_h})\quad\text{for }h=1,\dots,\rho,\quad u_i=x_i\quad\text{for }i=\rho+1,\dots,d.
\end{equation*}
Since $x_h\in(ap_{j_h}-\varepsilon,ap_{j_h}+\varepsilon)$,
\begin{equation*}
    u_h\in(ap_{j_h}-\varepsilon+a(1-p_{j_h}),ap_{j_h}+\varepsilon+a(1-p_{j_h}))=(a-\varepsilon,a+\varepsilon).
\end{equation*}
Thus the attention residual maps every $D_{\mathbf j}$ onto the same intermediate box
\begin{equation*}
    E_N:=(a-\varepsilon,a+\varepsilon)^\rho\times(0,1)^{d-\rho-1}\times(a-\varepsilon,a+\varepsilon).
\end{equation*}

We now construct the FFN. For every active coordinate $h$, introduce the two hidden units
\begin{equation*}
    \eta_{h,+}(u):=\operatorname{ReLU}(u_h),\quad \eta_{h,-}(u):=\operatorname{ReLU}(-u_h).
\end{equation*}
The construction uses $2\rho=2\min\{H,d-1\}\leq d_{\mathrm{ff}}$ hidden units and therefore fits within the available FFN width. Assign every unused hidden unit zero input weight, bias $-1$, and zero output weight.

For $h=1,\dots,\rho$, define
\begin{equation*}
    F_h(u):=(\lambda-1)\bigl(\eta_{h,+}(u)-\eta_{h,-}(u)\bigr)+a(1-\lambda),
\end{equation*}
and set
\begin{equation*}
    F_i(u):=0,\quad i=\rho+1,\dots,d.
\end{equation*}
For every $z\in\mathbb R$,
\begin{equation*}
    \operatorname{ReLU}(z)-\operatorname{ReLU}(-z)=z.
\end{equation*}
Indeed, if $z\geq0$, then the left-hand side equals $z-0=z$, while if $z<0$, it equals $0-(-z)=z$. Hence
\begin{equation*}
    F_h(u)=(\lambda-1)u_h+a(1-\lambda).
\end{equation*}
After the FFN residual connection, for every active coordinate $h$,
\begin{equation*}
    \mathcal B_h(x)=u_h+F_h(u)=u_h+(\lambda-1)u_h+a(1-\lambda)=\lambda u_h+a(1-\lambda).
\end{equation*}
Substituting $u_h=x_h+a(1-p_{j_h})$ gives
\begin{equation*}
    \mathcal B_h(x)=\lambda\bigl(x_h+a(1-p_{j_h})\bigr)+a(1-\lambda)=\lambda x_h+\lambda a-\lambda ap_{j_h}+a-a\lambda=a+\lambda(x_h-ap_{j_h}).
\end{equation*}
For every passive coordinate,
\begin{equation*}
    \mathcal B_i(x)=x_i,\quad i=\rho+1,\dots,d.
\end{equation*}
Therefore, on $D_{\mathbf j}$, the block agrees with the affine map
\begin{equation*}
    \mathcal B_{\mathbf j}(x)=\left(a+\lambda(x_1-ap_{j_1}),\dots,a+\lambda(x_\rho-ap_{j_\rho}),x_{\rho+1},\dots,x_d\right).
\end{equation*}

For an active coordinate $h$, the inclusion $x_h\in(ap_{j_h}-\varepsilon,ap_{j_h}+\varepsilon)$ implies
\begin{equation*}
    \mathcal B_{\mathbf j,h}(x)\in(a-\lambda\varepsilon,a+\lambda\varepsilon)=\left(\frac12-\frac12,\frac12+\frac12\right)=(0,1),
\end{equation*}
because $\lambda\varepsilon=8N/(16N)=1/2$. The passive coordinates, including $x_d$, are unchanged. Consequently,
\begin{equation*}
    \mathcal B_{\mathbf j}(D_{\mathbf j})=\Omega_N.
\end{equation*}
The inverse branch is
\begin{equation*}
    \mathcal B_{\mathbf j}^{-1}(y)=\left(ap_{j_1}+\frac{y_1-a}{\lambda},\dots,ap_{j_\rho}+\frac{y_\rho-a}{\lambda},y_{\rho+1},\dots,y_d\right).
\end{equation*}
For $y_h\in(0,1)$,
\begin{equation*}
    -\frac{1}{2\lambda}<\frac{y_h-a}{\lambda}<\frac{1}{2\lambda},
\end{equation*}
and since $1/(2\lambda)=\varepsilon$, the inverse active coordinate lies in $(ap_{j_h}-\varepsilon,ap_{j_h}+\varepsilon)$. Thus $\mathcal B_{\mathbf j}^{-1}(\Omega_N)=D_{\mathbf j}$, and $\mathcal B_{\mathbf j}:D_{\mathbf j}\to\Omega_N$ is an affine bijection.

Let
\begin{equation*}
    \mathcal J:=\{1,\dots,N\}^{\rho}.
\end{equation*}
For every word $\boldsymbol{\mathbf j}=(\mathbf j^{(1)},\dots,\mathbf j^{(L)})\in\mathcal J^L$, define
\begin{equation*}
    Q_{\boldsymbol{\mathbf j}}:=\mathcal B_{\mathbf j^{(1)}}^{-1}\circ\mathcal B_{\mathbf j^{(2)}}^{-1}\circ\cdots\circ\mathcal B_{\mathbf j^{(L)}}^{-1}(\Omega_N).
\end{equation*}
Each inverse branch is an affine bijection from $\Omega_N$ onto a nonempty open full-dimensional box contained in $\Omega_N$. Hence every $Q_{\boldsymbol{\mathbf j}}$ is a nonempty open full-dimensional box. Moreover,
\begin{equation*}
    \mathcal B^{\circ t}(Q_{\boldsymbol{\mathbf j}})\subset D_{\mathbf j^{(t+1)}}\quad\text{for }t=0,\dots,L-1,\quad \mathcal B^{\circ L}(Q_{\boldsymbol{\mathbf j}})=\Omega_N,
\end{equation*}
where $D_{\mathbf j^{(1)}}$ is understood when $t=0$. Therefore, the restriction of $\mathcal B^{\circ L}$ to $Q_{\boldsymbol{\mathbf j}}$ is the composition of the prescribed affine branches.

The cells corresponding to distinct words are disjoint. Let $\boldsymbol{\mathbf j}\neq\boldsymbol{\mathbf j}'$, and let $t$ be the first position at which $\mathbf j^{(t)}\neq\mathbf j'^{(t)}$. Their prefixes of length $t-1$ coincide, so the same injective affine prefix map is applied to both cells. Their images under that prefix map lie in the disjoint boxes $D_{\mathbf j^{(t)}}$ and $D_{\mathbf j'^{(t)}}$. Hence
\begin{equation*}
    Q_{\boldsymbol{\mathbf j}}\cap Q_{\boldsymbol{\mathbf j}'}=\varnothing.
\end{equation*}
Since $|\mathcal J|=N^\rho$, the number of constructed cells is
\begin{equation*}
    |\mathcal J^L|=|\mathcal J|^L=(N^\rho)^L=N^{\rho L}.
\end{equation*}

It remains to prove that these cells belong to distinct maximal connected full-dimensional affine regions. On a one-block branch, the $h$-th active coordinate has the form
\begin{equation*}
    \mathcal B_{\mathbf j,h}(x)=\lambda x_h+\beta_{j_h},\quad \beta_j:=a-\lambda ap_j.
\end{equation*}
We claim that on a word $\boldsymbol{\mathbf j}$,
\begin{equation*}
    \mathcal B_h^{\circ L}(x)=\lambda^Lx_h+\sum_{t=1}^{L}\lambda^{L-t}\beta_{j_h^{(t)}}.
\end{equation*}
The formula is immediate for $L=1$. If it holds at depth $\ell$, then
\begin{equation*}
    \lambda\left(\lambda^\ell x_h+\sum_{t=1}^{\ell}\lambda^{\ell-t}\beta_{j_h^{(t)}}\right)+\beta_{j_h^{(\ell+1)}}=\lambda^{\ell+1}x_h+\sum_{t=1}^{\ell+1}\lambda^{\ell+1-t}\beta_{j_h^{(t)}},
\end{equation*}
which proves the claim by induction.

Consider two distinct words. There exists an active coordinate $h$ for which the sequences $(j_h^{(1)},\dots,j_h^{(L)})$ and $(j_h'^{(1)},\dots,j_h'^{(L)})$ differ. Let $t_0$ be their first differing position and define $\Delta_t:=j_h^{(t)}-j_h'^{(t)}$. Since $p_j-p_{j'}=(j-j')/N$,
\begin{equation*}
    \beta_j-\beta_{j'}=-\lambda a(p_j-p_{j'})=-\frac{\lambda a}{N}(j-j').
\end{equation*}
The difference between the two affine intercepts in coordinate $h$ is therefore
\begin{equation*}
    -\frac{\lambda a}{N}\sum_{t=1}^{L}\lambda^{L-t}\Delta_t=-\frac{\lambda a}{N}\left(\lambda^{L-t_0}\Delta_{t_0}+\sum_{t=t_0+1}^{L}\lambda^{L-t}\Delta_t\right).
\end{equation*}
Because $\Delta_{t_0}$ is a nonzero integer,
\begin{equation*}
    \left|\lambda^{L-t_0}\Delta_{t_0}\right|\geq\lambda^{L-t_0}.
\end{equation*}
For every later index, $|\Delta_t|\leq N-1$, and hence
\begin{equation*}
    \left|\sum_{t=t_0+1}^{L}\lambda^{L-t}\Delta_t\right|\leq(N-1)\sum_{t=t_0+1}^{L}\lambda^{L-t}=(N-1)\sum_{r=0}^{L-t_0-1}\lambda^r=(N-1)\frac{\lambda^{L-t_0}-1}{\lambda-1}.
\end{equation*}
Since $\lambda=8N$,
\begin{equation*}
    \frac{N-1}{\lambda-1}=\frac{N-1}{8N-1}<1,
\end{equation*}
and therefore
\begin{equation*}
    (N-1)\frac{\lambda^{L-t_0}-1}{\lambda-1}<(N-1)\frac{\lambda^{L-t_0}}{\lambda-1}<\lambda^{L-t_0}.
\end{equation*}
The later terms cannot cancel the leading nonzero term. Thus the two affine intercepts differ, and the restrictions of $\mathcal B^{\circ L}$ to the two cells are distinct affine maps.

Suppose, for contradiction, that two distinct cells $Q_{\boldsymbol{\mathbf j}}$ and $Q_{\boldsymbol{\mathbf j}'}$ belonged to the same connected full-dimensional affine region. Then there would exist one affine map agreeing with $\mathcal B^{\circ L}$ on nonempty open subsets of both cells. Two affine maps that agree on a nonempty open subset of $\mathbb R^d$ have identical linear parts and biases, so the two branch restrictions would be identical, contradicting the preceding calculation. Therefore, the $N^{\rho L}$ cells belong to pairwise distinct maximal connected full-dimensional affine regions.

Consequently,
\begin{equation*}
    \mathcal N\!\left(\mathcal T(L,H,d,d_{\mathrm{ff}})\right)\geq N^{\rho L}.
\end{equation*}
Since $\rho=\min\{H,d-1\}$,
\begin{equation*}
    \mathcal N\!\left(\mathcal T(L,H,d,d_{\mathrm{ff}})\right)=\begin{cases}\Omega(N^{HL}), & H<d,\\ \Omega(N^{(d-1)L}), & H\geq d.\end{cases}
\end{equation*}

Finally, we verify Assumption~\ref{ass:cellwise_noncollapse}. Let $\Phi_\ell:=\mathcal B^{\circ\ell}$, with $\Phi_0=\operatorname{id}$. Every one-block affine restriction has slope $\lambda$ in each active coordinate and slope $1$ in each passive coordinate. By induction, on every full-dimensional affine cell of $\Phi_\ell$,
\begin{equation*}
    \Phi_{\ell,h}(x)=\lambda^\ell x_h+c_{\ell,h},\quad h=1,\dots,\rho,
\end{equation*}
for suitable constants $c_{\ell,h}$.

For distinct keys $i\neq j$ in active head $h$, the routing-tie functional is
\begin{equation*}
    g_{h,ij}(z)=(p_i-p_j)z_h-\frac12(p_i^2-p_j^2)z_d.
\end{equation*}
Because the last coordinate is passive, $\Phi_{\ell,d}(x)=x_d$. Hence its pullback to a prefix cell is
\begin{equation*}
    g_{h,ij}(\Phi_\ell(x))=(p_i-p_j)\lambda^\ell x_h-\frac12(p_i^2-p_j^2)x_d+(p_i-p_j)c_{\ell,h}.
\end{equation*}
Since
\begin{equation*}
    p_i-p_j=\frac{i-j}{N}\neq0,
\end{equation*}
the coefficient $(p_i-p_j)\lambda^\ell$ of $x_h$ is nonzero. Thus no nonzero routing-tie functional vanishes identically on a full-dimensional prefix cell.

For the FFN in block $\ell+1$, restrict a prefix cell further to a full-dimensional routing cell on which active head $h$ selects key $j_h$. The attention residual coordinate is
\begin{equation*}
    u_h(x)=\Phi_{\ell,h}(x)+a(1-p_{j_h})=\lambda^\ell x_h+c_{\ell,h}+a(1-p_{j_h}).
\end{equation*}
The two active ReLU preactivations are $u_h(x)$ and $-u_h(x)$, whose coefficients of $x_h$ are $\lambda^\ell$ and $-\lambda^\ell$, respectively. Both coefficients are nonzero, so neither preactivation vanishes identically on a full-dimensional cell.

Inactive heads have zero query, key, value, and output projections, so their pairwise score differences are the zero functional and are excluded by Assumption~\ref{ass:cellwise_noncollapse}. Every unused ReLU unit has constant preactivation $-1$, whose pullback is also $-1$. Hence the constructed network satisfies Assumption~\ref{ass:cellwise_noncollapse}.

Substituting $d=d_{\mathrm{model}}$ completes the proof.
\end{proof}

\section{Proof of Theorem~\ref{thm:finite_temperature_stability}}
\label{app:proof_theorem_finite_temperature_stability}

We regard the query, key, and value vectors as column vectors. For a differentiable map $F:\mathbb R^{d_k}\to\mathbb R^{d_v}$, its Jacobian $JF(q)\in\mathbb R^{d_v\times d_k}$ is defined by
\begin{equation*}
    F(q+h)=F(q)+JF(q)h+o(\|h\|_2).
\end{equation*}

\begin{lemma}[Spectral Bound for a Weighted Covariance]
\label{lem:weighted_covariance_bound}
Let $p_1,\dots,p_N\geq0$ satisfy $\sum_{j=1}^{N}p_j=1$, and let $\xi_1,\dots,\xi_N\in\mathbb R^d$. Fix $i\in\{1,\dots,N\}$ and assume that $\xi_i=0$ and $\|\xi_j\|_2\leq K$ for every $j\neq i$. Define
\begin{equation*}
    m:=\sum_{j=1}^{N}p_j\xi_j,\quad C:=\sum_{j=1}^{N}p_j\xi_j\xi_j^\top-mm^\top,\quad \ell:=\sum_{j\neq i}p_j.
\end{equation*}
Then $C$ is positive semidefinite and
\begin{equation*}
    \|C\|_2\leq K^2\ell.
\end{equation*}
\end{lemma}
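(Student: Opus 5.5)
The plan is to bound the quadratic form of $C$ directly. I first show that $C$ is a covariance and hence positive semidefinite, and then bound $u^\top C u$ uniformly over unit vectors $u$.

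\textbf{Positive semidefiniteness.} Expanding and using $\sum_j p_j=1$ gives the identity
\begin{equation*}
    C=\sum_{j=1}^{N}p_j(\xi_j-m)(\xi_j-m)^\top .
\end{equation*}
Indeed, the cross terms produce $-2mm^\top+mm^\top=-mm^\top$. This is a nonnegative combination of rank-one PSD matrices, so $C\succeq0$. Consequently $\|C\|_2=\max_{\|u\|_2=1}u^\top Cu$.

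\textbf{Spectral bound.} Fix a unit vector $u$ and set $y_j:=\langle u,\xi_j\rangle$. Then
\begin{equation*}
    u^\top Cu=\sum_j p_jy_j^2-\Bigl(\sum_j p_jy_j\Bigr)^2\leq\sum_j p_jy_j^2 .
\end{equation*}
This step drops the nonnegative squared mean. Since $\xi_i=0$ we have $y_i=0$. For $j\neq i$, Cauchy--Schwarz gives $y_j^2\leq\|\xi_j\|_2^2\leq K^2$. Hence
\begin{equation*}
    u^\top Cu\leq K^2\sum_{j\neq i}p_j=K^2\ell .
\end{equation*}
Taking the supremum over unit vectors $u$ yields $\|C\|_2\leq K^2\ell$.

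\textbf{Main obstacle.} There is essentially none. The only point requiring care is to use the centered-covariance form, so that the variance can be bounded by the uncentered second moment around the anchor $\xi_i=0$. This is what makes the bound depend on the off-winner mass $\ell$ rather than on $1$.

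This lemma will later be applied with $\xi_j=k_j-k_i$, where $C$ is $\tau$ times the LSE Hessian, and $\ell\leq(N-1)e^{-\delta/\tau}$ on $C_i^\delta$.
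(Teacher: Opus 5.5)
Your proof is correct and follows essentially the same route as the paper. You get positive semidefiniteness from the centered-covariance (variance) identity, then drop the $mm^\top$ term and bound the uncentered second moment by $K^2\ell$ using $\xi_i=0$. The only cosmetic difference is that you bound the quadratic form directly via Cauchy--Schwarz, whereas the paper passes through $0\preceq C\preceq S:=\sum_j p_j\xi_j\xi_j^\top$ and the spectral-norm triangle inequality $\|S\|_2\le\sum_j p_j\|\xi_j\|_2^2$.
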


\begin{proof}
Fix $x\in\mathbb R^d$ and define $a_j:=x^\top\xi_j$ and $\bar a:=\sum_{j=1}^{N}p_ja_j=x^\top m$. Then
\begin{equation*}
    x^\top Cx=\sum_{j=1}^{N}p_ja_j^2-\bar a^2.
\end{equation*}
Since $\sum_{j=1}^{N}p_j=1$ and $\sum_{j=1}^{N}p_ja_j=\bar a$,
\begin{equation*}
    \sum_{j=1}^{N}p_j(a_j-\bar a)^2=\sum_{j=1}^{N}p_ja_j^2-2\bar a\sum_{j=1}^{N}p_ja_j+\bar a^2\sum_{j=1}^{N}p_j=\sum_{j=1}^{N}p_ja_j^2-\bar a^2.
\end{equation*}
Therefore,
\begin{equation*}
    x^\top Cx=\sum_{j=1}^{N}p_j(a_j-\bar a)^2\geq0.
\end{equation*}
Thus \(C\succeq0\).

Define
\begin{equation*}
    S:=\sum_{j=1}^{N}p_j\xi_j\xi_j^\top.
\end{equation*}
Since
\begin{equation*}
    S-C=mm^\top\succeq0,
\end{equation*}
we have
\begin{equation*}
    0\preceq C\preceq S.
\end{equation*}
Both matrices are symmetric and positive semidefinite, so
\begin{equation*}
    \|C\|_2=\lambda_{\max}(C)\leq\lambda_{\max}(S)=\|S\|_2.
\end{equation*}
By the triangle inequality for the spectral norm,
\begin{equation*}
    \|S\|_2\leq\sum_{j=1}^{N}p_j\|\xi_j\xi_j^\top\|_2.
\end{equation*}
For any vector \(\xi\),
\begin{equation*}
    \|\xi\xi^\top\|_2=\sup_{\|x\|_2=1}\|\xi\xi^\top x\|_2=\|\xi\|_2\sup_{\|x\|_2=1}|\xi^\top x|=\|\xi\|_2^2.
\end{equation*}
Consequently,
\begin{equation*}
    \|C\|_2\leq\sum_{j=1}^{N}p_j\|\xi_j\|_2^2=\sum_{j\neq i}p_j\|\xi_j\|_2^2\leq K^2\sum_{j\neq i}p_j=K^2\ell.
\end{equation*}
\end{proof}

\begin{proof}[Proof of Theorem~\ref{thm:finite_temperature_stability}]
For compactness, define
\begin{equation*}
    s_j(q):=\langle q,k_j\rangle,\quad Z(q):=\sum_{r=1}^{N}e^{s_r(q)/\tau}.
\end{equation*}
Then
\begin{equation*}
    p_j^{(\tau)}(q)=\frac{e^{s_j(q)/\tau}}{Z(q)},\quad P^{(\tau)}(q)=\tau\log Z(q).
\end{equation*}

We first prove the exact temperature-invariance of top-$1$ routing. For any indices $j,r\in\{1,\dots,N\}$,
\begin{equation*}
    p_j^{(\tau)}(q)\geq p_r^{(\tau)}(q)\iff e^{s_j(q)/\tau}\geq e^{s_r(q)/\tau}\iff s_j(q)\geq s_r(q),
\end{equation*}
because the two softmax probabilities have the same strictly positive denominator, the exponential function is strictly increasing, and \(\tau>0\). Therefore,
\begin{equation*}
    \operatorname*{arg\,max}_{1\leq j\leq N}p_j^{(\tau)}(q)=\operatorname*{arg\,max}_{1\leq j\leq N}s_j(q)=\operatorname*{arg\,max}_{1\leq j\leq N}\langle q,k_j\rangle.
\end{equation*}
Similarly,
\begin{equation*}
    p_j^{(\tau)}(q)=p_r^{(\tau)}(q)\iff s_j(q)=s_r(q)\iff\langle q,k_j-k_r\rangle=0.
\end{equation*}
Hence the pairwise tie sets are identical at finite and zero temperature. When \(k_j\neq k_r\), the common tie set is the hyperplane
\begin{equation*}
    \left\{q\in\mathbb R^{d_k}:\langle q,k_j-k_r\rangle=0\right\}.
\end{equation*}

We next derive an auxiliary concentration bound. Fix \(i\in\{1,\dots,N\}\), \(\delta>0\), and \(q\in C_i^\delta\). By the definition of \(C_i^\delta\),
\begin{equation*}
    s_i(q)-s_j(q)=\langle q,k_i-k_j\rangle\geq\delta
\end{equation*}
for every \(j\neq i\). Equivalently,
\begin{equation*}
    s_j(q)-s_i(q)\leq-\delta,\quad j\neq i.
\end{equation*}
Define
\begin{equation*}
    R_i(q):=\sum_{j\neq i}e^{(s_j(q)-s_i(q))/\tau},\quad \alpha_{\delta,\tau}:=(N-1)e^{-\delta/\tau}.
\end{equation*}
For every \(j\neq i\),
\begin{equation*}
    e^{(s_j(q)-s_i(q))/\tau}\leq e^{-\delta/\tau}.
\end{equation*}
Summing these \(N-1\) inequalities gives
\begin{equation*}
    0\leq R_i(q)=\sum_{j\neq i}e^{(s_j(q)-s_i(q))/\tau}\leq(N-1)e^{-\delta/\tau}=\alpha_{\delta,\tau}.
\end{equation*}

Factoring \(e^{s_i(q)/\tau}\) from the softmax denominator yields
\begin{equation*}
    Z(q)=e^{s_i(q)/\tau}\left(1+\sum_{j\neq i}e^{(s_j(q)-s_i(q))/\tau}\right)=e^{s_i(q)/\tau}\bigl(1+R_i(q)\bigr).
\end{equation*}
Therefore,
\begin{equation*}
    p_i^{(\tau)}(q)=\frac{1}{1+R_i(q)}
\end{equation*}
and, for \(j\neq i\),
\begin{equation*}
    p_j^{(\tau)}(q)=\frac{e^{(s_j(q)-s_i(q))/\tau}}{1+R_i(q)}.
\end{equation*}
The total probability assigned to the nonwinning indices is consequently
\begin{equation*}
    \ell_i(q):=\sum_{j\neq i}p_j^{(\tau)}(q)=\frac{\sum_{j\neq i}e^{(s_j(q)-s_i(q))/\tau}}{1+R_i(q)}=\frac{R_i(q)}{1+R_i(q)}.
\end{equation*}
Since \(0\leq R_i(q)\leq\alpha_{\delta,\tau}\), we have
\begin{equation*}
    \frac{R_i(q)}{1+R_i(q)}\leq\frac{\alpha_{\delta,\tau}}{1+\alpha_{\delta,\tau}},
\end{equation*}
because multiplication by the positive denominators reduces this inequality to
\begin{equation*}
    R_i(q)\bigl(1+\alpha_{\delta,\tau}\bigr)\leq\alpha_{\delta,\tau}\bigl(1+R_i(q)\bigr)\iff R_i(q)\leq\alpha_{\delta,\tau}.
\end{equation*}
Thus,
\begin{equation}
\label{eq:off_mass_bound}
    \ell_i(q)=1-p_i^{(\tau)}(q)\leq\frac{\alpha_{\delta,\tau}}{1+\alpha_{\delta,\tau}}\leq\alpha_{\delta,\tau}=(N-1)e^{-\delta/\tau}.
\end{equation}

We now prove the potential-value approximation. Using the preceding factorization of \(Z(q)\),
\begin{equation*}
    P^{(\tau)}(q)=\tau\log\left(e^{s_i(q)/\tau}\bigl(1+R_i(q)\bigr)\right)=s_i(q)+\tau\log\bigl(1+R_i(q)\bigr).
\end{equation*}
Since \(q\in C_i^\delta\) and \(\delta>0\), index \(i\) is the unique maximizer, so
\begin{equation*}
    P^{(0)}(q)=\max_{1\leq j\leq N}s_j(q)=s_i(q).
\end{equation*}
It follows that
\begin{equation*}
    P^{(\tau)}(q)-P^{(0)}(q)=\tau\log\bigl(1+R_i(q)\bigr).
\end{equation*}
Because \(0\leq R_i(q)\leq\alpha_{\delta,\tau}\) and the logarithm is increasing,
\begin{equation*}
    0\leq P^{(\tau)}(q)-P^{(0)}(q)\leq\tau\log\bigl(1+\alpha_{\delta,\tau}\bigr)=\tau\log\left(1+(N-1)e^{-\delta/\tau}\right).
\end{equation*}

We next derive the gradient and Hessian. Since \(s_j(q)=q^\top k_j\),
\begin{equation*}
    \nabla s_j(q)=k_j.
\end{equation*}
Differentiating \(Z(q)\) gives
\begin{equation*}
    \nabla Z(q)=\sum_{j=1}^{N}\nabla e^{s_j(q)/\tau}=\sum_{j=1}^{N}e^{s_j(q)/\tau}\frac{k_j}{\tau}=\frac{1}{\tau}\sum_{j=1}^{N}e^{s_j(q)/\tau}k_j.
\end{equation*}
Therefore,
\begin{equation*}
    \nabla P^{(\tau)}(q)=\tau\frac{\nabla Z(q)}{Z(q)}=\frac{\sum_{j=1}^{N}e^{s_j(q)/\tau}k_j}{Z(q)}=\sum_{j=1}^{N}p_j^{(\tau)}(q)k_j.
\end{equation*}
Define
\begin{equation*}
    \mu(q):=\nabla P^{(\tau)}(q)=\sum_{j=1}^{N}p_j^{(\tau)}(q)k_j.
\end{equation*}
Since \(\sum_{j=1}^{N}p_j^{(\tau)}(q)=1\),
\begin{equation*}
    \mu(q)-k_i=\sum_{j=1}^{N}p_j^{(\tau)}(q)(k_j-k_i)=\sum_{j\neq i}p_j^{(\tau)}(q)(k_j-k_i).
\end{equation*}
By the triangle inequality and the definition of \(K_i\),
\begin{equation*}
    \|\mu(q)-k_i\|_2\leq\sum_{j\neq i}p_j^{(\tau)}(q)\|k_j-k_i\|_2\leq K_i\sum_{j\neq i}p_j^{(\tau)}(q)=K_i\ell_i(q).
\end{equation*}
Applying \eqref{eq:off_mass_bound} gives
\begin{equation*}
    \left\|\nabla P^{(\tau)}(q)-k_i\right\|_2\leq K_i(N-1)e^{-\delta/\tau}.
\end{equation*}

To calculate the Hessian, we first differentiate each softmax probability. By the quotient rule,
\begin{equation*}
    \nabla p_j^{(\tau)}(q)=\frac{\nabla e^{s_j(q)/\tau}\,Z(q)-e^{s_j(q)/\tau}\nabla Z(q)}{Z(q)^2}.
\end{equation*}
Substituting
\begin{equation*}
    \nabla e^{s_j(q)/\tau}=\frac{1}{\tau}e^{s_j(q)/\tau}k_j
\end{equation*}
and
\begin{equation*}
    \nabla Z(q)=\frac{Z(q)}{\tau}\mu(q)
\end{equation*}
yields
\begin{equation*}
    \nabla p_j^{(\tau)}(q)=\frac{e^{s_j(q)/\tau}Z(q)k_j/\tau-e^{s_j(q)/\tau}Z(q)\mu(q)/\tau}{Z(q)^2}.
\end{equation*}
Cancelling one factor of \(Z(q)\) gives
\begin{equation}
\label{eq:softmax_query_derivative}
    \nabla p_j^{(\tau)}(q)=\frac{e^{s_j(q)/\tau}}{\tau Z(q)}\bigl(k_j-\mu(q)\bigr)=\frac{p_j^{(\tau)}(q)}{\tau}\bigl(k_j-\mu(q)\bigr).
\end{equation}

Since \(\mu(q)=\sum_{j=1}^{N}p_j^{(\tau)}(q)k_j\), its Jacobian is
\begin{equation*}
    \nabla^2P^{(\tau)}(q)=\nabla\mu(q)=\sum_{j=1}^{N}k_j\bigl(\nabla p_j^{(\tau)}(q)\bigr)^\top.
\end{equation*}
Using Eq.~\eqref{eq:softmax_query_derivative},
\begin{equation*}
    \nabla^2P^{(\tau)}(q)=\frac{1}{\tau}\sum_{j=1}^{N}p_j^{(\tau)}(q)k_j\bigl(k_j-\mu(q)\bigr)^\top.
\end{equation*}
Expanding the right-hand side gives
\begin{equation*}
    \nabla^2P^{(\tau)}(q)=\frac{1}{\tau}\left(\sum_{j=1}^{N}p_j^{(\tau)}(q)k_jk_j^\top-\sum_{j=1}^{N}p_j^{(\tau)}(q)k_j\mu(q)^\top\right).
\end{equation*}
Because \(\sum_{j=1}^{N}p_j^{(\tau)}(q)k_j=\mu(q)\),
\begin{equation}
\label{eq:lse_query_hessian}
    \nabla^2P^{(\tau)}(q)=\frac{1}{\tau}\left(\sum_{j=1}^{N}p_j^{(\tau)}(q)k_jk_j^\top-\mu(q)\mu(q)^\top\right).
\end{equation}

Define
\begin{equation*}
    \xi_j:=k_j-k_i,\quad m:=\sum_{j=1}^{N}p_j^{(\tau)}(q)\xi_j=\mu(q)-k_i.
\end{equation*}
Then \(\xi_i=0\) and \(\|\xi_j\|_2\leq K_i\) for every \(j\neq i\). Moreover,
\begin{equation*}
    k_j=k_i+\xi_j,\quad \mu(q)=k_i+m.
\end{equation*}
Expanding the matrix in Eq.~\eqref{eq:lse_query_hessian} gives
\begin{equation*}
    \sum_{j=1}^{N}p_j^{(\tau)}(q)k_jk_j^\top=\sum_{j=1}^{N}p_j^{(\tau)}(q)(k_i+\xi_j)(k_i+\xi_j)^\top.
\end{equation*}
Using \(\sum_jp_j^{(\tau)}(q)=1\) and \(\sum_jp_j^{(\tau)}(q)\xi_j=m\),
\begin{equation*}
    \sum_{j=1}^{N}p_j^{(\tau)}(q)k_jk_j^\top=k_ik_i^\top+k_im^\top+mk_i^\top+\sum_{j=1}^{N}p_j^{(\tau)}(q)\xi_j\xi_j^\top.
\end{equation*}
Similarly,
\begin{equation*}
    \mu(q)\mu(q)^\top=(k_i+m)(k_i+m)^\top=k_ik_i^\top+k_im^\top+mk_i^\top+mm^\top.
\end{equation*}
Subtracting these two identities yields
\begin{equation*}
    \sum_{j=1}^{N}p_j^{(\tau)}(q)k_jk_j^\top-\mu(q)\mu(q)^\top=\sum_{j=1}^{N}p_j^{(\tau)}(q)\xi_j\xi_j^\top-mm^\top.
\end{equation*}
Lemma~\ref{lem:weighted_covariance_bound}, applied with \(p_j=p_j^{(\tau)}(q)\), \(K=K_i\), and \(\ell=\ell_i(q)\), therefore gives
\begin{equation*}
    \left\|\sum_{j=1}^{N}p_j^{(\tau)}(q)k_jk_j^\top-\mu(q)\mu(q)^\top\right\|_2\leq K_i^2\ell_i(q).
\end{equation*}
Combining this inequality with Eq.~\eqref{eq:lse_query_hessian} and \eqref{eq:off_mass_bound} gives
\begin{equation*}
    \left\|\nabla^2P^{(\tau)}(q)\right\|_2\leq\frac{K_i^2}{\tau}\ell_i(q)\leq\frac{K_i^2}{\tau}(N-1)e^{-\delta/\tau}.
\end{equation*}

We now prove the local affine approximation bounds. The set \(C_i^\delta\) is convex because it is the intersection of the closed halfspaces
\begin{equation*}
    \left\{q\in\mathbb R^{d_k}:\langle q,k_i-k_j\rangle\geq\delta\right\},\quad j\neq i.
\end{equation*}
Indeed, if \(q,q'\in C_i^\delta\) and \(t\in[0,1]\), then
\begin{equation*}
    \left\langle(1-t)q+tq',k_i-k_j\right\rangle=(1-t)\langle q,k_i-k_j\rangle+t\langle q',k_i-k_j\rangle\geq(1-t)\delta+t\delta=\delta.
\end{equation*}
Thus the line segment
\begin{equation*}
    \gamma(t):=q+t(q'-q),\quad t\in[0,1],
\end{equation*}
lies entirely in \(C_i^\delta\).

Let \(h:=q'-q\). By the fundamental theorem of calculus applied componentwise to \(\nabla P^{(\tau)}\),
\begin{equation*}
    \nabla P^{(\tau)}(q')-\nabla P^{(\tau)}(q)=\int_0^1\nabla^2P^{(\tau)}(\gamma(t))h\,dt.
\end{equation*}
Taking norms and using the Hessian bound at every \(\gamma(t)\in C_i^\delta\),
\begin{equation*}
    \left\|\nabla P^{(\tau)}(q')-\nabla P^{(\tau)}(q)\right\|_2\leq\int_0^1\left\|\nabla^2P^{(\tau)}(\gamma(t))\right\|_2\|h\|_2\,dt.
\end{equation*}
Therefore,
\begin{equation*}
    \left\|\nabla P^{(\tau)}(q')-\nabla P^{(\tau)}(q)\right\|_2\leq\int_0^1\frac{K_i^2}{\tau}(N-1)e^{-\delta/\tau}\|h\|_2\,dt=\frac{K_i^2}{\tau}(N-1)e^{-\delta/\tau}\|q'-q\|_2.
\end{equation*}

For the first-order Taylor remainder, the fundamental theorem of calculus gives
\begin{equation*}
    P^{(\tau)}(q')-P^{(\tau)}(q)=\int_0^1\left\langle\nabla P^{(\tau)}(q+th),h\right\rangle dt.
\end{equation*}
Subtracting \(\langle\nabla P^{(\tau)}(q),h\rangle\) gives
\begin{equation*}
    P^{(\tau)}(q')-P^{(\tau)}(q)-\left\langle\nabla P^{(\tau)}(q),h\right\rangle=\int_0^1\left\langle\nabla P^{(\tau)}(q+th)-\nabla P^{(\tau)}(q),h\right\rangle dt.
\end{equation*}
By the Cauchy--Schwarz inequality,
\begin{equation*}
    \left|P^{(\tau)}(q')-P^{(\tau)}(q)-\left\langle\nabla P^{(\tau)}(q),h\right\rangle\right|\leq\int_0^1\left\|\nabla P^{(\tau)}(q+th)-\nabla P^{(\tau)}(q)\right\|_2\|h\|_2\,dt.
\end{equation*}
Applying the gradient Lipschitz bound to the two points \(q\) and \(q+th\), whose distance is \(t\|h\|_2\), yields
\begin{equation*}
    \left\|\nabla P^{(\tau)}(q+th)-\nabla P^{(\tau)}(q)\right\|_2\leq\frac{K_i^2}{\tau}(N-1)e^{-\delta/\tau}t\|h\|_2.
\end{equation*}
Hence,
\begin{equation*}
    \left|P^{(\tau)}(q')-P^{(\tau)}(q)-\left\langle\nabla P^{(\tau)}(q),h\right\rangle\right|\leq\frac{K_i^2}{\tau}(N-1)e^{-\delta/\tau}\|h\|_2^2\int_0^1t\,dt.
\end{equation*}
Since \(\int_0^1t\,dt=1/2\),
\begin{equation*}
    \left|P^{(\tau)}(q')-P^{(\tau)}(q)-\left\langle\nabla P^{(\tau)}(q),q'-q\right\rangle\right|\leq\frac{K_i^2}{2\tau}(N-1)e^{-\delta/\tau}\|q'-q\|_2^2.
\end{equation*}

We next prove the attention-output approximation. Since \(\sum_{j=1}^{N}p_j^{(\tau)}(q)=1\),
\begin{equation*}
    A^{(\tau)}(q)-v_i=\sum_{j=1}^{N}p_j^{(\tau)}(q)(v_j-v_i)=\sum_{j\neq i}p_j^{(\tau)}(q)(v_j-v_i).
\end{equation*}
Using the triangle inequality and the definition of \(D_i\),
\begin{equation*}
    \left\|A^{(\tau)}(q)-v_i\right\|_2\leq\sum_{j\neq i}p_j^{(\tau)}(q)\|v_j-v_i\|_2\leq D_i\sum_{j\neq i}p_j^{(\tau)}(q)=D_i\ell_i(q).
\end{equation*}
Applying \eqref{eq:off_mass_bound} gives
\begin{equation*}
    \left\|A^{(\tau)}(q)-v_i\right\|_2\leq D_i(N-1)e^{-\delta/\tau}.
\end{equation*}

We now calculate the Jacobian of \(A^{(\tau)}\). Since
\begin{equation*}
    A^{(\tau)}(q)=\sum_{j=1}^{N}p_j^{(\tau)}(q)v_j,
\end{equation*}
differentiation gives
\begin{equation*}
    JA^{(\tau)}(q)=\sum_{j=1}^{N}v_j\bigl(\nabla p_j^{(\tau)}(q)\bigr)^\top.
\end{equation*}
Because \(\sum_{j=1}^{N}p_j^{(\tau)}(q)=1\),
\begin{equation*}
    \sum_{j=1}^{N}\nabla p_j^{(\tau)}(q)=\nabla\left(\sum_{j=1}^{N}p_j^{(\tau)}(q)\right)=\nabla1=0.
\end{equation*}
Therefore,
\begin{equation*}
    JA^{(\tau)}(q)=\sum_{j=1}^{N}(v_j-v_i)\bigl(\nabla p_j^{(\tau)}(q)\bigr)^\top.
\end{equation*}
Substituting \eqref{eq:softmax_query_derivative} gives
\begin{equation*}
    JA^{(\tau)}(q)=\frac{1}{\tau}\sum_{j=1}^{N}p_j^{(\tau)}(q)(v_j-v_i)\bigl(k_j-\mu(q)\bigr)^\top.
\end{equation*}
The term with \(j=i\) is zero because \(v_i-v_i=0\), so
\begin{equation*}
    JA^{(\tau)}(q)=\frac{1}{\tau}\sum_{j\neq i}p_j^{(\tau)}(q)(v_j-v_i)\bigl(k_j-\mu(q)\bigr)^\top.
\end{equation*}

For every \(j\neq i\),
\begin{equation*}
    k_j-\mu(q)=(k_j-k_i)-(\mu(q)-k_i).
\end{equation*}
Hence,
\begin{equation*}
    \|k_j-\mu(q)\|_2\leq\|k_j-k_i\|_2+\|\mu(q)-k_i\|_2\leq K_i+K_i\ell_i(q)=K_i\bigl(1+\ell_i(q)\bigr).
\end{equation*}
Since \(0\leq\ell_i(q)\leq1\),
\begin{equation*}
    \|k_j-\mu(q)\|_2\leq2K_i.
\end{equation*}

For vectors \(a\) and \(b\), the spectral norm of the rank-one matrix \(ab^\top\) is
\begin{equation*}
    \|ab^\top\|_2=\sup_{\|x\|_2=1}\|ab^\top x\|_2=\|a\|_2\sup_{\|x\|_2=1}|b^\top x|=\|a\|_2\|b\|_2.
\end{equation*}
Therefore,
\begin{equation*}
    \left\|JA^{(\tau)}(q)\right\|_2\leq\frac{1}{\tau}\sum_{j\neq i}p_j^{(\tau)}(q)\|v_j-v_i\|_2\|k_j-\mu(q)\|_2.
\end{equation*}
Using \(\|v_j-v_i\|_2\leq D_i\) and \(\|k_j-\mu(q)\|_2\leq K_i(1+\ell_i(q))\),
\begin{equation*}
    \left\|JA^{(\tau)}(q)\right\|_2\leq\frac{D_iK_i}{\tau}\bigl(1+\ell_i(q)\bigr)\sum_{j\neq i}p_j^{(\tau)}(q)=\frac{D_iK_i}{\tau}\ell_i(q)\bigl(1+\ell_i(q)\bigr).
\end{equation*}
Since \(0\leq\ell_i(q)\leq1\),
\begin{equation*}
    \ell_i(q)\bigl(1+\ell_i(q)\bigr)\leq2\ell_i(q).
\end{equation*}
Combining this inequality with Eq.~\eqref{eq:off_mass_bound} gives
\begin{equation*}
    \left\|JA^{(\tau)}(q)\right\|_2\leq\frac{2D_iK_i}{\tau}\ell_i(q)\leq\frac{2D_iK_i}{\tau}(N-1)e^{-\delta/\tau}.
\end{equation*}

Finally, let \(q,q'\in C_i^\delta\), set \(h:=q'-q\), and define \(\gamma(t):=q+th\). Since \(C_i^\delta\) is convex, \(\gamma(t)\in C_i^\delta\) for every \(t\in[0,1]\). By the fundamental theorem of calculus applied componentwise to \(A^{(\tau)}\),
\begin{equation*}
    A^{(\tau)}(q')-A^{(\tau)}(q)=\int_0^1JA^{(\tau)}(\gamma(t))h\,dt.
\end{equation*}
Taking norms gives
\begin{equation*}
    \left\|A^{(\tau)}(q')-A^{(\tau)}(q)\right\|_2\leq\int_0^1\left\|JA^{(\tau)}(\gamma(t))\right\|_2\|h\|_2\,dt.
\end{equation*}
Applying the Jacobian bound at every point \(\gamma(t)\in C_i^\delta\) yields
\begin{equation*}
    \left\|A^{(\tau)}(q')-A^{(\tau)}(q)\right\|_2\leq\int_0^1\frac{2D_iK_i}{\tau}(N-1)e^{-\delta/\tau}\|h\|_2\,dt.
\end{equation*}
Evaluating the integral gives
\begin{equation*}
    \left\|A^{(\tau)}(q')-A^{(\tau)}(q)\right\|_2\leq\frac{2D_iK_i}{\tau}(N-1)e^{-\delta/\tau}\|q'-q\|_2.
\end{equation*}
This proves all assertions.
\end{proof}
}

\end{document}